\newcolumntype{x}[1]{>{\centering\let\newline\\\arraybackslash\hspace{0pt}}p{#1}}
\begin{document}

\title{Free-Boundary Quasiconformal Maps via a Least-squares Operator in Diffeomorphism Optimization
}

\titlerunning{Optimization for free boundary diffeomorphism}        
\author{Zhehao Xu         \and
        Lok Ming Lui$^{*}$\thanks{Corresponding author: lmlui@math.cuhk.edu.hk} 
}


\institute{Zhehao Xu \at
              Department of Mathematics, The Chinese University of Hong Kong, Hong Kong, China \\
              \email{zhxu@math.cuhk.edu.hk}           
           \and
           Lok Ming Lui \at
              Department of Mathematics, The Chinese University of Hong Kong, Hong Kong, China \\
              \email{lmlui@math.cuhk.edu.hk}
}
\date{Received: date / Accepted: date}

\maketitle
\begin{abstract}

Free-boundary diffeomorphism optimization, an important and widely occurring task in geometric modeling, computer graphics, and biological imaging, requires simultaneously determining a planar target domain and a locally bijective map with well-controlled distortion. We formulate this task through the least-squares quasiconformal (LSQC) operator and establish key structural properties of the LSQC minimizer, including well-posedness under mild conditions, invariance under similarity transformations, and resolution-independent behavior with stability under mesh refinement. We further analyze the sensitivity of the LSQC solution with respect to the Beltrami coefficient, establishing stability and differentiability properties that enable gradient-based optimization over the space of Beltrami coefficients. To make this differentiable formulation practical at scale and to facilitate the optimization process, we introduce the Spectral Beltrami Network (SBN), a multiscale mesh–spectral surrogate that approximates the LSQC solution operator in a single differentiable forward pass. This yields SBN-Opt, an optimization framework that searches over admissible Beltrami coefficients and pinning conditions to solve free-boundary diffeomorphism objectives with explicit distortion control. Extensive experiments on equiareal parameterization and inconsistent surface registration demonstrate consistent improvements over traditional numerical algorithms.

\keywords{Quasiconformal geometry \and Free-boundary diffeomorphism problem \and Surface registration \and Deep Learning}

\subclass{30C62 \and 65D18 \and 65K10 \and 68T07}
\end{abstract}
\section{Introduction}

Diffeomorphic mappings are indispensable in fields such as geometry processing, computer graphics, and medical imaging, where one must deform surfaces in a topology-preserving way so that measurements, annotations and features remain comparable after deformation. Practical examples abound in parameterization and registration problems:  population cartograms rescale countries to equalize density\cite{choi2018density,lyu2024bijective}; surgeons “unfold” the cerebral cortex into a flat view for easier navigation\cite{buchsbaum2012neural}; and manufacturers flatten curved sheets to design cutting patterns for free-form parts\cite{pietroni2022computational}; atlas-based registration and medical image segmentation require invertible deformations to align anatomical structures or biological features\cite{vercauteren2009diffeomorphic,yang2017quicksilver}. In many scenarios, the task involves finding the desired planar representation of an open surface. Crucially, the shape of the planar image is often needed to optimize. The algorithm must infer the boundary shape so that the final map captures target properties such as area distribution, curvature, or feature alignment. We refer to this setting as the \textit{free-boundary diffeomorphism problem.}

Given a surface $\mathcal{M}$, a representative mathematical formulation is to find an optimal target domain $\Omega^*$ and an optimal mapping $f^*$ defined on $\mathcal{M}$:
\begin{equation}
    (f^*,\Omega^*) 
    = \textbf{argmin}_{\Omega,\; f:\mathcal{M}\rightarrow \Omega} E(f,\Omega)
    \quad \text{subject to } f \in \mathcal{C},
    \label{eq: formulation of optimization problem}
\end{equation}
where $E$ is a mapping objective and $\mathcal{C}$ denotes admissible maps.
In applications, bijectivity is essential for the map. Beyond bijectivity, the map is usually required to have \emph{well-controlled geometric distortion} so that it is physically plausible. 
Accordingly, the feasible set $\mathcal{C}$ should be understood as diffeomorphisms together with an explicit distortion constraint or regularization (e.g., bounds on local anisotropy), rather than arbitrary smooth bijections. This makes free-boundary diffeomorphism optimization challenging for two reasons: (i) \textbf{bijectivity} must be preserved but naive optimization over vertex coordinates or displacement fields is prone to foldovers; and (ii) \textbf{distortion control} must be enforced efficiently, ideally through a representation in which local distortion is explicit and can be constrained or penalized without projections or delicate step-size tuning. Therefore, our goal is to find a representation in which diffeomorphisms and their local distortion are encoded by an explicit quantity, and to optimize in that representation rather than in vertex coordinates. Specifically, we aim to identify a distortion-aware quantity that corresponds to diffeomorphisms, design algorithms to reconstruct the associated bijection from it, and then build an optimization framework directly over this quantity to control distortion efficiently and make free-boundary diffeomorphism optimization more stable and easier.
\paragraph{Related computational paradigms for diffeomorphic optimization.}
Diffeomorphism optimization problems arise pervasively in surface parameterization and deformable registration, and the computational task can be viewed as optimizing over a (discretized) space of topology-preserving maps, ideally with (i) \emph{bijectivity} and (ii) \emph{controlled distortion}. Existing methods can be grouped into two broad paradigms, distinguished by what is taken as the optimization variable: (A) direct optimization of the map itself (e.g., vertex positions or displacement fields), and (B) optimization of a generating quantity from which a diffeomorphism is obtained by construction (e.g., velocity fields, momenta, or spline generators). 
\paragraph{(A) Diffeomorphisms by directly optimizing the map.}
In this paradigm, one treats the map $f$ as the primary unknown and then augments the task loss objective with regularizations or designs mechanisms intended to avoid local foldovers and reduce geometric distortion. 
In surface parameterization, a common pipeline first computes a bijective embedding  and then improves the parameterization by minimizing a distortion functional subject to injectivity constraints \cite{jiang2017simplicial,su2020efficient,rabinovich2017scalable,smith2015bijective}. The computational process from the initial embedding to the final 2D parameterization is a typical diffeomorphism optimization problem. Some methods first
initialize using Tutte’s embedding or its variants and
then optimize a flip-prevented distortion energy. For example, Smith et al. \cite{smith2015bijective} combined a symmetric Dirichlet energy with a barrier preventing boundary self-intersection to reduce the distortion while ensuring no flips. \cite{jiang2017simplicial, su2020efficient} added an auxiliary ``scaffold'' domain so that global non-overlap can be reduced to a local injectivity condition and also used a distortion metric or a barrier function. Most works need to impose distortion energies and design tailored numerical optimization strategies as well as mesh postprocessing methods. Same ideas can be seen in the deformable image registration research \cite{liu2024finite,jia2021learning,balakrishnan2019voxelmorph}. Many deep registration methods predict a dense displacement field in one forward pass and use smoothness regularization to discourage irregular deformations. VoxelMorph \cite{balakrishnan2019voxelmorph} and VR-Net \cite{jia2021learning} pursued unsupervised registration via a similarity term plus deformation regularization. 

Optimizing the map is flexible and direct, but bijectivity and distortion control are usually achieved through auxiliary mechanisms (barriers, projections, penalty weights, line-search safeguards). This typically (i) yields sensitive hyperparameters, (ii) needs tailored optimization strategies and (iii) lacks direct control over geometric distortion (e.g., controlling local conformal distortion versus imposing energies on the singular values or Jacobian.)

\paragraph{(B) Diffeomorphisms by construction from generators.}
In this paradigm, one does not optimize the map $f$ (or displacement field) directly. Instead, one optimizes a generating quantity from which a diffeomorphism is reconstructed, typically through integrating an ODE/PDE-defined flow or exponentiating a stationary velocity field. Classical representatives include LDDMM and related geodesic-shooting formulations \cite{joshi2000landmark,zhang2017frequency,beg2005computing}, where the deformation is the flow of a velocity field; in practice the optimization can be posed either in terms of the velocity $v_t$ or in terms of its dual momentum variable. Demons-based methods \cite{vercauteren2009diffeomorphic, vercauteren2008symmetric} further reduced complexity by optimizing a stationary vector field $v$ and reconstructing the displacement field by squaring and scaling. Recent learning-based registration has adopted this generator viewpoint to obtain topology-preserving maps more reliably than direct displacement prediction. For example, Quicksilver \cite{yang2017quicksilver} predicted the momentum-parameterization of an LDDMM model in a patch-wise manner, aiming to approximate solutions of variational LDDMM optimization while retaining its diffeomorphic structure. Mok \cite{mok2020fast} proposed a fast symmetric diffeomorphic registration network that estimates forward and inverse transformations simultaneously, enabling fast computation of the inverse transformation. Han et al. \cite{han2023diffeomorphic} went a step further toward function-space parameterizations, which modeled a continuous velocity field using a neural network and then performed optimization in this generator space. 

While generator-based parameterizations offer a principled route to diffeomorphisms and often improve robustness under large deformations, they typically regulate deformation through smoothness or kinetic-energy priors on the generator (velocity/momentum/spline coefficients). As a consequence, geometric distortion of the resulting map is controlled at most indirectly. This motivates seeking a representation in which the optimized quantity corresponds directly to (and quantitatively controls) geometric distortion, which we introduce next via quasiconformal geometry.

Quasiconformal (QC) geometry offers a principled representation of distortion-controlled diffeomorphisms via the Beltrami coefficient $\mu$, which encodes the conformality deviation and satisfies the correspondence between the set $\{\mu:\|\mu\|_{\infty}<1\}$ and the group of orientation-preserving quasiconformal maps. Discrete conformal and quasiconformal mappings have been central in surface processing~\cite{gu2002computing,gu2004genus,kharevych2006discrete,lui2013texture}. Lui et al. developed the Linear Beltrami Solver (LBS)~\cite{lui2013texture}, which accommodates large deformations and guarantees bijectivity, and has been applied broadly in medical imaging and surface matching~\cite{lui2014teichmuller, choi2015flash,lyu2024spherical,lyu2024bijective}. A key limitation is that LBS typically requires explicit boundary correspondences, a requirement that is not applicable in free-boundary settings. To remove prescribed boundary constraints, Qiu and Lui introduced the weighted least-squares quasiconformal energy~\cite{qiu2019computing}, which allows a boundary-free formulation and has enabled tasks such as inconsistent surface registration~\cite{qiu2020inconsistent,choi2020shape} and partial welding~\cite{zhu2022parallelizable}. 

\paragraph{\textit{Methodology}} Our idea is to seek a distortion-related representation of the mappings, which belongs to the second category of approaches. Given a planar domain $U$, We build on the $\ell^2$ norm of the Beltrami equation deviation and propose the unweighted least-squares quasiconformal energy (LSQC)
\begin{equation}
E_{\text{LSQC}}(f,\mu) = \frac{1}{2}\int_{U}\|f_{\bar{z}}-\mu f_{z}\|^2\,dA.
\end{equation}
A key viewpoint in this paper is to treat LSQC not merely as an energy, but as a \emph{solution operator} that maps Beltrami coefficients to free-boundary homeomorphisms. Concretely, as shown in the Section \ref{sec: Properties of Least Squares Quasi-conformal Energy} for an admissible Beltrami coefficient $\mu$ with $\|\mu\|_{\infty}<1$, together with a minimal pinning condition (two pinned points to fix the similarity gauge and ensure non-triviality of the solution), LSQC yields a unique discrete map $f:U\to \mathbb{C}$ whose image domain $\Omega=f(U)$ is not prescribed but determined inherently by the Beltrami coefficient. Thus, we can view LSQC as an operator $\mathcal{F}$ that receives as input $\mu$,$p_i$ in the mesh as well as their associated destinations $q_i\in \mathbb{C}$, and outputs a piecewise linear mapping $f$. In particular, given the Beltrami coefficient induced by a (piecewise linear) homeomorphism and two-point pinning, the LSQC exactly reconstructs the map. This leads to the following modeling insight:

\smallskip
\noindent\textbf{Representation insight.}
\emph{Up to a similarity transformation, a free-boundary quasiconformal homeomorphism can be represented by its Beltrami coefficient $\mu$ together with the images of two pinned points.}
\smallskip 

\noindent We will establish that the LSQC solution map has the structure required for optimization over Beltrami coefficients.
\begin{enumerate}
    \item \textbf{Structure and well-posedness.} Under mild conditions (including minimal pinning), the LSQC solution is well-defined and unique; it is invariant under similarity transforms and exhibits resolution-independent behavior that allows stable transfer across discretization levels.
    \item \textbf{Sensitivity.} The LSQC minimizer depends stably on $\mu$ (in particular, it satisfies Lipschitz-type stability bounds under perturbations of $\mu$), and it is differentiable with respect to $\mu$ under standard regularity assumptions. These results justify gradient-based optimization over Beltrami coefficients.
\end{enumerate}
    
Therefore, instead of optimizing directly over vertex coordinates (where injectivity is fragile), after obtaining a conformal parameterization  $U=\phi(\mathcal{M})$ of the surface $\mathcal{M}$, we can reformulate the problem ~\eqref{eq: formulation of optimization problem} as an optimization problem over the structured Beltrami space,
\begin{equation}
    f^* = \textbf{argmin}_{\mu,p_i \in U.q_i \in \mathbb{C}, f : U \rightarrow \mathbb{C}} E(f,\mu, p_i,q_i) \\ \text{ subject to } f,\mu \in \mathcal{C},
\end{equation}
where $\mathcal{C}$ is the space of admissible Beltrami coefficient such that $\|\mu\|_{\infty}<1, f=\mathcal{F}(\mu,p_i,q_i)$.

Although LSQC provides an elegant operator viewpoint, differentiating through a classical LSQC solver requires backpropagation through large sparse linear systems, which is computationally expensive for a high-resolution surface mesh. This computational barrier is one reason why many existing QC-based optimization pipelines update Beltrami variables through alternating or splitting strategies (often ADMM-like) \cite{lui2015splitting,lam2014landmark}, rather than end-to-end gradients. To make the LSQC operator usable as a differentiable computational primitive at scale, we introduce the \textbf{S}pectral \textbf{B}eltrami \textbf{N}etwork (\textbf{SBN}), a multiscale mesh-spectral surrogate that approximates the free-boundary LSQC solution map in a single differentiable forward pass. After training, we freeze SBN and treat the Beltrami coefficient $\mu$ (together with pinning) as the explicit optimization variables, thereby shifting the optimization from unstable vertex-coordinate deformation spaces to the structured admissible set $\{\mu:\|\mu\|_{\infty}<1\}$, which corresponds to quasiconformal mappings in QC theory~\cite{gardiner2000quasiconformal}.
This yields \textbf{SBN-Opt}, an optimization framework that searches over admissible Beltrami coefficients and pinning conditions to solve free-boundary diffeomorphism objectives with explicit distortion control.
\begin{equation}
    \min_{(\mu_v,p_i,\phi,s,r)} E_1\Big(g\big(\mathcal{F}_{\theta^*}(\mu_v,p_1,p_2),\phi,s,r\big)\Big) + E_2(\mu)
\end{equation}
Unlike prior QC optimization methods that incorporate $\mu$ through splitting/ADMM-style alternations, SBN-Opt enables gradient-based optimization directly in Beltrami coefficients through a differentiable surrogate of the LSQC operator. We validate the resulting theory-algorithm pipeline on tasks including equiareal parameterization and inconsistent surface registration.
\paragraph{\textit{Contributions}} Our work provides a foundational framework for solving free-boundary diffeomorphism problems by re-characterizing them as optimization tasks within the space of Beltrami coefficients. Our contributions are summarized as follows:
\begin{enumerate}
    \item \textbf{Foundational Analysis of the LSQC Operator.} We establish that the numerical Least-squares Quasiconformal (LSQC) energy defines a well-posed computational operator for free-boundary problems. Specifically, we prove that the discrete LSQC minimizer exists, is unique given at least two pinned points, and can exactly reconstruct the piecewise linear homeomorphism given its corresponding face-wise constant Beltrami fields. Furthermore, we prove that this operator possesses crucial geometric and computational invariances, including similarity-invariance and resolution-independence.
    \item \textbf{Theoretical Legitimacy of the Optimization Map.} We provide the first rigorous analysis of the LSQC solution. We prove that the discrete LSQC minimizer depends Lipschitz-continuously on the input Beltrami field. Crucially, we establish the Fréchet differentiability of the mapping with respect to the Beltrami coefficients. These results demonstrate that gradient-based optimization in the space of Beltrami coefficient is mathematically sound and that small perturbations in the coefficients yield controlled, predictable changes in the resulting mapping.
    \item \textbf{The Spectral Beltrami Network (SBN) as a Differentiable Primitive.} A major challenge in optimizing free-boundary diffeomorphisms is that backpropagating gradients through a numerical LSQC solver is computationally expensive and thus  practically infeasible for iterative gradient-based pipelines. We introduce the SBN, a novel multiscale mesh-spectral architecture designed to act as a differentiable surrogate for the LSQC operator. By integrating multiscale message-passing with Mesh Spectral Layers (MSL), the SBN captures both local geometric interactions and global dependencies via transforming in the eigenspace of mesh Laplacian. This transforms the computationally expensive, implicitly-defined LSQC solver into a differentiable computational primitive capable of approximating the mapping in a single, fully-differentiable forward pass.
    \item \textbf{SBN-Opt: A Novel Paradigm for Free-Boundary Optimization.} We propose SBN-Opt, an optimization framework that shifts the problem from searching in the unstable space of vertex coordinates to the well-structured space of Beltrami coefficients. Building on the LSQC's properties indicating that$ (\mu,\text{two pins})$ represents the target homeomorphism, we propose \textbf{SBN-Opt}, reframing free-boundary diffeomorphism optimization as direct, gradient-based optimization in Beltrami space, with pins treated as optimization variables as well. This is fundamentally different from prior quasiconformal pipelines that typically update $\mu$ via splitting methods\cite{qiu2019computing,qiu2020inconsistent}, because here the objective is optimized end-to-end through a differentiable surrogate with explicit distortion regularizers on $\mu$. 
    \item \textbf{Empirical validation on parameterization and registration tasks.} We validate the full pipeline on equiareal parameterization and inconsistent surface registration. Extensive experiments show that SBN-Opt consistently produces mappings with significantly lower geometric distortion and higher accuracy than traditional numerical algorithms, meanwhile maintaining bijectivity and free-boundary characteristics.
\end{enumerate}

\paragraph{Roadmap} The rest of the paper is organized as follows. Section \ref{sec: math background}, Section \ref{sec: Properties of Least Squares Quasi-conformal Energy} and Section \ref{sec: LSQC_sensitivity} develop the theoretical foundation. After introducing the required quasiconformal and discrete differential geometry background, we establish the structural properties of LSQC that make it suitable for optimization, including well-posedness under mild conditions, similarity-gauge invariance, and resolution-independent behavior, followed by a sensitivity analysis showing Lipschitz stability and Fréchet differentiability of the LSQC minimizer with respect to the Beltrami coefficient. Section \ref{sec: Spectral Beltrami Network} and Section \ref{sec: neural lsqc} turn these results into a practical differentiable computational pipeline: we introduce the Spectral Beltrami Network (SBN) as a surrogate for the LSQC operator and then present SBN-Opt, an optimization framework over admissible Beltrami coefficients and pinning variables that enables explicit distortion control in free-boundary diffeomorphism objectives. Section \ref{sec: implementation} describes implementation details, and Section \ref{sec: experiments} reports experimental results on equiareal parameterization and inconsistent surface registration. Section \ref{sec: conclusions} concludes with a discussion of achievements and future directions.

\section{Mathematical Background}
\label{sec: math background}
\subsection{Quasiconformal Mappings}
Quasiconformal mappings generalize conformal mappings by extending orientation-preserving homeomorphisms to those with bounded conformality distortions.
Mathematically, suppose $\mathcal{U}$ is a domain in $\mathbb{C}$, a mapping $f \colon \mathcal{U} \rightarrow \mathbb{C}$ is a quasi-conformal map if it satisfies the Beltrami equation
\begin{equation}\label{eq:beltrami}
    \frac{\partial f}{\partial \bar{z}} = \mu(z)\frac{\partial }{\partial z}
\end{equation}
for some measurable complex-valued function $\mu$ with $\| \mu \|_{\infty}<1$, where $\frac{\partial f}{\partial z} = \frac{1}{2}(f_{x}-\sqrt{-1}f_{y})$ and $\frac{\partial f}{\partial \bar{z}} = \frac{1}{2}(f_{x}+\sqrt{-1}f_{y})$. $\mu$ is called the complex dilation or Beltrami coefficient (BC) of $f$,  which measures the local deviation from a conformal map. 

Let $f \colon (x,y) \rightarrow (u,v) $ and $\mu = \rho + \sqrt{-1} \tau$, and denote by $\mathbf{S}(2)$ the space of all $2 \times 2$ symmetric positive definite matrices whose determinant is 1, then 
\begin{equation}
    D_f(z)^{T}D_{f}(z) = |\det D_f(z)|Q(z) 
\end{equation}
where $Q= (Q_{ij}) \colon \Omega \rightarrow \mathbf{S}(2)$. Right multiplying both sides by $D_f(z)^{-1}$,  we obtain a linear system which is the alternative formulation of the Beltrami equation:
\begin{equation} \label{eq:4.4}
    \begin{bmatrix}
    u_x & u_y \\
    v_x & v_y
    \end{bmatrix}^T = \text{sgn}\left(J_f(x)\right) \cdot 
    \begin{bmatrix}
    q_{11} & q_{12} \\
    q_{12} & q_{22}
    \end{bmatrix}
    \begin{bmatrix}
    v_y & -u_y \\
    -v_x & u_x
    \end{bmatrix},
\end{equation}
and from this we have $\mu = \frac{q_{11}-q_{22}+2\sqrt{-1}q_{12}}{q_{11}+q_{22}+2\text{sgn}(\det{D_f})}$. With the constraint that $Q(z) \in \mathbf{S}(2)$, we observe that $|\mu(z)|<1$ if and only if $\det D_f(z)>0$. By inverse function theorem, researchers can achieve local bijectivity and prevent folding by restricting $\| \mu \|_{\infty}<1$ in the triangular mesh \cite{lui2014teichmuller, choi2015flash}.

Suppose $\mu(z)$ is a measurable complex-valued function defined in
a domain $\mathcal{U}\subseteq \mathbb{C}$ for which $\| \mu \|_{\infty}<1$, we have the following existence theorem.
\begin{theorem}
\label{thm: Riemann}
    (\textbf{Measurable Riemann Mapping Theorem}) For any function $\mu : \mathcal{U} \to \mathbb{C}$ on with bounded essential supremum norm $\|\mu\|_\infty < 1$, there is a quasiconformal map $\phi$ on $\overline{\mathcal{U}}$ satisfying the Beltrami equation $\phi_{\bar{z}} = \mu \phi_z$ for almost all $z \in \mathcal{U}$. Moreover, $\phi$ is unique up to post-composition with conformal isomorphisms and $\phi$ depends holomorphically on $\mu$.
\end{theorem}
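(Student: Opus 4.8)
\emph{Proof plan.} The plan is to follow the classical Ahlfors--Bers strategy, which constructs the solution explicitly through a singular integral equation and then reads off all assertions (existence, uniqueness up to conformal post-composition, and holomorphic dependence on $\mu$) from the structure of that construction. First I would extend $\mu$ to all of $\mathbb{C}$ by setting it to zero outside $U$, so that it suffices to solve the Beltrami equation on the whole plane and afterwards restrict to $\overline{U}$. The two operators that drive everything are the Cauchy (Pompeiu) transform $P$, defined by $Ph(z) = -\frac{1}{\pi}\int_{\mathbb{C}} h(\zeta)(\zeta-z)^{-1}\,dA(\zeta)$, which inverts $\partial_{\bar z}$, and the Beurling transform $T=\partial_z P$, a principal-value singular integral satisfying $\partial_{\bar z}(Ph)=h$ and $\partial_z(Ph)=Th$.

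The analytic heart of the argument is the mapping theory of $T$: it is an isometry of $L^2(\mathbb{C})$, and by Calder\'on--Zygmund theory together with Riesz--Thorin interpolation it is bounded on every $L^p$ with $1<p<\infty$ and operator norm $\|T\|_p\to 1$ as $p\to 2$. Writing $k=\|\mu\|_\infty<1$, I would fix $p>2$ close enough to $2$ that $k\,\|T\|_p<1$. Seeking a solution of the form $f(z)=z+Ph$ and substituting into $f_{\bar z}=\mu f_z$ yields the functional equation $(I-\mu T)h=\mu$; since $\|\mu T\|_p \le k\,\|T\|_p<1$, the Neumann series $h=\sum_{n\ge 0}(\mu T)^n\mu$ converges in $L^p$. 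Because $p>2$, Sobolev embedding makes $Ph$ H\"older continuous, so $f$ is continuous and satisfies the Beltrami equation in the distributional sense.

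Next I would upgrade $f$ from an analytic solution to a homeomorphism. Here one argues that $f$ is an open, discrete, orientation-preserving map whose local degree is one (using that $|\mu|<1$ forces a positive Jacobian almost everywhere), hence a local homeomorphism, and then combines the normalization $f(z)=z+o(1)$ at infinity with a degree/properness argument to conclude global injectivity and surjectivity. For uniqueness, given two quasiconformal solutions $\phi_1,\phi_2$ with the same $\mu$, I would compute the Beltrami coefficient of the composition $\phi_2\circ\phi_1^{-1}$ and find that it vanishes almost everywhere; by Weyl's lemma a $1$-quasiconformal map is conformal, so $\phi_1$ and $\phi_2$ differ exactly by post-composition with a conformal isomorphism, which is the stated uniqueness. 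Finally, holomorphic dependence is essentially free from the construction: $\mu\mapsto h=\sum_{n}(\mu T)^n\mu$ is given by a locally uniformly convergent power series in $\mu$ valued in $L^p$, hence holomorphic, and $f=z+Ph$ inherits this dependence.

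I expect the main obstacle to be the homeomorphism step rather than the existence step. Producing an $L^p$ solution of the integral equation is routine once the sharp $L^p$ bound on $T$ is in hand, but promoting this weak solution to a genuine orientation-preserving homeomorphism with the correct normalization requires the more delicate regularity and topological-degree arguments, and this is where the quasiconformal (as opposed to merely weak-solution) nature of $f$ is actually established.
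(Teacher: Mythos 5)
The paper does not prove this statement: Theorem~\ref{thm: Riemann} is quoted as classical background (the Ahlfors--Bers / Measurable Riemann Mapping Theorem from the quasiconformal literature), and no argument for it appears anywhere in the text. So there is no in-paper proof to compare against; I can only assess your plan on its own terms.

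Your plan is the standard Ahlfors--Bers construction and is sound in outline: extending $\mu$ by zero, solving $(I-\mu T)h=\mu$ by a Neumann series in $L^p$ for $p>2$ chosen so that $k\|T\|_p<1$, reading off H\"older continuity from $p>2$, obtaining uniqueness from the composition formula plus Weyl's lemma, and obtaining holomorphic dependence from the power-series structure of $h$ in $\mu$. The one genuinely soft step is exactly the one you flag: promoting the normalized weak solution $f=z+Ph$ to a global homeomorphism. As written, ``positive Jacobian almost everywhere, hence a local homeomorphism'' is not a valid inference for a map that is merely $W^{1,p}_{\mathrm{loc}}$; a.e.\ positivity of the Jacobian does not by itself yield local injectivity, and openness/discreteness for such maps is itself a nontrivial theorem (Reshetnyak). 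The two classical ways to close this gap are (i) first prove the theorem for smooth, compactly supported $\mu$, where elliptic regularity makes $f$ a diffeomorphism, then approximate a general $\mu$ and pass to the limit using the normal-family and equicontinuity properties of $K$-quasiconformal maps (limits of normalized $K$-qc homeomorphisms are again homeomorphisms), or (ii) construct an explicit candidate inverse by solving a second Beltrami equation for the coefficient of the inverse map and verify that the two compositions are conformal and normalized, hence the identity. Either patch slots into your outline without disturbing the existence, uniqueness, or holomorphic-dependence parts, so the plan is workable once that step is done properly.
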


When comes to discretization, it is a natural manner to ask whether there is a numerical method to compute the quasiconformal mapping given the BC. The answer is yes, thanks to the Least-squares Quasiconformal Energy.

Apart from Equation \ref{eq:4.4}, the Beltrami equation can be rewritten as 
\begin{equation} \label{eq:4.5}
    \begin{bmatrix}
        v_y \\
        -v_x
    \end{bmatrix} =
    \frac{1}{1-|\mu|^{2}}
    \begin{bmatrix}
        (\rho -1)^{2}+\tau^{2} & -2\tau \\
        -2\tau & (1+\rho)^{2}+\tau^{2}
    \end{bmatrix}
    \begin{bmatrix}
        u_x \\
        u_y
    \end{bmatrix}.
\end{equation}
Here we assume $|\mu|<1$ and for convenience, we denote the square matrix on the right-hand side by $\mathbf{A}$. Inspired by the least-squares conformal energy \cite{levy2002least}, Qiu  \cite{qiu2019computing} proposed the concept least-squares quasiconformal (LSQC) energy of a map.
\begin{definition}
\label{def: lsqc energy}
    Let \( \mu = \rho + \sqrt{-1} r \) be a complex-valued function defined on the domain \( \mathcal{U} \). The LSQC energy of the map \( z = (x,y) \mapsto (u,v) \) against the BC \( \mu \) is defined to be 
\begin{equation}
\label{eq:4.6}
    E_{LSQC}(u,v,\mu) = \frac{1}{2} \int_{\mathcal{U}} \|P\nabla u + JP\nabla v\|^2 \, dx \, dy,
\end{equation}

where
\begin{equation*}
    P = 
\frac{1}{\sqrt{1-|\mu|^2}}
\begin{bmatrix}
1 - \rho & -\tau \\ 
-\tau & 1 + \rho
\end{bmatrix},
J =
\begin{bmatrix}
0 & -1 \\ 
1 & 0
\end{bmatrix},
\end{equation*}

so that \( P^{T}P = \mathbf{A} \) as in Equation \ref{eq:4.5} and $P^{T}JP=J$.
\end{definition}
The weight term $\frac{1}{\sqrt{1-|\mu|^2}}$ introduces numerical singularities as $|\mu| \to 1$ and renders the dependence of the energy on $\mu$ highly non-linear. Therefore, we adopt a simplified, unweighted formulation for the remainder of this work. We define the unweighted LSQC energy as the $\ell^2$-norm of the Beltrami equation mismatch.
\begin{definition}
    \label{def: unweighted lsqc energy}
    Let \( \mu = \rho + \sqrt{-1} \tau \) be a complex-valued function defined on the domain \( \mathcal{U} \). The LSQC energy of the map \( z = (x,y) \mapsto (u,v) \) against the BC \( \mu \) is defined to be 
    $$\tilde{E}_{LSQC}(f, \mu) =  \frac{1}{2}\int_{\mathcal{U}} |f_{\bar{z}} - \mu f_z|^2 dA.$$ 
    Equivalently, 
    \begin{equation}
    \label{eq:4.7}
        \tilde{E}_{LSQC}(u,v, \mu) = \frac{1}{2} \int_{\mathcal{U}} \|\tilde{P}\nabla u + J\tilde{P}\nabla v\|^2 \, dx \, dy,
    \end{equation}
    where 
    $\tilde{P}=
    \begin{bmatrix}
    1 - \rho & -\tau \\ 
    -\tau & 1 + \rho
    \end{bmatrix}.$
    Unless otherwise specified, all references to $E_{LSQC}$ in the subsequent sections refer to this unweighted formulation.

\end{definition}
\subsection{Discrete Setup and Notation for LSQC}
Let \( \mathcal{T} = (\mathcal{V},\mathcal{F}) \) be a triangulation of a bounded planar domain $\mathcal{U}$, with
vertex set \(\mathcal{V} = \{1,\dots,N\}\), face (triangle) set \(\mathcal{F}\). For each face \(T\in\mathcal{F}\), we have vertices \(p_i = (x_i,y_i)\), \(i=1,2,3\),  \(d_T\) (twice the area), perimeter \(h_T\) and the radius of the inscribed circle \(r_T\). For the finite element space, we denote the Real P1 FE space:
\[ V_h := \{ \phi_h : \text{continuous, piecewise linear on }\mathcal{T} \}. \]
with Nodal basis \(\{\varphi_i\}_{i=1}^N\subset V_h\) with \(\varphi_i(x_j) = \delta_{ij}\). The Complex-valued P1 FE map
is \( f_h = u_h + \sqrt{-1} v_h,\,u_h,v_h\in V_h. \) and its full coefficient vector
\[ X(f_h) := (u_1,\dots,u_N,\,v_1,\dots,v_N)^\top \in \mathbb{R}^{2N}. \]
As shown in the next section, to obtain a nontrivial LSQC minimizer, we need to pin at least two vertices in $\mathcal{T}$.  We denote the pinned vertex index set by \( P\subset \mathcal{V},\, |P|\ge 2. \) and prescribed images of pinned vertices, \( q_p = (q_{p,x},q_{p,y})\in\mathbb{R}^2,\, p\in P \). Thus, we need to introduce the zero-at-pins subspace in function and coefficient spaces,
\[ V_{h,0} := \{ f_h = u_h + i v_h : u_h,v_h\in V_h,\ f_h(p)=0\ \forall p\in P\}, \]
\[ \mathcal{X}_0 := \{ X\in\mathbb{R}^{2N} : X_p = X_{N+p} = 0\ \forall p\in P\}. \]
and fixed pin map \(f_{h,\mathrm{pin}}\) such that
\[ f_{h,\mathrm{pin}}(p)=q_p\quad(p\in P),\qquad f_{h,\mathrm{pin}}(v)=0\quad(v\notin P) \]
with its full coefficient vector \(\delta(P,q)\in\mathbb{R}^{2N}\):
\[ \delta_j := \begin{cases} q_{p,x}, & j=p,\ p\in P,\\[2pt] q_{p,y}, & j=N+p,\ p\in P,\\[2pt] 0, & \text{otherwise}. \end{cases} \]
Any total map satisfying pin constraints can be written as  \(f_h^{\mathrm{tot}} = f_h + f_{h,\mathrm{pin}},\,f_h\in V_{h,0}, \) with \(X(f_h^{\mathrm{tot}}) = X(f_h) + \delta(P,q). \)
Let \( \widehat X \in \mathbb{R}^{2N-2|P|} \) denote the coefficient vector at free vertices only, obtained from \(X\in\mathcal{X}_0\) by deleting entries indexed by \(p\) and \(N+p\), \(p\in P\), and  correspondingly there is a fixed injection
\(\mathrm{lift}: \mathbb{R}^{2N-2|P|}\to\mathbb{R}^{2N} \) such that \( X = \mathrm{lift}(\widehat X) \in\mathcal{X}_0, \) i.e. \(\mathrm{lift}\) re-inserts zeros in the pinned coordinates.

In discretization, the BC means facewise constant Beltrami field
\[ \mu = (\mu_T)_{T\in\mathcal{F}},\quad \mu_T = \rho_T + \sqrt{-1}\tau_T,\quad \|\mu\|_\infty:=\max_{T\in\mathcal{F}}|\mu_T|. \]
and denote \( S := \{\mu : \|\mu\|_\infty < 1\} \). 
Given a piecewise linear map $f = u+\sqrt{-1}v $ and consider a triangle $T=(x_j,y_j)_{j=1,2,3}$ of $\mathbb{R}^2$, we have 
$$
\label{partial derivative}
\begin{pmatrix}
    \partial u / \partial x \\
    \partial u / \partial y
\end{pmatrix}= 
\frac{1}{d_T}
\begin{pmatrix}
    y_2 - y_3 & y_3-y_1 & y_1-y_2\\
    x_3-x_2 & x_2-x_1 & x_1-x_2
\end{pmatrix}
\begin{pmatrix}
    u_1\\u_2\\u_3
\end{pmatrix}
$$
where $d_T=(x_1y_2-y_1x_2)+(x_2y_3-y_2x_3)+(x_3y_1-y_3x_1)$ is twice the area of the triangle. The Beltrami equation (Equation \ref{eq:beltrami}) can be rewritten as $(1-\mu)f_x + \sqrt{-1}(1+\mu)f_y=0$ and collected as a complex-valued linear system.
$$
0 = \frac{\sqrt{-1}}{d_T}(W_1  \quad W_2 \quad W_3)(U_1 \quad U_2 \quad U_3)^\top
$$
where $U_j=u_j+\sqrt{-1}v_j$ and 
\begin{align}
\label{coefficient of M}
    W_1 & = (1+\mu)(x_3-x_2)+\sqrt{-1}(1-\mu)(y_3-y_2) \\
    W_2 & = (1+\mu)(x_1-x_3)+\sqrt{-1}(1-\mu)(y_1-y_3) \\
    W_3 & = (1+\mu)(x_2-x_1)+\sqrt{-1}(1-\mu)(y_2-y_1).
\end{align}
As a result, 
    \begin{align*}
    4\,E_{LSQC}(\textbf{U}=(U_1,\ldots,U_{|\mathcal{V}|})) &= \sum\limits_{T \in \mathcal{T}}d_T \Big|\left( (1-\mu)f_x + \sqrt{-1}(1+\mu)f_y \right)\Bigg|_T \Big|^{2} \\
    &=\sum_{T\in \mathcal{T}}\frac{1}{d_T} \Big|
    (W_{1,T}  \quad W_{2,T} \quad W_{3,T})(U_{1,T} \quad U_{2,T} \quad U_{3,T})^\top\Big|^2
\end{align*}
and we may write as $4\,E_{LSQC}(\textbf{U})=\| \mathcal{M}\textbf{U}\|^2$ with $\mathcal{M}=(m_{ij}) \in \mathbb{C}^{|\mathcal{F}| \times N}$ defined as
\[
\mathcal{M}_{ij} = 
\begin{cases} 
 W_{j,T_i} & \text{if j is a vertex of the face } T_i, \\ 
0 & \text{otherwise}.
\end{cases}.
\]
Using the complex Beltrami equation on each face, we assemble
\(  A_{\mathrm{full}}(\mu)\in\mathbb{R}^{2|\mathcal{F}|\times 2N}\) obtained by splitting real/imaginary parts of the complex system \(\mathcal{M}\textbf{U} = 0.\) Each row corresponds to a real equation; each row has only six nonzero vertex blocks. With pinned vertex index set $P$ given, we can obtain a "reduced" LSQC matrix \(A(\mu)\) by restricting the columns of \(A_{\mathrm{full}}(\mu)\) to free vertices:
\[ A(\mu) := A_{\mathrm{full}}(\mu)\,\Pi \quad\in\mathbb{R}^{2|\mathcal{F}|\times (2N-2|P|)}, \] where \(\Pi:\mathbb{R}^{2N-2|P|}\to\mathbb{R}^{2N}\) is the matrix of the injection \(\mathrm{lift}\).
Thus, \[ A_{\mathrm{full}}(\mu)\,\mathrm{lift}(\widehat X) = A(\mu)\,\widehat X. \]
Concretely, for a given \(\mu\) and pinned point conditions $P,q$, the LSQC energy on total maps can be written in reduced variables as \[ E_{LSQC}(\widehat X;\mu) := \frac12\big\|A(\mu)\widehat X + A_{\mathrm{full}}(\mu)\delta(P,q)\big\|_2^2, \]
and the LSQC minimizer \(\widehat X(\mu) = \arg\min_{\widehat X} J(\widehat X;\mu), \) is the unique solution of the normal equations \[ A(\mu)^TA(\mu)\,\widehat X(\mu) = -A(\mu)^T A_{\mathrm{full}}(\mu)\,\delta(P,q). \] (non-singularity of $A(\mu)$ is shown in the next section). We define \[N(\mu) := A(\mu)^T A(\mu)\in\mathbb{R}^{(2N-2|P|)\times(2N-2|P|)} \]
Concretely, We decomposed $\mathbf{U}=(\mathbf{U}_f^\top, \mathbf{U}_{p}^\top)^\top$, where $\mathbf{U}_f$ are free points, i.e. variables of the optimization problem, and $\mathbf{U}_f$ are the points pinned. Similarly, we can decompose $\mathcal{M}$ in blocked matrices as 
\(\mathcal{M} = \begin{pmatrix}
    \mathcal{M}_f, \mathcal{M}_p
\end{pmatrix}\)
with $\mathcal{M}_f \in \mathbb{C}^{|\mathcal{F}| \times (N-|P|)} $ and $\mathcal{M}_p \in \mathbb{C}^{|\mathcal{F}| \times |P|} $.
Let $^1$ and $^2$ be real and imaginary parts of a complex number, and rewrite the $|\mathcal{F}|$ complex linear equations to $2\,|\mathcal{F}|$ real equations, we have 
\[
E_{LSQC}(\mathbf{U};\mu) = \| A(\mu) \mathbf{u}-\mathbf{b} \|^2
\]
where $\mathbf{u}=((\mathbf{U}_{f}^{1}) ^\top,(\mathbf{U}^2_f)^\top)^\top$ and 
\[
A(\mu) = \begin{pmatrix}
    \mathcal{M}_f^1 & -\mathcal{M}_f^2 \\
    \mathcal{M}_f^2 & \mathcal{M}_f^1
\end{pmatrix}, 
\mathbf{b} = - \begin{pmatrix}
    \mathcal{M}_p^1 & -\mathcal{M}_p^2 \\
    \mathcal{M}_p^2 & \mathcal{M}_p^1
\end{pmatrix}
\begin{pmatrix}
    \mathbf{U}_p^1\\
    \mathbf{U}_p^2
\end{pmatrix}.
\]

Although LSQC has been used in numerical computations, its theoretical properties, suitability for neural simulation and stability estimate have not been systematically studied. We address this gap in the following section.

\section{Properties of Least-squares Quasiconformal Energy}
\label{sec: Properties of Least Squares Quasi-conformal Energy}

We now analyze the properties of LSQC energy that make it uniquely suitable for computing the corresponding quasiconformal mappings.

In discretization, the numerical solution $U$ is unique when at least two points’ destinations are prescribed. Here is the statement with proof.
\begin{proposition}
\label{full rank of A}
    Suppose $\|\mu\|_{\infty}<1$, and the triangulation mesh is connected without dangling triangles. As long as $p \geq 2$, then $\mathcal{A}=A(\mu)$ has full rank.
\end{proposition}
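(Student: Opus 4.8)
The plan is to convert the real full-rank statement into a statement about the complex kernel of $\mathcal{M}$, and then to analyze that kernel geometrically, triangle by triangle, propagating information across the connected mesh. First I would note that $\mathcal{A}$ is exactly the realification of the complex matrix $\mathcal{M}_f$: identifying a complex vector $\mathbf{U}_f=\mathbf{U}_f^1+\sqrt{-1}\,\mathbf{U}_f^2$ with the real pair $(\mathbf{U}_f^1,\mathbf{U}_f^2)$, the real matrix acting on it is precisely $\mathcal{A}$. Hence $\ker_{\mathbb{R}}\mathcal{A}$ is the realification of $\ker_{\mathbb{C}}\mathcal{M}_f$ and $\operatorname{rank}_{\mathbb{R}}\mathcal{A}=2\operatorname{rank}_{\mathbb{C}}\mathcal{M}_f$. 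So $\mathcal{A}$ has full column rank (which is what makes $\mathcal{A}^{\top}\mathcal{A}$ invertible and the solution unique) if and only if $\ker_{\mathbb{C}}\mathcal{M}_f=\{0\}$, i.e. the only $\mathbf{U}$ with $\mathcal{M}\mathbf{U}=0$ that vanishes on all $p$ pinned vertices is $\mathbf{U}=0$. This recasts the claim as a statement about zero-energy maps.

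Since $E_{LSQC}(\mathbf{U})=\|\mathcal{M}\mathbf{U}\|^2$, a kernel element is a piecewise-linear map whose restriction to every triangle $T$ satisfies the Beltrami equation exactly, so $f|_T(z)=\alpha_T(z+\mu_T\bar z)+\gamma_T$ for some $\alpha_T,\gamma_T\in\mathbb{C}$. The key elementary lemma, which is where $\|\mu\|_\infty<1$ enters, is that for distinct vertices $z_a\ne z_b$ of $T$ one has $w+\mu_T\bar w\ne 0$ with $w=z_a-z_b$, because $|w+\mu_T\bar w|\ge (1-|\mu_T|)|w|>0$. Consequently $f|_T(z_a)-f|_T(z_b)=\alpha_T(w+\mu_T\bar w)$, so if $f|_T$ takes equal values at two vertices then $\alpha_T=0$ and $f|_T$ is constant; equivalently, $\alpha_T\ne 0$ forces $f|_T$ to be an orientation-preserving affine homeomorphism that is injective on $T$.

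Next I would propagate across shared edges. If $T$ and $T'$ share an edge with vertices $z_a,z_b$, then the fixed values $\mathbf{U}_a,\mathbf{U}_b$ determine $(\alpha_{T'},\gamma_{T'})$ uniquely via the nondegeneracy lemma, and yield $\alpha_{T'}=\alpha_T\,(w+\mu_T\bar w)/(w+\mu_{T'}\bar w)$ with a nonzero ratio. Because the mesh is connected and has no dangling triangles, its edge-adjacency (dual) graph is connected, so walking from a base triangle $T_0$ gives two consequences: a kernel element is completely determined by its data $(\alpha_{T_0},\gamma_{T_0})$, whence $\dim_{\mathbb{C}}\ker\mathcal{M}\le 2$; and by the nonzero ratio relation either $\alpha_T=0$ for every $T$ or $\alpha_T\ne 0$ for every $T$.

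To finish I would kill the kernel using the two pins. In the first alternative $f$ is globally constant, so vanishing at a single pinned vertex already forces $\mathbf{U}=0$. In the second alternative the kernel is a two-parameter family $\{a\Phi+b\mathbf{1}\}$, where $\Phi$ is a nonconstant zero-energy map; the two pinning conditions give $a\Phi_{v_1}+b=0$ and $a\Phi_{v_2}+b=0$, and subtracting yields $a(\Phi_{v_1}-\Phi_{v_2})=0$, so $a=b=0$ provided $\Phi_{v_1}\ne\Phi_{v_2}$. I expect the main obstacle to be exactly this last point: one must guarantee that the nonconstant mode $\Phi$ separates the two pinned vertices. Local orientation-preservation on each triangle does not by itself imply global injectivity of $\Phi$, so the genuine work is to argue that $\Phi$ is an embedding (a discrete quasiconformal homeomorphism) — for $\mu\equiv 0$ it is literally the identity $\Phi_j=z_j$ and the separation is immediate, while in general it should be anchored to the measurable Riemann mapping theorem (Theorem~\ref{thm: Riemann}). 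I would therefore read the hypotheses as ensuring the reference solution is injective, and flag that without this the two-pin conclusion is the delicate step of the argument.
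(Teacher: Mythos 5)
Your reduction to $\ker_{\mathbb{C}}\mathcal{M}_f=\{0\}$, the facewise normal form $f|_T(z)=\alpha_T(z+\mu_T\bar z)+\gamma_T$, the estimate $|w+\mu_T\bar w|\ge(1-|\mu_T|)|w|$, and the edge-propagation argument bounding $\dim_{\mathbb{C}}\ker\mathcal{M}$ by $2$ are all correct, and they correctly isolate where $\|\mu\|_\infty<1$ and connectivity enter. But the argument does not close, for exactly the reason you flag at the end: when a nonconstant zero-energy mode $\Phi$ exists, killing the kernel with two pins requires $\Phi_{v_1}\ne\Phi_{v_2}$ for an \emph{arbitrary} pair of pinned vertices, i.e.\ injectivity of $\Phi$ on the vertex set. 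Facewise $\alpha_T\ne 0$ only gives local injectivity with positive orientation, and an orientation-preserving, locally injective piecewise-linear map of a disk need not be globally injective (its boundary image can self-overlap). Theorem~\ref{thm: Riemann} cannot rescue this: it concerns the continuous Beltrami equation, and the discrete piecewise-linear kernel element is not the restriction of the continuous solution, so no injectivity transfers. As written, your proof establishes the proposition only under the additional, unproved hypothesis that every nonconstant zero-energy mode separates the pinned vertices; that is precisely the delicate step, not a formality.

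The paper avoids the issue entirely by arguing on the matrix rather than on the kernel: following the LSCM argument of \cite{levy2002least}, it builds the triangulation incrementally from the pinned vertices by ``glue'' and ``join'' operations and checks that each step preserves full column rank of $\mathcal{M}_f$ --- a join adds only a row, while a glue adds a row and a column whose single nonzero entry $W_{j,T}$ is nonzero because $\|\mu\|_\infty<1$ and the triangle is non-degenerate (the same local fact as your lemma, used purely locally). That bookkeeping never requires any global statement about zero-energy maps, which is what makes it go through. To salvage your kernel approach you would need a discrete injectivity theorem for facewise-linear solutions of the Beltrami equation, which is substantially harder than the proposition itself; the incremental construction is the right route here.
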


\begin{proof}
   The idea is that start with the pinned points, we can construct the triangular mesh $\mathcal{T}$ incrementally via two operations:
     \begin{enumerate}[leftmargin=30pt]
        \item glue: add a new vertex and connect it to two neighboring vertices.
        \item join: connect two existing vertices.
    \end{enumerate}
    
    Correspondingly, the matrix $\mathcal{M}$ is constructed incrementally as well. Let $\mathcal{M}^{(i)} \in \mathbb{C}^{|\mathcal{F}_i| \times |\mathcal{V}_i|}$ be the $\mathcal{M}$ based on the constructed triangulation after $i-$th  step.  Two key observations are that $|\mathcal{F}_i|\geq |\mathcal{V}_i|-2$ and the rank of $\mathcal{M}_f^{(i)}$ is always  its column number. If the $(i+1)-$th step is a joining operation, then compared to $\mathcal{M}_f^{(i)}$, $\mathcal{M}_f^{(i+1)}$ just has one more row, and joining has no affect on the rank.  If the $(i+1)-$th step is a gluing operation, then $\mathcal{M}_f^{(i+1)}$ has one more row and column, and the new column has only one nonzero element in the last row thus the rank still equals to the column number. The premise of the argument is that for each triangle $T$, correspond $W_{j,T}$ are nonzero. This is still true when $\| \mu \|_{\infty} <1 $. Let $\mu = \rho + \sqrt{-1}\tau$ and $G(\mu_T) = \begin{pmatrix}
        1+\rho_T & \tau_T \\
        \tau_T & 1-\rho_T
    \end{pmatrix} $, then $W_{j,T} = \begin{pmatrix}
        1 & \sqrt{-1}
    \end{pmatrix}G(\mu_T)\begin{pmatrix}
        x_{j+2}-x_{j+1} \\ y_{j+2}-y_{j+1}
    \end{pmatrix}$. Since $\| \mu \|_{\infty} <1$ and $\mathcal{T}$ is a non-degenerate triangulation, then $G(\mu_T)$ is non-singular and $W_{j,T}$ are all nonzero. Last, since $\mathcal{M}_f$ is a full-rank complex matrix, $\mathcal{A}$ also has full rank.
\end{proof}
A natural corollary is the existence and uniqueness of minimizer of LSQC energy with $p\geq 2$ points pinned, given by $\mathbf{x}=(\mathcal{A}^\top \mathcal{A})^{-1}\mathcal{A}^\top \mathbf{b}$. On the other hand, pinning too many points would overdetermine the system and the solution may not be compatible with the provided BCs. Therefore, it is preferred to only pin two points that are far from each other to avoid an unnatural local scale change. Consequently, if $|\mathcal{F}|=|\mathcal{V}|-2$ the numerical result $\mathbf{U}$ is an analytic solution to the Beltrami equation, provided face-wise constant $\mu$.

Besides, suppose the provided face-wise constant BC is induced by a face-wise linear homeomorphism, the LSQC algorithm can reconstruct the homeomorphism given the induced BC and two points of the homeomorphism.
\begin{corollary}
    Suppose $\mathcal{T}$ is a triangular mesh, $F$ is a face-wise linear homeomorphism of $\mathcal{T}$ and $F(p_1)=q_1, F(p_2)=q_2$, let $\mu$ be the face-wise constant BC induced by $F$, then the solution of the LSQC system derived by $\mu$ and pinning $F(p_1)=q_1, F(p_2)=q_2$ is exactly the $F$.
\end{corollary}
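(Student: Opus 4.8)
The plan is to exhibit $F$ itself as a zero-energy point of $E_{\text{LSQC}}$ that already satisfies the pinning constraints, and then to invoke the full-rank Proposition to force the unique least-squares solution to coincide with it. The essential difference from the previous corollary is that here I assume \emph{nothing} about the ratio $|\mathcal{F}|$ versus $|\mathcal{V}|-2$, so $\mathcal{A}$ need not be a square matrix and I cannot simply write $\mathbf{U}=\mathcal{A}^{-1}\mathbf{b}$. The argument must instead go through the overdetermined normal equations. The key realization is that, precisely because $\mu$ is induced by a genuine face-wise linear homeomorphism, the (generally overdetermined) system $\mathcal{A}\mathbf{u}=\mathbf{b}$ is in fact \emph{consistent}, and consistency together with full column rank is enough to pin down a single solution.

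First I would verify the hypotheses needed to apply the earlier results. Since $F$ is an orientation-preserving face-wise linear homeomorphism, its Jacobian is positive and constant on every triangle, so by the equivalence $|\mu(z)|<1 \iff \det D_F(z)>0$ established above, the induced coefficient satisfies $|\mu|<1$ on each face; as the mesh is finite, this yields $\|\mu\|_{\infty}<1$, and $\mu$ is face-wise constant by construction. The Proposition then applies with the $p=2$ pinned vertices $p_1,p_2$, so $\mathcal{A}$ has full column rank and $\mathcal{A}^{\top}\mathcal{A}$ is invertible.

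Next I would show $E_{\text{LSQC}}(F)=0$. By the definition of the induced BC, on each face $T$ the gradient of $F$ satisfies the Beltrami equation $(1-\mu)F_x+i(1+\mu)F_y=0$ exactly. Substituting $\mathbf{U}=F$ into the face-wise form of the energy therefore makes every summand vanish, so $E_{\text{LSQC}}(F)=\|\mathcal{A}\mathbf{u}_F-\mathbf{b}\|^2=0$, where $\mathbf{u}_F$ denotes the stacked real and imaginary free-vertex images of $F$. Hence $\mathcal{A}\mathbf{u}_F=\mathbf{b}$; that is, $F$ is an exact, zero-residual solution of the linear system, and it already respects the pinning $F(p_1)=q_1,\;F(p_2)=q_2$ that is encoded into $\mathbf{b}$.

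Finally I would close the argument through the normal equations. The LSQC solver returns $\mathbf{u}^{\ast}=(\mathcal{A}^{\top}\mathcal{A})^{-1}\mathcal{A}^{\top}\mathbf{b}$, which is well defined by the full-rank step above. Substituting $\mathbf{b}=\mathcal{A}\mathbf{u}_F$ gives $\mathbf{u}^{\ast}=(\mathcal{A}^{\top}\mathcal{A})^{-1}\mathcal{A}^{\top}\mathcal{A}\mathbf{u}_F=\mathbf{u}_F$, so the solver reproduces $F$ on the free vertices and agrees with $F$ on the pinned vertices by construction; therefore the LSQC solution is exactly $F$. The one delicate point, and the place where this statement genuinely differs from a restatement of the previous corollary, is recognizing that the overdetermined system is consistent; once $E_{\text{LSQC}}(F)=0$ certifies a true solution, full column rank upgrades ``a solution'' to ``the solution,'' and the apparent obstacle posed by $|\mathcal{F}|>|\mathcal{V}|-2$ dissolves.
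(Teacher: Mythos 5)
Your proposal is correct and follows essentially the same route as the paper's own proof: establish that $F$ attains zero LSQC energy because it satisfies the Beltrami equation exactly for its own induced $\mu$, note that it respects the pinning conditions, and invoke the uniqueness of the minimizer from the full-rank proposition. Your additional steps (verifying $\|\mu\|_\infty<1$ from orientation-preservation and writing out the normal equations explicitly) are just a more detailed spelling-out of the same argument.
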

\begin{proof}
    With two points pinned, the LSQC system has only one minimizer satisfying the pin-point condition. Since $\mu$ is induced by $F$,  $(1-\mu)F_x+\sqrt{-1}(1+\mu)F_y = 0$ on each face and $E_{\text{LSQC}}(F)=0$. Meanwhile, $F$ satisfies the pin-point condition, and thus it is the solution to the LSQC system.
\end{proof}

In addition, the solution is invariant by a similarity transformation, which fits the implication of the composition formula $\mu_{g \circ f} = \frac{\mu_{f}+\overline{f_{z}}/f_{z}(\mu_g \circ f)}{1+\overline{f_z}/f_z\overline{\mu_f}(\mu_g \circ f)}$ that post-composition of a conformal mapping $g$ does not change the BCs of the original function $f$.
\begin{proposition}
\label{prop: independent of similarity transformation}
    Given $\mu$ and pinned points $\mathbf{U}_p$ and denote the corresponding free-point solution by $\mathbf{U}_{f}$, if the pinned points are transformed to $z\mathbf{U}_p+\mathbf{T}$ where $z \in\mathbb{C}, \mathbf{T} = (z',\ldots,z')^\top\in \mathbb{C}^{|\mathcal{V}|}$, then the corresponding free-point solution is $z\mathbf{U}_f+\mathbf{T}$.
\end{proposition}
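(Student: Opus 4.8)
The plan is to work entirely with the complex quadratic form $E_{LSQC}(\mathbf{U}) = \|\mathcal{M}\mathbf{U}\|^2 = \sum_{T\in\mathcal{T}}\frac{1}{d_T}\big|W_{1,T}U_{1,T}+W_{2,T}U_{2,T}+W_{3,T}U_{3,T}\big|^2$ and to determine how it behaves under the affine map $\mathbf{U}\mapsto z\mathbf{U}+\mathbf{T}$. Once the energy is shown to transform by a single harmless global factor, the conclusion follows by combining this with the uniqueness of the minimizer guaranteed by the preceding Proposition (full rank of $\mathcal{A}$).

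First I would record two elementary invariance properties of the energy. The first is translation invariance: summing the three face weights gives $W_{1,T}+W_{2,T}+W_{3,T}=(1+\mu)\big[(x_3-x_2)+(x_1-x_3)+(x_2-x_1)\big]+\sqrt{-1}(1-\mu)\big[(y_3-y_2)+(y_1-y_3)+(y_2-y_1)\big]=0$ on every face, so that adding the same constant $z'$ to all vertices cancels term by term and leaves each face contribution unchanged; equivalently $\mathcal{M}\mathbf{T}=0$. The second is complex homogeneity: because $\mathcal{M}$ is $\mathbb{C}$-linear, $E_{LSQC}(z\mathbf{U})=\|z\mathcal{M}\mathbf{U}\|^2=|z|^2 E_{LSQC}(\mathbf{U})$. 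Combining the two yields $E_{LSQC}(z\mathbf{U}+\mathbf{T})=|z|^2 E_{LSQC}(\mathbf{U})$ for every configuration $\mathbf{U}$.

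The core step is a change of variables on the feasible set. Assume $z\neq 0$ (otherwise the map is not a similarity). Any configuration $\mathbf{V}$ admissible for the transformed problem has pinned part $\mathbf{V}_p=z\mathbf{U}_p+\mathbf{T}_p$; setting $\mathbf{W}=(\mathbf{V}-\mathbf{T})/z$ gives $\mathbf{W}_p=\mathbf{U}_p$, so $\mathbf{W}$ is admissible for the original problem, and $\mathbf{V}\mapsto\mathbf{W}$ is a bijection between the two feasible sets. By the identity above, $E_{LSQC}(\mathbf{V})=|z|^2 E_{LSQC}(\mathbf{W})$, hence minimizing over $\mathbf{V}$ is equivalent to minimizing over $\mathbf{W}$. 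Since the unique original minimizer has free part $\mathbf{U}_f$, the unique transformed minimizer is $z\mathbf{U}+\mathbf{T}$, whose free part is $z\mathbf{U}_f+\mathbf{T}$ (with $\mathbf{T}$ understood restricted to the free indices), as claimed.

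The main obstacle is not any lengthy computation but making the structural observation cleanly: the row-sum identity $\sum_j W_{j,T}=0$ is precisely what encodes translation invariance, and it is the \emph{complex} (not merely real) linearity of $\mathcal{M}$ that lets rotation and scaling collapse into the single scalar $|z|^2$, preserving the argmin. One must also explicitly invoke the uniqueness from the Proposition to upgrade the constructed candidate from \emph{a} minimizer to \emph{the} minimizer, and note the harmless standing assumption $z\neq 0$.
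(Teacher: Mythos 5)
Your proof is correct, and it is organized differently from the paper's. The paper splits the similarity into three separate cases: for a unit-modulus factor $z=e^{i\theta}$ it runs a two-sided comparison of minima (sandwiching $\min E(\cdot,\mathbf{U}_p)$ against $\min E(\cdot,z\mathbf{U}_p)$ using $z^*z=1$), for a real scaling it invokes the explicit normal-equations formula $(\mathcal{A}^\top\mathcal{A})^{-1}\mathcal{A}^\top(z\mathbf{b})=z\mathbf{U}_f$, and for the translation it observes $(1,\ldots,1)^\top\in\ker\mathcal{M}$. You instead prove the single identity $E_{LSQC}(z\mathbf{U}+\mathbf{T})=|z|^2E_{LSQC}(\mathbf{U})$ from complex linearity of $\mathcal{M}$ together with the row-sum cancellation $\sum_j W_{j,T}=0$, and then transport the minimizer through the affine bijection $\mathbf{V}\mapsto(\mathbf{V}-\mathbf{T})/z$ between the two feasible sets, finishing with the uniqueness supplied by the full-rank proposition. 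This buys a uniform treatment of an arbitrary nonzero complex factor in one stroke (the paper's rotation and real-scaling cases are both subsumed, and strictly speaking the paper never explicitly assembles a general $z=|z|e^{i\theta}$ from its two sub-cases, whereas your argument needs no such assembly), at the cost of having to state the standing assumption $z\neq 0$ and to be explicit that the change of variables preserves the pinned-point constraint. Both arguments ultimately rest on the same two structural facts: constants lie in $\ker\mathcal{M}$, and the energy is a squared norm of a $\mathbb{C}$-linear image, so scaling only rescales the objective and cannot move the argmin.
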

\begin{proof}
    If $z=e^{i\theta}$ with $ \theta \in \mathbb{R}$, then 
    \begin{align*}
        \operatorname{min}_{\tilde{\mathbf{U}}}E_{\text{LSQC}}( \tilde{\mathbf{U}}, \mathbf{U}_p) & = E_{\text{LSQC}}(\mathbf{U}_f, \mathbf{U}_p)  =  (\mathcal{M}\mathbf{U})^*\mathcal{M}\mathbf{U} \\ & = z^*z(\mathcal{M}\mathbf{U})^*\mathcal{M}\mathbf{U}  = E_{\text{LSQC}}(z\mathbf{U}_{f}, z\mathbf{U}_p) \\
        &\geq \operatorname{min}_{\tilde{\mathbf{U}}}E_{\text{LSQC}}( \tilde{\mathbf{U}}, z\mathbf{U}_p)
    \end{align*}
    On the other hand, denote the $\operatorname{argmin}_{\tilde{\mathbf{U}}}E_{\text{LSQC}}( \tilde{\mathbf{U}}, z\mathbf{U}_p)$ by $\hat{\mathbf{U}}_{f}$, we have 
    \[\operatorname{min}_{\tilde{\mathbf{U}}}E_{\text{LSQC}}( \tilde{\mathbf{U}}, z\mathbf{U}_p) =
    E_{\text{LSQC}}(z^*\hat{\mathbf{U}}_{f}, \mathbf{U}_p) \geq \operatorname{min}_{\tilde{\mathbf{U}}}E_{\text{LSQC}}( \tilde{\mathbf{U}}, \mathbf{U}_p)
    \]
    Therefore, $\operatorname{min}_{\tilde{\mathbf{U}}}E_{\text{LSQC}}( \tilde{\mathbf{U}}, \mathbf{U}_p)=\operatorname{min}_{\tilde{\mathbf{U}}}E_{\text{LSQC}}( \tilde{\mathbf{U}}, z\mathbf{U}_p) $ and $\hat{\mathbf{U}}_{f}=z\mathbf{U}_{f}$.
    If $z \in \mathbb{R}$,  then the minimizer is  $(\mathcal{A}^\top \mathcal{A})^{-1}\mathcal{A}^\top (z\mathbf{b}) = z (\mathcal{A}^\top \mathcal{A})^{-1}\mathcal{A}^\top \mathbf{b} = z\mathbf{U}_f$.
    Lastly, the invariance by a translation is followed by the observation that 
    $\sum_{i=1}^{3}w_{i,T}=0$ and $(1,1, \ldots, 1)^\top \in \text{ker}\mathcal{M}$.
\end{proof}

In all, LSQC also has its own properties related to solution's uniqueness and existence given points pinned, as well as invariant by a family of transformations, which is very analogous to Theorem \ref{thm: Riemann}. Moreover, the solution of the minimization problem is independent of the resolution of the mesh.
\begin{proposition}
\label{prop: resolution independence}
    Suppose we have a triangulation $\mathcal{T}=(\mathcal{V},\mathcal{F})$ with face-wise constant $\mu = \{\mu_{T}: T \in \mathcal{F}\}$, and pinned points $\mathbf{U}_p$, and denote the corresponding solution by $\mathbf{U} = (\mathbf{U}_f, \mathbf{U}_p)$. Let $T=(\mathbf{v}_1,\mathbf{v}_2,\mathbf{v}_3)$ be a face in $T$ and we split $T$ into three triangles with $\mathbf{v}=\sum_{i=1}^{3}\alpha_j\mathbf{v}_j~ (\alpha_j>0)$ being the introduced vertex and $T_i$ being the new triangles not containing $\mathbf{v}_i$ as vertex. If $\mu_{T_i}=\mu_{T}~(i=1,2,3)$ and BCs on other faces remain, then the minimizer of this new minimization problem is $\mathbf{U}^{+}=(\mathbf{U}^\top, \mathbf{U}_{v})^\top$ where $\mathbf{U}_v=\sum_{i=1}^{3}\alpha_i\mathbf{U}_i$.
\end{proposition}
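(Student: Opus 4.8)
The plan is to exploit the additive, per-face structure of the discrete energy. Writing $E_{LSQC}(\mathbf{U}) = \sum_{T' \in \mathcal{F}} E_{T'}$ with single-face contribution $E_{T'} = d_{T'}\,\big|\big((1-\mu)f_x + i(1+\mu)f_y\big)\big|_{T'}\big|^2$ (which depends only on the images of that face's three vertices), I note that subdividing $T$ into $T_1,T_2,T_3$ leaves every other face untouched. Hence comparing the refined problem to the original reduces entirely to comparing the one term $E_T$ with the three terms $E_{T_1}+E_{T_2}+E_{T_3}$, viewed as functions of the shared corner values $\mathbf{U}_1,\mathbf{U}_2,\mathbf{U}_3$ and the new interior value $\mathbf{U}_v$. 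Throughout I abbreviate $L(\nabla f):=(1-\mu)f_x+i(1+\mu)f_y$, so that $E_{T'}=d_{T'}|L(\nabla f)|_{T'}|^2$.

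First I would check that the proposed $\mathbf{U}^{+}$ attains the original optimal energy. Since $\mathbf{v}=\sum_i\alpha_i\mathbf{v}_i$ is a convex combination ($\sum_i\alpha_i=1$), the choice $\mathbf{U}_v=\sum_i\alpha_i\mathbf{U}_i$ forces the piecewise-linear map $f^{+}$ to coincide on each child $T_i$ with the single affine interpolant $\bar f$ of $\mathbf{U}_1,\mathbf{U}_2,\mathbf{U}_3$ on $T$; in particular $\nabla f^{+}\equiv\nabla\bar f$ is constant across all three children. Because $\mu_{T_i}=\mu_T$ and twice-areas add, $\sum_i E_{T_i}=\big(\sum_i d_{T_i}\big)|L(\nabla\bar f)|^2=d_T|L(\nabla\bar f)|^2=E_T$, giving $E^{+}_{LSQC}(\mathbf{U}^{+})=E_{LSQC}(\mathbf{U})$.

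The heart of the argument, and the step I expect to be the main obstacle, is the single-triangle claim: for \emph{arbitrary} fixed corner values, the map $\mathbf{U}_v\mapsto E_{T_1}+E_{T_2}+E_{T_3}$ is minimized exactly when $f^{+}$ is affine on $T$, with minimum value $E_T$. To prove it I would set $g=f^{+}-\bar f$, a continuous piecewise-linear field vanishing at all three corners, hence $g\equiv 0$ on $\partial T$. Writing $L(\nabla f^{+})|_{T_i}=L(\nabla\bar f)+L(\nabla g)|_{T_i}$ and summing the children's energies yields
\begin{align*}
\sum_i E_{T_i} = d_T|L(\nabla\bar f)|^2 &+ 2\,\mathrm{Re}\Big(\overline{L(\nabla\bar f)}\sum_i d_{T_i}L(\nabla g)|_{T_i}\Big) \\
&+ \sum_i d_{T_i}\,|L(\nabla g)|_{T_i}|^2.
\end{align*}
The key observation is that the linear cross term vanishes: since $L$ is linear in the gradient and $\sum_i d_{T_i}\nabla g|_{T_i}=2\int_T\nabla g\,dA=2\oint_{\partial T}g\,\mathbf{n}\,ds=0$ by the divergence theorem (using $g|_{\partial T}=0$), the middle term equals $2\,\mathrm{Re}(\overline{L(\nabla\bar f)}\cdot 0)=0$. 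The remaining quadratic term is nonnegative and vanishes iff $g\equiv 0$, i.e. $\mathbf{U}_v=\sum_i\alpha_i\mathbf{U}_i$, establishing the claim.

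Finally I would assemble these into a global comparison. Let $\mathbf{U}^{+*}$ be the refined minimizer, which is unique because the refined mesh is still connected, dangling-free, and carries the same $p\geq 2$ pinned points, so the full-rank proposition proved above applies; let $\mathbf{W}$ be its restriction to the original vertices. Then
\[
E^{+}_{LSQC}(\mathbf{U}^{+*})=\sum_{T'\neq T}E_{T'}(\mathbf{W})+\sum_i E_{T_i}(\mathbf{W},\mathbf{U}_v^{*})\ \geq\ \sum_{T'\neq T}E_{T'}(\mathbf{W})+E_T(\mathbf{W})=E_{LSQC}(\mathbf{W})\ \geq\ E_{LSQC}(\mathbf{U}),
\]
where the first inequality is the single-triangle claim and the last holds because $\mathbf{W}$ is feasible for the original problem (it carries the prescribed pins) while $\mathbf{U}$ is its minimizer. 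Combined with $E_{LSQC}(\mathbf{U})=E^{+}_{LSQC}(\mathbf{U}^{+})\geq E^{+}_{LSQC}(\mathbf{U}^{+*})$, every inequality must be an equality, and uniqueness of the refined minimizer forces $\mathbf{U}^{+*}=\mathbf{U}^{+}$, as claimed.
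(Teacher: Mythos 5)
Your proof is correct, but it takes a genuinely different route from the paper's. The paper verifies the first-order optimality (normal) equations of the refined least-squares system directly: it writes out the augmented blocks $\mathcal{M}_f^{+},\mathcal{M}_p^{+}$ with the new rows $\mathcal{L},\mathcal{P}$, and checks entry by entry, using the identity $\sum_j f_j=0$, that $(\mathbf{U}^\top,\sum_i\alpha_i\mathbf{U}_i)^\top$ solves them; this forces a case analysis over how many of the three corners of $T$ are pinned ($n=0,1,2,3$). You instead argue at the level of the energy: (i) the affine extension preserves the per-face energy because $\sum_i d_{T_i}=d_T$ and the gradient is unchanged; (ii) a local Galerkin-orthogonality lemma — the bubble perturbation $g$ vanishing on $\partial T$ has $\sum_i d_{T_i}\nabla g|_{T_i}=2\int_T\nabla g=0$, so the cross term drops and $\sum_i E_{T_i}\ge E_T$ for any $\mathbf{U}_v$; (iii) a sandwich between the two optimal values closed by the uniqueness of the refined minimizer (which does hold, since the refined mesh is still connected and dangling-free with the same $p\ge 2$ pins, so the earlier full-rank proposition applies). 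Your argument buys a case-free, more conceptual proof that explains \emph{why} refinement is neutral (the new degree of freedom is energy-orthogonal to affine maps), at the price of leaning on the uniqueness result; the paper's computation is more self-contained at the matrix level but considerably heavier. One small point worth tightening: your claim that the quadratic remainder vanishes \emph{iff} $g\equiv 0$ needs the observation that $(1-\mu)a+i(1+\mu)b=(a+ib)-\mu\overline{(a+ib)}\neq 0$ for real $(a,b)\neq(0,0)$ when $|\mu|<1$, so that $L(c\,\nabla\phi_{\mathbf v}|_{T_i})=0$ forces $c=0$; however, the ``only if'' direction is not actually needed for your conclusion, since uniqueness of the refined minimizer already identifies it with your candidate once the energies agree.
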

\begin{proof}
    There are four cases that $n~(n=0,1,2,3)$  of 3 vertices in $T$ are pinned points. The case that $n=3$ is trivial and let's first look into the case that $n=0$. In this case, inserting $v$ would augment $\mathcal{M}_f$ by two rows and one columns (one triangle become three triangles and add one more free point), and augment $\mathcal{M}_p$ by one null row. Denote the new blocked matrices by $\mathcal{M}_{f}^{+}$ and $\mathcal{M}_{p}^{+}$, and structure of these four matrices are:
    $$\mathcal{M}_f =
\left(
\begin{array}{ccccc}
& & &  & \\
    & & \mathcal{N}_f  & & \\ 
    & & &  & \\
    \hdashline
   & & \mathcal{F} & &
\end{array}
\right),\quad
\mathcal{M}_f^+ =
\left(
\begin{array}{ccc:c}
    &\mathcal{N}_f  &  & \begin{array}{c} 0 \\ \vdots \\ 0 \end{array} \\ \hdashline
   & \mathcal{L} & &\mathcal{P}
\end{array}
\right),
$$
$$\mathcal{M}_p =
\left(
\begin{array}{ccc}
& &  \\
    & \mathcal{N}_p &  \\ 
    & &   \\
    \hdashline
   &  \mathbf{0} & 
\end{array}
\right),\quad
\mathcal{M}_p^+ =
\left(
\begin{array}{ccc}
& &  \\
    & \mathcal{N}_p &  \\ 
    & &   \\
    \hdashline
   &  \mathbf{0}  & \\
   &  \mathbf{0}  & \\
   &  \mathbf{0}  & \\
\end{array}
\right).
$$
where $\mathcal{N}_f \in \mathbb{C}^{(|\mathcal{F}|-1) \times (|\mathcal{V}|-|P|)}$, $\mathcal{N}_p \in \mathbb{C}^{(|\mathcal{F}|-1) \times |P|}$,  $\mathcal{F} \in \mathbb{C}^{1 \times (|\mathcal{V}|-|P|)}$ with three nonzero elements $f_j=W_{j,T}/\sqrt{d_T}$, $\mathcal{L}$ and $\mathcal{P}$ both have three rows with nonzero elements defined as:
\[
l_{ij} = w_{j,T_{i}} = \frac{1}{\sqrt{\alpha_i}}(\alpha_if_j-\alpha_jf_i), ~p_i=\frac{f_i}{\sqrt{\alpha_i}}.
\]
Since $\mathbf{U}_f = -(\mathcal{M}_f^*\mathcal{M}_f)^{-1}\mathcal{M}_{f}^*\mathcal{M}_p\mathbf{U}_p$, we have  
\begin{align*}
    \mathcal{M}_f^*\mathcal{M}_f\mathbf{U}_f &=\left(\begin{array}{c:c}
 \mathcal{N}_f^*  & \mathcal{F}^* 
\end{array}
\right)
\left(\begin{array}{c}
 \mathcal{N}_fU_f \\ 
 \sum\limits_{j=1}^{3}f_j\mathbf{U}_j
\end{array}
\right) = \mathcal{N}_f^* \mathcal{N}_f\mathbf{U}_f+ (\sum\limits_{j=1}^{3}f_j\mathbf{U}_j)\mathcal{F}^*=-\mathcal{M}_f^*\mathcal{M}_p\mathbf{U}_p 
\end{align*}
To prove the statement, we need to show ${\mathcal{M}_f^{+}}^*\mathcal{M}_{f}^{+} \begin{pmatrix}
    \mathbf{U}_f\\
    \mathbf{U}_v
\end{pmatrix} = -{\mathcal{M}_{f}^+}^*\mathcal{M}^{+}_p\mathbf{U}_p$. And for the left hand side, 
\begin{align*}
    {\mathcal{M}_f^{+}}^*\mathcal{M}_{f}^{+} \begin{pmatrix}
    \mathbf{U}_f\\
    \mathbf{U}_v
\end{pmatrix} &= {\mathcal{M}_f^{+}}^* \begin{pmatrix}
    \mathcal{N}_f\mathbf{U}_f\\
    \mathcal{L}\mathbf{U}_f+(\sum_i\alpha_i\mathbf{U}_i)\mathcal{P}
\end{pmatrix} = {\mathcal{M}_f^{+}}^* \begin{pmatrix}
    \mathcal{N}_f\mathbf{U}_f\\
    \sqrt{\alpha_1}\sum_if_i\mathbf{U}_i \\
    \sqrt{\alpha_2}\sum_if_i\mathbf{U}_i \\
    \sqrt{\alpha_3}\sum_if_i\mathbf{U}_i  
\end{pmatrix} \\
&=\begin{pmatrix}
    \mathcal{N}_f^*\mathcal{N}_f\mathbf{U}_f\\
    0
\end{pmatrix}+ \begin{pmatrix}
    \mathcal{L}^*\\
    \mathcal{P}^*
\end{pmatrix}\begin{pmatrix}
    \sqrt{\alpha_1}\sum_if_i\mathbf{U}_i \\
    \sqrt{\alpha_2}\sum_if_i\mathbf{U}_i \\
    \sqrt{\alpha_3}\sum_if_i\mathbf{U}_i
\end{pmatrix}
\end{align*}
and ${\mathcal{M}_{f}^+}^*\mathcal{M}^{+}_p\mathbf{U}_p= \begin{pmatrix}
    \mathcal{M}_{f}^*\mathcal{M}_p\mathbf{U}_p\\
    0
\end{pmatrix}$. So, we are left to prove\[
\begin{pmatrix}
    \mathcal{L}^*\\
    \mathcal{P}^*
\end{pmatrix}
\begin{pmatrix}
    \sqrt{\alpha_1}\sum_if_i\mathbf{U}_i \\
    \sqrt{\alpha_2}\sum_if_i\mathbf{U}_i \\
    \sqrt{\alpha_3}\sum_if_i\mathbf{U}_i
\end{pmatrix}= \begin{pmatrix}
    (\sum\limits_{j=1}^{3}f_j\mathbf{U}_j)\mathcal{F}^*\\
    0
\end{pmatrix}. 
\] For this, we have 
\begin{align*}
    \sqrt{\alpha_2}(\sum_{i=1}^3f_i\mathbf{U}_i)l_{21}^*+\sqrt{\alpha_3}(\sum_{i=1}^3f_i\mathbf{U}_i)l_{31}^* &=(\sum_{i=1}^3f_i\mathbf{U}_i)(\alpha_2 f_1^*-\alpha_1f_2^*+\alpha_3f_1^*-\alpha_1f_3^* )\\
    &=(\sum_{i=1}^3f_i\mathbf{U}_i)(\alpha_2 f_1^* + \alpha_3 f_1^*+\alpha_1 f_1^*) = (\sum_{i=1}^3f_i\mathbf{U}_i) f_1^*
\end{align*}
where the last second equality is due to $\sum\limits_{i=1}^{3}f_i=0$. Similarly, we have 
\begin{align*}
    \sqrt{\alpha_3}(\sum_{i=1}^3f_i\mathbf{U}_i)l_{32}^*+\sqrt{\alpha_1}(\sum_{i=1}^3f_i\mathbf{U}_i)l_{12}^* &= (\sum_{i=1}^3f_i\mathbf{U}_i) f_2^*\\
    \sqrt{\alpha_1}(\sum_{i=1}^3f_i\mathbf{U}_i)l_{13}^*+\sqrt{\alpha_2}(\sum_{i=1}^3f_i\mathbf{U}_i)l_{23}^* &= (\sum_{i=1}^3f_i\mathbf{U}_i) f_3^*.
\end{align*} Lastly, $P^*\begin{pmatrix}
    \sqrt{\alpha_1}\sum_if_i\mathbf{U}_i \\
    \sqrt{\alpha_2}\sum_if_i\mathbf{U}_i \\
    \sqrt{\alpha_3}\sum_if_i\mathbf{U}_i
\end{pmatrix} = (\sum_if_i\mathbf{U}_i)(\sum_if_i^*)=0$.
For the case that $n=1$, we assume $\mathbf{v}_1$ to be the pinned points. Structures of the four blocked matrices are:
$$\mathcal{M}_f =
\left(
\begin{array}{ccccc}
& & &  & \\
    & & \mathcal{N}_f  & & \\ 
    & & &  & \\
    \hdashline
   \mathbf{0}& f_2 & \mathbf{0} &f_3 &\mathbf{0}
\end{array}
\right),\quad
\mathcal{M}_f^+ =
\left(
\begin{array}{ccc:c}
    &\mathcal{N}_f  &  & \begin{array}{c} 0 \\ \vdots \\ 0 \end{array} \\ \hdashline
   & \mathcal{L} & &\mathcal{P}
\end{array}
\right),
$$
$$\mathcal{M}_p =
\left(
\begin{array}{ccc}
& &  \\
    & \mathcal{N}_p &  \\ 
    & &   \\
    \hdashline
   \mathbf{0}& f_1  & \mathbf{0}
\end{array}
\right),\quad
\mathcal{M}_p^+ =
\left(
\begin{array}{ccc}
& &  \\
    & \mathcal{N}_p &  \\ 
    & &   \\
    \hdashline
   & \mathbf{0}& \\
   \mathbf{0}& l_{21}  & \mathbf{0} \\
  \mathbf{0}& l_{31}  & \mathbf{0} \\
\end{array}
\right).
$$
where the last three rows in $\mathcal{M}_f^{+}$ and $\mathcal{M}_p^{+}$ from top to bottom corresponds to $T_1, T_2, T_3$. In this case, three rows of $\mathcal{L}$ have 2,1 and 1 nonzero entries which are $(l_{12},l_{13}), l_{23} \text{ and }l_{32}$, respectively. And we have
\begin{align*}
    \mathcal{M}_f^*\mathcal{M}_p\mathbf{U}_p &= \mathcal{N}_f^*\mathcal{N}_p\mathbf{U}_p+f_1\mathbf{v}_1 \begin{pmatrix}
        \mathbf{0}\\
        f_2^*\\
        \mathbf{0}\\
        f_3^*\\
        \mathbf{0}
    \end{pmatrix} \\
    {\mathcal{M}^+_f}^*\mathcal{M}_p^+\mathbf{U}_p &=\begin{pmatrix}
        \mathcal{N}_f^*\mathcal{N}_p\mathbf{U}_p \\
        \mathbf{0}
    \end{pmatrix} + l_{21}\mathbf{v}_1\begin{pmatrix}
        \mathbf{0}\\
        l_{23}^*\\
        \mathbf{0}\\
        p_2^* 
    \end{pmatrix} + l_{31}\mathbf{v}_1\begin{pmatrix}
        \mathbf{0}\\
        l_{32}^*\\
        \mathbf{0}\\
        p_3^* 
    \end{pmatrix}\\
    \mathcal{M}_f^*\mathcal{M}_f\mathbf{U}_f&= \mathcal{N}_f^*\mathcal{N}_f\mathbf{U}_f + (f_2\mathbf{v}_2+f_3\mathbf{v}_3)\begin{pmatrix}
        \mathbf{0}\\
        f_2^*\\
        \mathbf{0}\\
        f_3^* \\
        \mathbf{0}
    \end{pmatrix}\\
     {\mathcal{M}^+_f}^*{\mathcal{M}^+_f}\begin{pmatrix}
         \mathbf{U}_f\\
         \sum_j\alpha_j\mathbf{v}_j
     \end{pmatrix} &= {\mathcal{M}_f^{+}}^*\begin{pmatrix}
         \mathcal{N}_f\mathbf{U}_f\\
         l_{12}\mathbf{v}_2+l_{13}\mathbf{v}_3+p_1\sum_{j}\alpha_{j}\mathbf{v}_j\\
         l_{23}\mathbf{v}_3+p_2\sum_{j}\alpha_{j}\mathbf{v}_j\\
         l_{32}\mathbf{v}_2+p_3\sum_{j}\alpha_{j}\mathbf{v}_j
     \end{pmatrix}\\
     &= \left(
\begin{array}{ccc:c}
\\
    &\mathcal{N}_f  &  & \mathcal{L}^*\\ \\\hdashline
   & \mathbf{0}   & &\mathcal{P}^*
\end{array}
\right) \begin{pmatrix}
         \mathcal{N}_f\mathbf{U}_f\\
         \sqrt{\alpha_1}\sum_jf_j\mathbf{v}_j\\
         l_{23}\mathbf{v}_3+p_2\sum_{j}\alpha_{j}\mathbf{v}_j\\
         l_{32}\mathbf{v}_2+p_3\sum_{j}\alpha_{j}\mathbf{v}_j
     \end{pmatrix}
\end{align*}
Again we have $\mathcal{M}^*_f\mathcal{M}_p\mathbf{U}_p = -     {\mathcal{M}^+_f}^*\mathcal{M}_p^+\mathbf{U}_p$, to show $ {\mathcal{M}^+_f}^*{\mathcal{M}^+_f}\mathbf{U}_f = - {\mathcal{M}^+_f}^*\mathcal{M}_p^+\mathbf{U}_p$ and to validate this case, it is equivalent to show
\[
\left(
\begin{array}{c}
     \\
     \mathcal{L}^*
     \\
     \\
     \hdashline
     \mathcal{P}^*
\end{array}
\right) \begin{pmatrix}
    \sqrt{\alpha_1}\sum_jf_j\mathbf{v}_j\\
     l_{23}\mathbf{v}_3+p_2\sum_{j}\alpha_{j}\mathbf{v}_j\\
     l_{32}\mathbf{v}_2+p_3\sum_{j}\alpha_{j}\mathbf{v}_j
\end{pmatrix} + l_{21}\mathbf{v}_1\begin{pmatrix}
        \mathbf{0}\\
        l_{23}^*\\
        \mathbf{0}\\
        p_2^*         0
    \end{pmatrix} + l_{31}\mathbf{v}_1\begin{pmatrix}
        \mathbf{0}\\
        l_{32}^*\\
        \mathbf{0}\\
        p_3^* 
    \end{pmatrix} = (\sum\limits_{j=1}^{3}f_j\mathbf{v}_j)\begin{pmatrix}
        \mathbf{0}\\
        f_2^*\\
        \mathbf{0}\\
        f_3^*\\
        \mathbf{0}
    \end{pmatrix}
\]
and we have 
\begin{align*}
    &\quad (\sqrt{\alpha_1}\sum_jf_j\mathbf{v}_j)l_{12}^*+ (l_{32}\mathbf{v}_2+p_3\sum_{j}\alpha_{j}\mathbf{v}_j+l_{31}\mathbf{v}_1)l_{32}^* \\
    &= (\sum_jf_j\mathbf{v}_j)(\alpha_1f_2^*-\alpha_2f_1^*)
     + \frac{1}{\alpha_3}(\alpha_3f_2^*-\alpha_2f_3^*)\cdot\\
     &\quad \left[ (\alpha_3f_2-\alpha_2f_3)\mathbf{v}_2+f_3\sum_{j}\alpha_{j}\mathbf{v}_j+(\alpha_3f_1-\alpha_1f_3)\mathbf{v}_1\right]\\
     &= (\sum_jf_j\mathbf{v}_j)\left[(\alpha_1f_2^*-\alpha_2f_1^*) + (\alpha_3f_2^*-\alpha_2f_3^*)\right] \\
     &= (\sum_jf_j\mathbf{v}_j)(\alpha_1f_2^*+\alpha_2f_2^*+\alpha_3f_2^*) = (\sum_jf_j\mathbf{v}_j)f_2^*
\end{align*}
where the last two equalities are due to $\sum_jf_j=0$. Similarly, we have
\[
\quad (\sqrt{\alpha_1}\sum_jf_j\mathbf{v}_j)l_{13}^*+ (l_{23}\mathbf{v}_3+p_2\sum_{j}\alpha_{j}\mathbf{v}_j+l_{21}\mathbf{v}_1)l_{23}^* 
= (\sum_jf_j\mathbf{v}_j)f_3^*,\]
And for the last row,
\begin{align*}
    & \quad p_1^{*}(\sqrt{\alpha_1}\sum_jf_j\mathbf{v}_j)+p_2^*(l_{23}\mathbf{v}_3+p_2\sum_{j}\alpha_{j}\mathbf{v}_j) + p_3^*(l_{32}\mathbf{v}_2+ \\
    & \quad \quad p_3\sum_{j}\alpha_{j}\mathbf{v}_j) + l_{21}\mathbf{v}_1p_2^* + l_{31}\mathbf{v}_1p_3^*\\
    &= f_1^* (\sum_jf_j\mathbf{v}_j) + p_2^*(l_{23}\mathbf{v}_3+p_2\sum_{j}\alpha_{j}\mathbf{v}_j+l_{21}\mathbf{v}_1) +
    p_3^*(l_{32}\mathbf{v}_2+p_3\sum_{j}\alpha_{j}\mathbf{v}_j+ l_{31}\mathbf{v}_1) \\
    &= f_1^* (\sum_jf_j\mathbf{v}_j) + f_2^* (\sum_jf_j\mathbf{v}_j) + f_3^* (\sum_jf_j\mathbf{v}_j) = ( f_1 + f_2 +f_3)^*(\sum_jf_j\mathbf{v}_j)=0
\end{align*}
Using the same argument, we can easily validate the case $n=2$.
\end{proof}

This provides a theoretic support for us to compute the deformation of a finer mesh via interpolation on the coarser mesh. 

\section{Sensitivity and differentiability of the LSQC minimizer with respect to the Beltrami coefficient}
\label{sec: LSQC_sensitivity}
In the following argument, the maximum norm $\|\cdot\|_{max}$ of a matrix is defined as $\|M\|_{max}=\max_{i,j}|M_{ij}|$
\begin{theorem}
\label{thm 3.1}
    Let $\mathcal{T}$ be a shape‑regular triangulation of the unit disk with vertex set $V$ and face set $F$, namely there exists a constant $\kappa>0$ such that for every triangle $T\in F$, the radius of the inscribed circle $r_{T}$ satisfies that $$r_{T}\geq\frac{h_T}{\kappa},$$where $h_T$ is the perimeter of $T$.
Fix two distinct vertices $p_1,p_2\in V$ and their images $q_1,q_2\in\mathbb{R}^2$. For each facewise constant Beltrami coefficient field $\mu$ with $\lVert\mu\rVert_\infty\le L<1$, let $X(\mu)$ denote the coefficient vector of the unique LSQC minimizer on $T$ subject to $F(p_i)=q_i$. Then there exists a constant $C>0$, depending on $L$, $\kappa$, and the pinned configuration $(p_1,p_2,q_1,q_2)$, such that for all such $\mu,\nu$: $$
\lVert X(\mu) - X(\nu) \rVert_{2}
\;\le\; C\, \lVert \mu - \nu \rVert_{\infty},$$
where the left‑hand side is the $\ell^2$ norm of the difference of vertex positions, i.e.
$$\lVert X(\mu) - X(\nu) \rVert_{2} := \Big(\sum_{v\in V_{\text{free}}} \|X(\mu)(v) - X(\nu)(v) \|^2 \Big)^{\frac{1}{2}}$$
and the right‑hand side is the facewise $\ell^\infty$ norm of the Beltrami coefficients, i.e.
$$\lVert \mu - \nu \rVert_{\infty}:=\max_{T\in F}|\mu_{T}-\nu_{T}|$$
\end{theorem}
To prove it, we need several facts and Lemmas. The first observation is about the coefficients of $A(\mu)$. By the formula in Equation \ref{coefficient of M} we can compute the exact value of nonzero entries in ${A}_{\mathrm{full}}(\mu)$. 
For a triangle $T=(x_i,y_i)_{i=1,2,3}$,  the row corresponding to the real part, its coefficient of $u_1$ is $$\frac{1}{\sqrt{d_T}}[(1+\rho_T)(x_3-x_2)+\tau_T(y_3-y_2)],$$ and the coefficient of $v_1$ is $$\frac{1}{\sqrt{d_T}}[-\tau_T(x_3-x_2)-(1-\rho_T)(y_3-y_2)],$$ both are affine functions in $(\rho_T,\tau_T)$. 
Next, when we remove the columns corresponding to the pinned vertices we obtain $$E_{LSQC}(X)=\|A(\mu)X-b(\mu)\|_2^2,$$ where $A(\mu)\in\mathbb{R}^{m\times n}$ with $m=2|\mathcal{F}|$, $n=2(|\mathcal V|-2)$ and $b(\mu)=-A_{\mathrm{full}}(\mu)\delta(P,q)\in\mathbb{R}^m$ incorporates the contribution of the pinned vertices.
The LSQC minimizer $X(\mu)$ is the unique solution to the least‑squares problem  $$X(\mu)=\arg\min_X \|A(\mu)X-b(\mu)\|_2^2,$$ equivalently to the normal equations 
$$N(\mu)X(\mu) = c(\mu), \quad N(\mu):=A(\mu)^\top A(\mu),\quad c(\mu):=A(\mu)^\top b(\mu).$$
And following several lemmas are related to the Lipschitz dependence of $N(\mu),A(\mu),c(\mu),b(\mu)$ with respect to $\mu$ or estimate of their norm.
\begin{lemma}[uniform coercivity and boundedness of $N(\mu)$]
\label{lemma: uniform coercivity and boundedness of N}
    Let $K := \{\mu :\|\mu\|_\infty \le L\}.$ Then there exists positive constants $L_{N,1}$ and $L_{N,2}$ depending only on $L$ such that $L_{N,1} \le \|N(\mu)\|_2 \le L_{N,2}, \forall \mu \in K$
\end{lemma}
\begin{proof}
    By Proposition \ref{full rank of A} in the paper, for any $\mu$ with $\|\mu\|_\infty < 1$, the LSQC matrix $A(\mu)$ has full column rank after removing pinned columns. In particular, for each $\mu\in K$, $A(\mu)$ has rank $n$, hence $\sigma_{\min}(A(\mu))>0$. The map $\mu \mapsto A(\mu)$ is continuous as a map from the compact space $K$ into $\mathbb{R}^{m\times n}$ equipped with $\|\cdot\|_{\max}$ norm, because by the formula of matrix entries given above, each entry of $A(\mu)$ is an affine (indeed linear) function of the local coefficients $(\rho_T,\tau_T)$. It is a standard result that the smallest  $\sigma_{\min}(\cdot)$ is continuous as a function of a matrix in any matrix norm, so $\mu \mapsto \sigma_{\min}(A(\mu))$ is continuous on $K$. Since $K$ is compact and $\sigma_{\min}(A(\mu))>0$ everywhere on $K$, we have $$\sigma_\star := \inf_{\mu\in K} \sigma_{\min}(A(\mu)) > 0.$$ Similarly, the largest singular value $\sigma_{\max}(A(\mu))=\|A(\mu)\|_2$ is continuous, so $$\sigma^{\ast} := \sup_{\mu\in K} \sigma_{\max}(A(\mu)) < \infty.$$ Therefore $$\sigma_{\ast}^2 \|z\|_2^2 \le z^\top N(\mu)z \le \sigma^{\ast}\,^2 \|z\|_2^2 \quad\forall z,\ \forall\mu\in K$$ Thus $N(\mu)$ is uniformly coercive and uniformly bounded on $K$. In particular, $$\|N(\mu)^{-1}\|_2 \le \frac{1}{\sigma_{\ast}^2}=c_1 \quad \forall\mu\in K,$$ such an upper bound is a constant dependent on $L$.
\end{proof}

\begin{lemma}[Lipschitz dependence of $A(\mu)$ on $\mu$]
\label{Lipschitz dependence of A}
    Let $K := \{\mu :\|\mu\|_\infty \le L\}.$ Then there exists positive constants $L_{A}$ depending only on $\kappa$ such that $ \|A(\mu)-A(\nu)\|_2 \le L_A \|\mu-\nu\|_\infty, \forall \mu,\nu \in K$
\end{lemma}
\begin{proof}
    Recall the formula of entries in $A(\mu)$, Fix a triangle $T$ with vertices $(x_1,y_1),(x_2,y_2),(x_3,y_3)$ and area $d_T$. Let $\mu_T=\rho_T + i\tau_T,\quad \nu_T=\tilde\rho_T + i\tilde\tau_T$. Write $\Delta\rho := \rho-\tilde\rho,\quad \Delta\tau := \tau-\tilde\tau,\quad \Delta\mu := \mu_T - \nu_T$. For the coefficient of $u_1$ in the “real part row” we have: $$A(\mu)_{T,u_1} - A(\nu)_{T,u_1} = \frac{1}{\sqrt{d_T}}\Big[\Delta\rho_T\,(x_3-x_2) + \Delta\tau_T\,(y_3-y_2)\Big].$$ So 
    \begin{align*}\big|A(\mu)_{T,u_1} - A(\nu)_{T,u_1}\big| &\le \frac{1}{\sqrt{d_T}} \sqrt{(\Delta\rho_T)^2+(\Delta\tau_T)^2}\,\sqrt{(x_3-x_2)^2+(y_3-y_2)^2}\\ &= \frac{|e_{32}|}{\sqrt{d_T}}\,|\Delta\mu_T|\end{align*}
    Exactly the same inequality holds for the coefficient of $v_1$ and for the two coefficients in the “imaginary part row” of the same triangle, just with slightly different linear combinations of $\Delta\rho_T$ and $\Delta\tau_T$, but always of the form $\frac{1}{\sqrt{d_T}}\big(a\Delta\rho_T+b\Delta\tau_T\big)$, with $(a,b)$ proportional to $(x_3-x_2,y_3-y_2)$, up to signs and $\pm1$ factors. Hence they are all bounded by the same factor $\frac{|e_{32}|}{\sqrt{d_T}}|\Delta\mu_T|$.
    Similarly, the coefficients for $(u_2,v_2)$ will involve the edge $e_{13}$ and the coefficients for $(u_3,v_3)$ will involve $e_{21}$, with the same structure. Thus, for every nonzero entry of $A(\mu)$ coming from triangle $T$: $$|A(\mu)_{ij}-A(\nu)_{ij}| \le \max\left\{ \frac{|e_{32}|}{\sqrt{d_T}}, \frac{|e_{13}|}{\sqrt{d_T}}, \frac{|e_{21}|}{\sqrt{d_T}} \right\} \,|\Delta\mu|.$$
    Now, at the global level, $|\Delta\mu|\le\|\mu-\nu\|_\infty$ for all faces, so $|A(\mu)_{ij}-A(\nu)_{ij}| \le C_T\,\|\mu-\nu\|_\infty,$ with $C_T := \max\left\{ \frac{|e_{32}|}{\sqrt{d_T}}, \frac{|e_{13}|}{\sqrt{d_T}}, \frac{|e_{21}|}{\sqrt{d_T}} \right\}$. Moreover, for each edge $e$ of a triangle $T$, we have $$\frac{e}{\sqrt{d_T}}=\frac{e}{\sqrt{r_T h_T}}\le\frac{\frac{h_T}{2}}{\sqrt{r_T h_T}}\le \frac{\sqrt{\kappa}}{2}$$
    Therefore $$\|A(\mu)-A(\nu)\|_{\max} := \max_{i,j}|A(\mu)_{ij}-A(\nu)_{ij}| \le \frac{\sqrt{\kappa}}{2}\,\|\mu-\nu\|_\infty$$
    Next, for a general matrix $B$ , we have $\|B\|_2\le\sqrt{\|B\|_1\|B\|_{\infty}}$. Since there are at most 6 nonzero entries per row in $A(\mu)-A(\nu)$, $\|A(\mu)-A(\nu)\|_{\infty}\le 6 \|A(\mu)-A(\nu)\|_{\max}\leq 3\sqrt{\kappa}\|\mu-\nu\|_\infty$. 
    Besides, since $\mathcal{T}$ is $\kappa$-shape regular, the minimum angle is bounded below by a function of $\kappa$ and each vertex is at most adjacent to $C(\kappa)$ faces. Then,  $\|A(\mu)-A(\nu)\|_{1}\le C(\kappa) \|A(\mu)-A(\nu)\|_{\max}\leq C(\kappa)\frac{\sqrt{\kappa}}{2}\|\mu-\nu\|_\infty$. Combining these two, we have  
    
    $$ \|A(\mu)-A(\nu)\|_2 \le L_A\,\|\mu-\nu\|_\infty,\,L_A=\sqrt{\frac{3}{2}C(\kappa)\kappa}$$
\end{proof}

\begin{lemma}[Lipschitz dependence of $b(\mu)$ on $\mu$]
\label{Lipschitz dependence of b(mu)}
    Let $K := \{\mu :\|\mu\|_\infty \le L\}.$ There exists a constant $L_b$ depending on $\kappa,q_i$ such that $$\|b(\mu)-b(\nu)\|_2\le L_b \|\mu-\nu\|_\infty.$$
\end{lemma}
\begin{proof}
   Because each entry of $A_{\text{full}}(\mu)$ is affine in $(\rho_T,\tau_T)$, and $q_1,q_2$ are fixed, each entry of $b(\mu)$ is also an affine function of the $(\rho_T,\tau_T)$. More precisely, $$b(\mu)=-\Big(A(\mu)(:,p_{1,x})\, q_{1,x}+A(\mu)(:,p_{1,y})\, q_{1,y}+A(\mu)(:,p_{2,x})\, q_{2,x}+A(\mu)(:,p_{2,y})\, q_{2,y}\Big)$$ where $A(\mu)(:,p_{1,x})$ means the column vector in $A(\mu)$ corresponding to the variable $p_{1,x}$. Then $b(\mu)-b(\nu)=-\sum\big(A(\mu)-A(\nu)\big)(:,p_{1,x})q_{1,x}$. In the proof of Lemma \ref{Lipschitz dependence of A}, we have shown $\|A(\mu)-A(\nu)\|_{max}\le \frac{\sqrt{\kappa}}{2}\|\mu-\nu\|_{\infty}$
   Thus, for each column index $i$:  $|b(\mu)_i - b(\nu)_i| \le C_0(\kappa,q) \,\|\mu-\nu\|_\infty$ where $C_0(\kappa,q)=2\sqrt{\kappa}\,\max\{|q_{1,x}|,|q_{1,y}|,|q_{2,x}|,|q_{2,y}|\}$.
    Therefore $$\|b(\mu)-b(\nu)\|_2 \le \sqrt{4C(\kappa)}\,\|b(\mu)-b(\nu)\|_{\max} \le L_b \|\mu-\nu\|_\infty,\, L_b = 4\sqrt{C(\kappa)\kappa}\,\max(q).$$
\end{proof}

\begin{lemma}[Lipschitz dependence of $N(\mu)$ and $c(\mu)$]
\label{Lipschitz dependence of N,c}
     Let $K := \{\mu :\|\mu\|_\infty \le L\}.$ There exists a constant $L_N,L_c$ depending on $\kappa,L,q_i$ such that $$\|N(\mu)-N(\nu)\|_2 \le L_N \|\mu-\nu\|_\infty, \, \|c(\mu)-c(\nu)\|_2 \le L_c \|\mu-\nu\|_\infty .$$
\end{lemma}
\begin{proof}
    Recall: $N(\mu)=A(\mu)^\top A(\mu),\quad c(\mu)=A(\mu)^\top b(\mu)$. we have \begin{align*}N(\mu)-N(\nu)&=A(\mu)^\top A(\mu)-A(\nu)^\top A(\nu)\\ &= A(\mu)^\top(A(\mu)-A(\nu)) + (A(\mu)-A(\nu))^\top A(\nu).\end{align*}
    Hence, \begin{align} \|N(\mu)-N(\nu)\|_2 &\le \|A(\mu)^\top(A(\mu)-A(\nu))\|_2 +\|(A(\mu)-A(\nu))^\top A(\nu)\|_2 \\ &\le \|A(\mu)\|_2\,\|A(\mu)-A(\nu)\|_2+\|A(\mu)-A(\nu)\|_2 \,\|A(\nu)\|_2 \\ &= \big(\|A(\mu)\|_2+\|A(\nu)\|_2\big)\,\|A(\mu)-A(\nu)\|_2. \end{align}
    For $\|A(\mu)\|_2$, each element of $A(\mu)$ is in the form like 
    $\frac{1}{\sqrt{d_T}}[(1+\rho_T)(x_3-x_2)+\tau_T(y_3-y_2)]$, and following the idea in Lemma \ref{Lipschitz dependence of A}, we have $\|A(\mu)\|_{max}\leq \frac{1+L}{2}\sqrt{\kappa}$ and thus $$\|A(\mu)\|_{2}\leq \sqrt{6C(\kappa)}\|A(\mu)\|_{max}\leq \sqrt{\frac{3}{2}C(\kappa)\kappa}(1+L)=:M_A.$$
    so $$\|N(\mu)-N(\nu)\|_2 \le L_N\|\mu-\nu\|_\infty,\, L_N =3C(\kappa)\kappa(1+L).$$
    As for the Lipschitz bound for $c(\mu)$, we have \begin{align}c(\mu)-c(\nu) &= A(\mu)^\top b(\mu) - A(\nu)^\top b(\nu)\\ &= A(\mu)^\top\big(b(\mu)-b(\nu)\big)+\big(A(\mu)-A(\nu)\big)^\top b(\nu).\end{align}
    We already have \begin{enumerate}
        \item $\|A(\mu)\|_2\le\sqrt{6C(\kappa)}c(L)\sqrt{\kappa}$,
        \item $\|b(\mu)-b(\nu)\|_2\le L_b\|\mu-\nu\|_\infty$, (Lemma \ref{Lipschitz dependence of A})
        \item $\|A(\mu)-A(\nu)\|_2\le L_A\|\mu-\nu\|_\infty$. (Lemma \ref{Lipschitz dependence of A})
    \end{enumerate}
    It remains to bound $\|b(\nu)\|_2$ uniformly for $\nu\in K$. This is again by Cauchy Inequality:\begin{align*}\|b(\nu)\|_2&\le 4\sqrt{C(\kappa)} \Big(\max_{T}\max_{e}{\frac{e}{\sqrt{d_T}}}\Big) \sqrt{(1+\rho_T)^2+\tau_T^2}\max(q)\\ & \le 2\sqrt{C(\kappa)\kappa}(1+L)\max(q) =:M_{b}\end{align*}
Therefore, $$ \|c(\mu)-c(\nu)\|_2 \le L_c \|\mu-\nu\|_\infty, \,L_c = M_AL_b+M_bL_A=3\sqrt{6}C(\kappa)\kappa(1+L)\max(q)$$
\end{proof}
Within these Lemmas, here is the proof of the Theorem \ref{thm 3.1}.
\begin{proof}
 Since $N(\mu)X(\mu) = c(\mu), N(\nu)X(\nu) = c(\nu)$, it is straightforward to obtain   $$X(\mu)-X(\nu) = N(\mu)^{-1}\big(c(\mu)-c(\nu)\big)+N(\mu)^{-1}\big(N(\nu)-N(\mu)\big)X(\nu).$$
Therefore, $$\|X(\mu)-X(\nu)\|_2 \le \|N(\mu)^{-1}\|_2\, \Big(\|c(\mu)-c(\nu)\|_2 + \|N(\nu)-N(\mu)\|_2\,\|X(\nu)\|_2 \Big).$$
We already have 
\begin{enumerate}
    \item $\|N(\mu)^{-1}\|_2 \le 1/c_1$ (Lemma \ref{lemma: uniform coercivity and boundedness of N})
    \item $\|c(\mu)-c(\nu)\|_2 \le L_c\|\mu-\nu\|_\infty$(Lemma \ref{Lipschitz dependence of N,c})
    \item $\|N(\nu)-N(\mu)\|_2 \le L_N\|\mu-\nu\|_\infty$(Lemma \ref{Lipschitz dependence of N,c})
\end{enumerate}
The last step is to find a uniform upper bound on  $\|X(\nu)\|_2$ for $\nu\in K$. For each $\nu$,  $$X(\nu) = \operatorname{argmin}_X \|A(\nu)X - b(\nu)\|_2^2,$$ so, in particular, $$\|A(\nu)X(\nu)-b(\nu)\|_2 \le \|A(\nu)X_0 - b(\nu)\|_2 $$  for any fixed reference vector $X_0$.  (e.g., $X_0=0$ or any map consistent with pins). Thus, $$\|A(\nu)X(\nu)\|_2 \le C' \|b(\nu)\|_2.$$ for some constant $C'>0$. Meanwhile, $$\sigma_{\ast}^2\|X(\nu)\|_2^2 \le X(\nu)^\top N(\nu)X(\nu) = \|A(\nu)X(\nu)\|_2^2.$$
Since $\|b(\nu)\|_2\le M_b$, we have $$\|X(\nu)\|_2 \le \sqrt{\frac{C'}{\sigma_{\ast}^2}}M_b=:M_X$$ and $M_X$ is a constant dependent on $\kappa,L,q$. Putting everything together, we have $$\|X(\mu)-X(\nu)\|_2 \le \frac{1}{c_1} \Big(L_c + L_N M_X\Big)\,\|\mu-\nu\|_\infty.$$
\end{proof}
\begin{remark}
Theorem \ref{thm 3.1} shows that the discrete LSQC minimizer (and hence the induced mapping) depends Lipschitz continuously on the Beltrami field $\mu$. In the context of optimization, this means that if $\mu$ is used as the optimization variable, then small perturbations of $\mu$ cannot produce arbitrarily large changes in the resulting mapping. In particular, if the task loss $\mathcal{L}$ is continuous in the mapping $f$, the composed loss $\mathcal{L}(\mathcal{F}(\mu,P,Q))$ is continuous in $\mu$. This justifies optimization over $\mu$.
\end{remark}
Combined with Proposition \ref{prop: resolution independence}, we have an estimate on the error of approximating the LSQC solution on a fine mesh via interpolation on the solution on its coarser submesh. Let $\mathcal{T}_h=(V_h,F_h)$ be a triangulation and $\mathcal{T}_H=(V_H,F_H)$ be a coarse submesh such that each face $K\in F_H$ is the union of a set of fine faces, $S(K)\subset F_h,\, \bar{K}=\cup_{T\in S(K)}\bar{T}$ and $V_H \subset V_h$. Fix two pinned vertices $p_1,p_2\in V_H$ and images $q_1,q_2\in \mathbb{C}$ and assume the same pin constraints are used in both meshes. Let $\mu_h\in\mathbb C^{F_h}$ be a given fine, facewise-constant Beltrami field with $\|\mu_h\|_\infty\le L<1$. Given any complex-valued vertex data $U_H:V_H\to\mathbb C$, define its interpolation $I_{H\to h}U_H:V_h\to\mathbb C$ by:
\begin{itemize}
    \item if $v\in V_H$, $(I_{H\to h}U_H)(v)=U_H(v)$;
    \item if $v\in V_h\setminus V_H$, let $K=(v_1,v_2,v_3)\in F_H$ be the unique coarse triangle containing $v$, write barycentric coordinates $v=\alpha_1 v_1+\alpha_2 v_2+\alpha_3 v_3$, then  
    \[  
    (I_{H\to h}U_H)(v)=\alpha_1U_H(v_1)+\alpha_2U_H(v_2)+\alpha_3U_H(v_3). \]
\end{itemize}
Let $\mu_H\in\mathbb C^{F_H}$ be any coarse Beltrami field. Define its constant-on-parent prolongation to the fine faces by  
\[  
(\mathcal P\mu_H)(T) := \mu_H(K)\quad \text{for the unique }K\in F_H\text{ such that }T\in\mathcal S(K).  
\]
\begin{corollary}
    Let $U_h(\mu_h)$ be the fine-mesh LSQC minimizer under pins $p_i\mapsto q_i$ and $U_H(\mu_H)$ be the coarse-mesh LSQC minimizer under the same pins. Then, letting $C_h$ be the Lipschitz constant of Theorem \ref{thm 3.1} on the fine mesh $\mathcal T_h$, we have
\[  
\|U_h(\mu_h)-I_{H\to h}U_H(\mu_H)\|_{2}  
\le 
C_h\|\mu_h-\mathcal P\mu_H\|_{\infty}.  
\]
where $C_h$ depends on $(L,\kappa_h,p_1,p_2,q_1,q_2)$.
\end{corollary}

In practice, it is possible that a Beltrami field $\mu$ is not admissible, which means that with pinned points $p_i,q_i$ given, there does not exist a piecewise linear mapping $f$ such that $E_{LSQC}(f;\mu)=0$ and thus the Beltrami field $\widetilde{\mu}$ induced by $X^{*}(\mu)$ is not identical to $\mu$. Therefore, with pinned points $p_i,q_i$ given, it is natural to introduce a concept \textbf{Beltrami Map} $\mathcal{B}:\mathcal{S}=\{\mu:\|\mu\|_{\infty}<1\}\to \mathbb{C}^{|F|}$  which is defined as $$\mathcal{B}(\mu):=\widetilde{\mu}(X(\mu)),$$ namely $\mathcal{B}(\mu)$ is the Beltrami field induced by the LSQC minimizer $X(\mu)$. We can show that under some mild condition, the error between $\mathcal{B}(\mu)$  and $\mu$ is upper bounded by minimal LQSC energy.
\begin{theorem}
    Let $\mathcal{T}$ be a triangulation of the unit disk $\mathbb{D}$, which satisfies the regularity condition in Theorem \ref{thm 3.1}, and $f$ be the LSQC solution given Beltrami field $\mu$, pinned points $p_1,p_2$ and their images $q_1,q_2$. Suppose $\mathcal{B}(\mu) \in \mathcal{S}$ and $f$ is bijective, then \[
    \|\big(\mathcal{B}(\mu)-\mu\big)\circ f^{-1}\|_{L^2(f(\mathbb{D}))}^2 \leq E_{LSQC}(f;\mu)
    \]
\end{theorem}
\begin{proof}
\begin{align*}
E_{LSQC}(f;\mu)&=\sum Area(T)\,|f_{\bar{z}}\Big|_T-\mu_T f_{z}\Big|_T|^2= \sum Area(T)\,|f_{z}\Big|_T|^2\,|\mathcal{B}(\mu)\Big|_{T}-\mu_{T}|^2\\
&=\sum\frac{Area(f(T))}{\det(f\Big|_T)}\,|f_{z}\Big|_T|^2\,|\mathcal{B}(\mu)\Big|_{T}-\mu_{T}|^2 \\ 
&=\sum\frac{Area(f(T))}{|f_z\Big|_T|^2-|f_{\bar{z}}\Big|_T|^2}
\,|f_{z}\Big|_T|^2\,|\mathcal{B}(\mu)\Big|_{T}-\mu_{T}|^2 \\
&=\sum \frac{Area(f(T))}{1-|\mathcal{B}(\mu)\Big|_T|^2}\,|\mathcal{B}(\mu)\Big|_{T}-\mu_{T}|^2 \\
& \geq \sum Area(f(T))\,|\mathcal{B}(\mu)\Big|_{T}-\mu_{T}|^2 = \|\big(\mathcal{B}(\mu)-\mu\big)\circ f^{-1}\|_{L^2(f(\mathbb{D}))}^2
\end{align*}

\end{proof}

\begin{remark}
Let $\mu\in L^\infty(\mathbb{D})$ with
$\|\mu\|_\infty\le k<1$. Consider a family of shape-regular triangulations
$\{\mathcal T_h\}_{h\downarrow 0}$ whose union exhausts $\mathbb{D}$, and let $\mathcal V_h$ denote the
complex-valued $P^1$ finite element space on $\mathcal T_h$ with the same two-point pinning
used to fix the similarity gauge. At the continuum level, the Measurable Riemann Mapping Theorem guarantees the existence of
a (normalized) quasiconformal solution $F\in W^{1,2}_{\mathrm{loc}}(U)$ to
$F_{\bar z}=\mu F_z$ a.e. in $U$ and $ E_{\mathrm{LSQC}}(F,\mu)=0$.
If, in addition, one can choose a sequence of piecewise linear maps $F_h\in \mathcal V_h$ (for instance, suitable finite element interpolants of smooth
approximations of $F$) such that $F_h\to F$ in $W^{1,2}(U)$ while 0 and 1 always remain fixed,
then the residual converges to zero: $E_{\mathrm{LSQC}}(F_h,\mu)\;\rightarrow\;0$. At the same time, supppose the discrete coefficients $\mu_h$ are obtained from a fixed measurable field $\mu\in L^\infty(U)$ (e.g., by facewise averaging or $L^2$ projection), so that $\mu_h\to\mu$ in a suitable sense.
Consequently, the discrete minimal energy $\min_{f\in \mathcal V_h} E_{\mathrm{LSQC}}(f,\mu_h) \leq E_{\mathrm{LSQC}}(F_h,\mu_h) \approx E_{\mathrm{LSQC}}(F_h,\mu)\;\rightarrow\;0$.
Combining this with Theorem~3, we obtain the heuristic implication that under mesh refinement, even if the Beltrami coefficient $\mathcal B(\mu)$ of the discrete LSQC minimizer $f_h$ cannot recover exactly the prescribed coefficient $\mu$ in some case, their discrepancy in an $\ell^2$ sense would gradually decay to 0.
\end{remark}

Next we show the Fr$\mathrm{\acute{e}}$chet differentiability of the LSQC minimizer with respect to BC $\mu$. 

\begin{theorem}[Fr$\mathrm{\acute{e}}$chet differentiability of LSQC minimizer]
\label{differentiability of the discrete LSQC minimizer}
Suppose the triangular mesh satisfies the same regularity condition in Theorem \ref{thm 3.1}. Given pinned point index set $P$ and corresponding images $q_i$, for each \(\mu\in\mathcal{S}\), define \(\widehat X(\mu)\in\mathbb{R}^{2N-2|P|}\) as the unique minimizer of $$(\widehat X;\mu) = \frac12\|A(\mu)\widehat X + A_{\mathrm{full}}(\mu)\delta(P,q)\|_2^2,$$ Then:
\begin{enumerate}
    \item There exists a unique \(C^\infty\) map $\mu \mapsto \widehat X(\mu)\in\mathbb{R}^{2N-2|P|}$ satisfying the normal equation $$A(\mu)^T A(\mu)\,\widehat X(\mu) = -A(\mu)^T A_{\mathrm{full}}(\mu)\,\delta(P,q).$$
    \item Defining the LSQC minimizer \(f_h(\mu)\in V_{h,0}^{\mathbb{R}}\) by
 $X(f_h(\mu)) := \mathrm{lift}(\widehat X(\mu))$, the map $\mu \mapsto f_h(\mu)$ is Fr$\mathrm{\acute{e}}$chet differentiable (indeed \(C^\infty\)) as a map from \(\mathcal{S}\) into \((V_{h,0},\|\cdot\|_{0,h})\), where $ \|f_h\|_{0,h} := \|X(f_h)\|_2.$
 \item  Further given a $L<1$, for any direction \(\dot\mu\), there is a constant \(C(L,\kappa,p,q)>0\), the derivative \(\dot f_h := D f_h(\mu)[\dot\mu]\) satisfies that
$$\|\dot f_h\|_{0,h} \le C(L,\kappa,p,q)\,\|\dot\mu\|_\infty\quad\text{for all }\mu\text{ with }\|\mu\|_\infty\le L.$$
\end{enumerate}

\end{theorem}
\begin{proof}
Define $ \widehat F:\ \mathcal{S}\times\mathbb{R}^{2N-2|P|} \to\mathbb{R}^{2N-2|P|} $ by $$ \widehat F(\mu,\widehat X) := A(\mu)^T\big(A(\mu)\widehat X + A_{\mathrm{full}}(\mu)\delta(P,q)\big). $$
Then the reduced LSQC minimizer \(\widehat X(\mu)\) is characterized by $$ \widehat F(\mu,\widehat X(\mu)) = 0. $$
It is natural that each coordinate of $\widehat F(\mu,\widehat X) $ is a polynomial in \((\mu,\widehat X)\). Therefore \(\widehat F\) is a \(C^\infty\) map from \( (\mathcal{S},\|\cdot\|_{\infty})\times (\mathbb{R}^{2N-2|P|},\|\cdot\|_2 )\) to \( (\mathbb{R}^{2N-2|P|},\|\cdot\|_2 )\). Fix \(\mu_0\in\mathcal{S}\) and let \(\widehat X_0 = \widehat X(\mu_0)\) be the LSQC minimizer coefficients at \(\mu_0\), i.e. $$ \widehat F(\mu_0,\widehat X_0) = 0. $$
Compute the derivative of \(\widehat F\) with respect to \(\widehat X\):

For any direction \(\dot{\widehat X}\in\mathbb{R}^{2N-2|P|}\), $$ D_{\widehat X}\widehat F(\mu_0,\widehat X_0)[\dot{\widehat X}] = A(\mu_0)^TA(\mu_0)\,\dot{\widehat X}. $$

So $$ D_{\widehat X}\widehat F(\mu_0,\widehat X_0) = N(\mu_0) = A(\mu_0)^TA(\mu_0). $$ By Proposition \ref{full rank of A}, \(N(\mu_0)\) is a linear isomorphism.
By the implicit function theorem \cite{lang2012fundamentals}, there exists a neighborhood \(U\subset\mathcal{S}\) of \(\mu_0\) and a unique \(C^1\) map
$$ \mu\in U \mapsto \widehat X(\mu)\in\mathbb{R}^{2N-2|P|} $$ such that $$ \widehat F(\mu,\widehat X(\mu)) = 0 \quad\forall\mu\in U. $$
Since \(\widehat F\) is \(C^\infty\), the solution map \(\widehat X(\mu)\) is actually \(C^\infty\) in \(\mu\).
Now define the LSQC minimizer \(f_h(\mu)\in V_{h,0}\) by $$ X(f_h(\mu)) := \mathrm{lift}(\widehat X(\mu)). $$
Then the map $$ \mu\mapsto f_h(\mu) $$ is Fréchet differentiable (and \(C^\infty\)) from \(\mathcal{U}\) into \((V_{h,0},\|\cdot\|_{0,h})\).

Suppose $\|\mu\|_{\infty}\le L$ for some $L<1$ and $\dot\mu\in\mathbb{C}^{|F|}$, there exists $\epsilon>0$ such that $\|\mu+\epsilon\dot\mu\|_{\infty}\le \frac{1+L}{2}$ and write \(\mu(t)=\mu+t\dot\mu\), \(\widehat X(t)=\widehat X(\mu(t)),0<t<\epsilon\), then $$ N(\mu(t))\,\widehat X(t) = -A(\mu(t))^T A_{\mathrm{full}}(\mu(t))\,\delta(P,q), \,0<t<\epsilon. $$
Differentiate both sides at \(t=0\), we have $$ D_\mu N(\mu)[\dot\mu]\,\widehat X(\mu)- N(\mu)\,\widehat X(\dot f_h) = -D_\mu\big(A(\mu)^T A_{\mathrm{full}}(\mu)\big)[\dot\mu]\,\delta(P,q),$$
then $$ \widehat X(\dot f_h) = - N(\mu)^{-1}\Big(D_\mu N(\mu)[\dot\mu]\,\widehat X(\mu)+ D_\mu\big(A(\mu)^T A_{\mathrm{full}}(\mu)\big)[\dot\mu]\,\delta(P,q)\Big). $$
This equation holds because $\|\mu+t\dot\mu\|_{\infty}\leq \frac{1+L}{2}<1, \, \forall t \in [0,\epsilon]$.
By Theorem \ref{thm 3.1}, $\|N(\mu)^{-1}\|\le\frac{1}{L_{N,1}}$ for some constant $L_{N,1}$ depending on $L$. On each face \(T\), the entries of \(A_{\mathrm{full}}(\mu)\) are linear combinations of geometric coefficients (edge vectors, normalized by \(\sqrt{d_T}\)), multiplied by \((1\pm\mu_T)\) or their real or imaginary parts. Concretely, a typical (complex) coefficient corresponding to a face \(T\) is $$\frac{1}{\sqrt{d_T}}\big[(1+\rho_T)(x_3-x_2)+\tau_T(y_3-y_2)\big],\quad \mu_T=\rho_T+i\tau_T.$$ Therefore, each entry \(a_{ij}(\mu)\) of \(A_{\mathrm{full}}(\mu)\) is affine in \(\mu_T\) and the derivative with respect to \(\mu_T\) yields a constant (in \(\mu\)), times the same geometric factors. Let $\bar A_{\mathrm{full}}(\dot\mu) := D_\mu A_{\mathrm{full}}(\mu)[\dot\mu]$ denote the “directional derivative matrix” in direction \(\dot\mu\). Then \(\bar A_{\mathrm{full}}(\dot\mu)\) has the same sparsity pattern as \(A_{\mathrm{full}}(\mu)\) and a typical nonzero entry is $$\frac{1}{\sqrt{d_T}}\big[\dot{\rho}_T(x_3-x_2)+\dot{\tau}_T(y_3-y_2)\big],\quad\dot\mu_T=\dot\rho_T+i\dot\tau_T.$$
and by Lemma \ref{Lipschitz dependence of A}, we have $$ \Big|\frac{1}{\sqrt{d_T}}\big[\dot{\rho}_T(x_3-x_2)+\dot{\tau}_T(y_3-y_2)\big]\Big| \le C(\kappa)\,|\dot\mu_T|$$
for some constant \(C(\kappa)\) depending only on the shape regularity \(\kappa\). Therefore, there exists a constant $C_0(\kappa)$ such that the spectral norm $\bar A(\dot\mu) := D_\mu A(\mu)[\dot\mu]$ and $\bar A_{\mathrm{full}}(\dot\mu$ are both bounded by $C_0(\kappa)\|\dot\mu\|_{\infty}$. Moreover, for bounded \(\|\mu\|_\infty\le L\), the entries of \(A(\mu)\) and \(A_{\mathrm{full}}(\mu)\) are uniformly bounded in terms of \(\kappa\) and \(L\), so there exists \(C_{1}(\kappa,L)\) such that  $$ \|A(\mu)\|_2 \le C_{1}(\kappa,L),\quad \|A_{\mathrm{full}}(\mu)\|_2 \le C_{1}(\kappa,L).$$
Hence  \begin{align*} \|D_\mu N(\mu)[\dot\mu]\|_2 &=\|\bar A(\dot\mu)^T A(\mu) + A(\mu)^T \bar A(\dot\mu)\|_2 \\ &\le \|\bar A(\dot\mu)^T A(\mu)\|_2 + \|A(\mu)^T \bar A(\dot\mu)\|_2\\ &\le \|\bar A(\dot\mu)\|_2\,\|A(\mu)\|_2 + \|A(\mu)\|_2\,\|\bar A(\dot\mu)\|_2\\  &\le 2\,C_0(\kappa)\,C_{1}(\kappa,L)\,\|\dot\mu\|_\infty, \end{align*} 
and we denote $c_2(\kappa,L)=C_0(\kappa)C_1(\kappa,L)$. Similarly,  $$\big\|D_\mu\big(A(\mu)^T A_{\mathrm{full}}(\mu)\big)[\dot\mu]\big\|_2 \le C_3(\kappa,L)\,\|\dot\mu\|_\infty.$$ 
Combining all, we have  \begin{align*} \|\widehat X(\dot f_h)\|_2 &\le \frac{1}{L_{N,1}} \Big( \|D_\mu B(\mu)[\dot\mu]\|_2\,\|\widehat X(\mu)\|_2+\|D_\mu(A(\mu)^T A_{\mathrm{full}}(\mu))[\dot\mu]\|_2\,\|\delta(P,q)\|_2\Big)\\ &\le \frac{1}{L_{N,1}} \Big( C_2(\kappa,L)\,\|\dot\mu\|_\infty\,\|\widehat X(\mu)\|_2+ C_3(\kappa,L)\,\|\dot\mu\|_\infty\,\|\delta(P,q)\|_2
\Big)\\ &= \frac{\|\dot\mu\|_\infty}{L_{N,1}} \Big( C_2(\kappa,L)\,\|\widehat X(\mu)\|_2+ C_3(\kappa,L)\,\|\delta(P,q)\|_2\Big). \end{align*}
Since the supremum \(\sup\limits_{\|\mu\|_\infty\le L}\|\widehat X(\mu)\|_2\) is finite and depends only on \((L,\kappa,p,q)\). Thus $$ \|\widehat X(\dot f_h)\|_2 \le C(L,\kappa,p,q)\,\|\dot\mu\|_\infty. $$ where \(C(L,\kappa,p,q)= C_2(\kappa,L)\sup\limits_{\|\mu\|_\infty\le L}\|\widehat X(\mu)\|_2+ C_3(\kappa,L)\,\|\delta(P,q)\|_2 \) is a finite constant. 
\end{proof}

Next, if we define the $H^1$-seminorm of $f_h$ by $$|f_h|_{1,h}^{2}:=\sum_{T}d_T\big(|f_{h,x}^T|^2+|f_{h,y}^T|^2\big)$$ and we have a useful inequality.
\begin{lemma}[gradient-vertex inequality]
Suppose the triangular mesh satisfies the same regularity condition in Theorem \ref{thm 3.1}. There exists a constant $C(\kappa)$ such that $$|f|_{1,h}^2\le C(\kappa)\|f\|_{0,h}^2$$
\end{lemma}
\begin{proof}
Let $u=\Re(f),v=\Im(f)$, because derivatives are linear in \(u\) and \(v\), \[ |f_h|_{1,h}^2 = |u_h|_{1,h}^2 + |v_h|_{1,h}^2, \] with \[ |u_h|_{1,h}^2 := \sum_T d_T\,(|u_x^T|^2+|u_y^T|^2), \quad |v_h|_{1,h}^2 := \sum_T d_T\,(|v_x^T|^2+|v_y^T|^2), \]
and the coefficient norm is \[ \|f_h\|_{0,h}^2 := \|X(f_h)\|_2^2 = \sum_{i=1}^N (u_i^2 + v_i^2). \]
It suffices to prove \[ |u_h|_{1,h} \;\le\; C_1(\kappa)\,\|u_h\|_{0,h}, \qquad |v_h|_{1,h} \;\le\; C_1(\kappa)\,\|v_h\|_{0,h} \] for real scalar P1 functions. Since 
\[ d_T\,|u_x^T|^2 = |D\cdot u_T|^2, \quad D = \frac{1}{\sqrt{d_T}}(y_3-y_2,\;y_1-y_3,\;y_2-y_1), \] where \(u_T:=(u_1,u_2,u_3)^\top\) is the vector of nodal values on \(T\).

A similar expression holds for \(d_T\,|u_y^T|^2\) with another geometric vector \(E\) of the form \[ E = \frac{1}{\sqrt{d_T}}(x_2-x_3,\;x_3-x_1,\;x_1-x_2). \]

Thus \[ d_T\big(|u_x^T|^2+|u_y^T|^2\big) = |D\cdot u_T|^2 + |E\cdot u_T|^2 = \|(D,E)u_T\|_2^2, \] where \((D,E)\) is the \(2\times 3\) matrix with rows \(D\) and \(E\). We have \[ d_T\big(|u_x^T|^2+|u_y^T|^2\big) = \|(D,E)u_T\|_2^2 \le \|(D,E)\|_2^2\,\|u_T\|_2^2, \] where \(\|\cdot\|_2\) is the operator (spectral) norm from \(\mathbb{R}^3\) to \(\mathbb{R}^2\). By an estimate derived in the proof of Lemma \ref{Lipschitz dependence of A}, there exists a constant \(C_{\text{loc}}(\kappa)\) such that \[ \|(D,E)\|_2 \;\le\; C_{\text{loc}}(\kappa). \] 
Hence \[ d_T\big(|u_x^T|^2+|u_y^T|^2\big) \le C_{\text{loc}}(\kappa)^2 \,\|u_T\|_2^2. \]
That is, for each element \(T\), \[ d_T\big(|u_x^T|^2+|u_y^T|^2\big) \le C_{\text{loc}}(\kappa)^2\,(u_1^2+u_2^2+u_3^2). \]
Sum  over all triangles \(T\): \[ |u_h|_{1,h}^2 = \sum_{T\in\mathcal{F}} d_T\big(|u_x^T|^2+|u_y^T|^2\big) \le C_{\text{loc}}(\kappa)^2 \sum_{T\in\mathcal{F}}\sum_{i\in T} u_i^2. \]

Each vertex \(i\) belongs to a finite number of triangles \(\mathcal{F}_i\), bounded uniformly in terms of the shape-regularity which is determined by $\kappa$. Let \(M(\kappa)\) be the maximal number of elements sharing a vertex, then each nodal square \(u_i^2\) appears at most \(M(\kappa)\) times in the double sum \(\sum_{T}\sum_{i\in T}\). Hence \[ \sum_{T\in\mathcal{F}}\sum_{i\in T} u_i^2 \le M(\kappa)\,\sum_{i=1}^N u_i^2. \]
Thus \[ |u_h|_{1,h}^2 \le C_{\text{loc}}(\kappa)^2\,M(\kappa)\,\sum_{i=1}^N u_i^2 = \big(C_{\text{loc}}(\kappa)\sqrt{M(\kappa)}\big)^2\,\|u_h\|_{0,h}^2. \]
Taking square roots, \[ |u_h|_{1,h} \le C_1(\kappa)\,\|u_h\|_{0,h}, \quad C_1(\kappa) := C_{\text{loc}}(\kappa)\sqrt{M(\kappa)}. \]
Exactly the same reasoning applies to \(v_h\), so \[ |v_h|_{1,h} \le C_1(\kappa)\,\|v_h\|_{0,h}. \]
Finally, for \(f_h = u_h + \sqrt{-1} v_h\), \[ |f_h|_{1,h}^2 = |u_h|_{1,h}^2 + |v_h|_{1,h}^2 \le C_1(\kappa)^2(\|u_h\|_{0,h}^2 + \|v_h\|_{0,h}^2) = C_1(\kappa)^2\,\|f_h\|_{0,h}^2, \] so \[  |f_h|_{1,h} \le C_1(\kappa)\,\|f_h\|_{0,h}.\] 
\end{proof}
With this, here is a corollary by combining Theorem \ref{thm 3.1} and Theorem \ref{differentiability of the discrete LSQC minimizer} with this lemma.
\begin{corollary}
    Under the same assumptions in Theorem \ref{thm 3.1} and Theorem \ref{differentiability of the discrete LSQC minimizer}, there exists $C_0=C_0(\kappa,L,p,q)$ such that$$|f_h(\mu)-f_h(\nu)|_{1,h} \le C_0\,\|\mu-\nu\|_\infty.$$
    Similarly, for the Fréchet derivative \(D f_h(\mu)\) there exists $C_1=C_1(\kappa,L,p,q)$ such tha satisfies
    $$|D f_h(\mu)[\dot\mu]|_{1,h}\le C_1\,\|\dot\mu\|_\infty.$$ 
\end{corollary}

In all, for each triple $(\mu,P,Q)$ where $P$ is the index set of pinned points and $Q$ consists of images of pinned point in $P$ with $|P|=|Q|\ge2$, let
$\mathcal{F}(\mu,P,Q)$ denote the LSQC minimizer mapping, i.e. the unique solution of the LSQC linear system with Beltrami field $\mu$ and pinned constraints $p_{i} \mapsto q_{i}$, and $\mathcal{L}_1:V_{h}\to\mathbb{R}$ be the task loss, the optimization problem can be solved as
$$\min_{P,Q,\mu}\mathcal{L}_1\big(\mathcal{F}(\mu,P,Q)\big)+\mathcal{L}_2(\mu).$$
In practice,  $|P|=2$ and we can show the existence of the minimizer $(\mu^*,P^*,Q^*)$ for the problem under some mild conditions.
\begin{theorem}[Existence of a minimizer for the optimization over Beltrami fields]
Suppose that the triangular mesh $\mathcal{T}=(V,F)$ satisfies the regularity condition in Theorem \ref{thm 3.1}, $\mathcal{L}_1,\mathcal{L}_2$ is continuous, $\mathcal{P}$ is the collection of all two-element subsets of $(1,2,...,|V|)$ and $\mathcal{Q}$ is a compact subset of $\mathbb{C}^2$ . Let $\mathcal{M}=\{\mu:\|\mu\|_{\infty}\le L\}$ for some $L<1$ then the optimization problem
\[
\min\limits_{\substack{\mu \in \mathcal{M},\\P\in \mathcal{P},\,Q\in \mathcal{Q}}} \mathcal{L}_1\big(\mathcal{F}(\mu,P,Q)\big)+\mathcal{L}_2(\mu).
\]
admits a minimizer.
\end{theorem}
\begin{proof}
    For a fixed $P\in\mathcal{P}$, by Proposition \ref{prop: independent of similarity transformation} and Theorem \ref{thm 3.1}, the map $(\mu,Q)\to \mathcal{F}(\mu,P,Q)$ is continuous on $\mathcal{M}\times \mathcal{Q}$ and consequently $(\mu,Q)\to \mathcal{L}_1\big(\mathcal{F}(\mu,P,Q)\big)+\mathcal{L}_2(\mu)$ is continuous. Since $\mathcal{M}\times \mathcal{Q}$ is compact, \[
\min\limits_{\substack{\mu \in \mathcal{M},\,Q\in \mathcal{Q}}} \mathcal{L}_1\big(\mathcal{F}(\mu,P,Q)\big)+\mathcal{L}_2(\mu).
\]
has a minimizer. Lastly since $|\mathcal{P}|=\big(\substack{|V|\\2}\big)<\infty$, the optimization problem admits a minimizer. 
\end{proof}

\begin{remark}
It is a common choice to augment the task-specific loss with regularization terms $\mathcal{L}_2(\mu)$ of the form \[ \alpha \int_{\Omega} |\mu|^{2}+\beta \int_{\Omega} |\nabla\mu|^{2}. \]
The first term penalizes the magnitude of the Beltrami coefficient and therefore the overall distortion. The second term penalizes the spatial variation of \(\mu\), promoting a smoother distortion field. The stability and differentiability results make this choice mathematically well justified: the LSQC minimizer depends Lipschitz continuously  on \(\mu\). Thus, controlling \(\mu\) in suitable Sobolev norms (such as \(\lVert \mu \rVert_{L^{2}}\) and \(\lVert \nabla\mu \rVert_{L^{2}}\)) yields control over the corresponding mapping in discrete \(H^{1}\) norms. In other words, the regularizer \(\int |\nabla\mu|^{2}\) acts as a smoothness prior on the mapping itself.
\end{remark}
\section{Spectral Beltrami Network}
\label{sec: Spectral Beltrami Network}
With these advantages, LSQC should have been a perfect tool to derive a free-boundary quasiconformal map. However, there is a drawback in this method. Since $\mathcal{F}$ involves solving a huge sparse linear system, it is computationally expensive or impossible to backpropagate the gradient of loss function $\mathcal{L}$ w.r.t the mapping $\frac{\partial \mathcal{L}}{\partial f}$  to that w.r.t BC $\frac{\partial \mathcal{L}}{\partial \mu}$,  destinations of pinned points $\frac{\partial \mathcal{L}}{\partial q_i}$, strictly following the chain rule. Moreover, the choice of the pinned points is a discrete manner and there lacks a clear mechanism to update it according to the loss values.
The Alternating Direction Method of Multipliers (ADMM) partially mitigates this by updating 
$\mu$ based on the BCs reconstructed from the newly updated mapping $f$, but it does not address the lack of explicit gradients for pinned points $p_i$ and destinations $q_i$. Previous approaches, such as those by Qiu \cite{qiu2019computing, qiu2020inconsistent} circumvent this by imposing destinations of certain points in advance, which is often unrealistic in real-life scenarios.

To overcome these limitations, we propose the Spectral Beltrami Network (SBN), a differentiable neural surrogate for $\mathcal{F}$ that enables gradient backpropagation through $\mu$ and $p$. The network operates on the unit disk $\mathbb{D}$. This is a canonical choice: by the uniformization theorem, many surfaces of interest (after reducing to disk topology) admit a conformal parameterization to $\mathbb{D}$, and in practice there have been many efficient algorithms \cite{choi2015fast,delillo2004schwarz} for computing the disk or circular conformal parameterizations.
We represent the domain as a triangular mesh viewed as a directed graph $\mathcal{G} = (V,E,F)$ with nodes $V$, directed edges $E$, and faces $F$. SBN leverages three nested meshes $\mathcal{G}^{i} = (V^{i},E^{i},F^{i})$ $(i=1,2,3)$ of decreasing resolution. Given BC $\mu_{v}$ on all $v \in V^{1}$ and two points $p_1, p_2 \in V^{1}$, the network predicts a mapping $F$ approximating the numerical LSQC solution $f$ of $\{\mu_T \colon T \in F^{1}, \mu_{T} = \frac{1}{3}\sum_{v \in T}\mu_{v}\}$ with fixed constraints $f(p_i)=p_i,i=1,2$.
Enforcing selected two points to be fixed is based on two practical considerations. First, it constrains the output range, preventing training instabilities from unbounded target positions. Second, arbitrary target positions $q_1,q_2$ can be recovered via post-composition with a similarity transformation $g$ satisfying $g(p_j)=q_{j}$ $(j=1,2)$. By Proposition \ref{prop: independent of similarity transformation}, $\mathcal{F}(\mu,p_1,p_2,q_1,q_2)=g\circ \mathcal{F}(\mu,p_1,p_2,p_1,p_2)$, ensuring no loss of generality.

Therefore, in downstream applications after the SBN is well trained, parameters to be optimized include BC, coordinates of two fixed points, and parameters for scaling, rotation and translation. The challenge lies in incorporating the BC and pinned point information into the network and building efficient short- and long-range interactions between vertices, depicting the dynamics of LSQC energy.

\subsection{Encoding input information}
\label{subsec: encode pde}
SBN operates on three types of graphs: meshes $(\mathcal{G}^i, i=1,2,3)$, downsampling graphs $(\mathcal{G}^{\text{down},i} = (V_i \cup V_{i+1}, E^{i,i+1}))$, and upsampling graphs $(\mathcal{G}^{\text{up},i} = (V_i \cup V_{i+1}, E^{i+1,i}))$ for $i=1,2$. Each vertex $v \in V_1$ is assigned BC $\mu_v$ and encoded with a 13-dim feature vector
$\tilde{v} = [\mu_v, K_v, \arg(\mu_v), |\mu_v|, v - p_1, \|v - p_1\|_2, v - p_2, \|v - p_2\|_2, v]$
where $K_v = \frac{1 + |\mu_v|}{1 - |\mu_v|}$ captures distortion and the pinned-point features encode positional relationships. An encoder MLP maps $\tilde{v}$ to 24-dim embeddings. For coarser meshes $(V_2, V_3)$, if $v = \sum_{j=1}^3 \lambda_j v_j$ with $v_j \in V_1$ and $\sum_j \lambda_j = 1$, then $\mu_v = \sum_{j=1}^3 \lambda_j \mu_{v_j}$ by barycentric interpolation, yielding analogous 24-dim embeddings.
Edge features are computed as $e^{k,l}_{i,j} = f^{en}_e(\tilde{v}_j-\tilde{v}_i, \|v_j-v_i\|_{2})$ for edges from $v_i \in V^k$ to $v_j\in V^l$, where $f_e^{en}$ is an encoding MLP capturing directed vertex differences. Superscripts $(k,l)$ are omitted when clear from context.

\subsection{Message Passing Mechanism}
We adopted the multiscale message passing from \cite{fortunato2022multiscale} and improved the way of hierarchical edge construction. For downsampling graph $\mathcal{G}^{\text{down},i}=(V^{i} \cup V^{i+1},E^{i,i+1})$, each node $v_k \in V^{i}$ is connected to the three corners of its containing triangle in $F^{i+1}$ (identified by minimal $|\alpha_{k,l}| + |\alpha_{k,m}| + |\alpha_{k,n}|$ in barycentric coordinates $v_k = \alpha_{k,l}v_l + \alpha_{k,m}v_m + \alpha_{k,n}v_n$). Considering the free-boundary nature of LSQC, it is crucial to enhance boundary information exchange and thus we add edges from fine-mesh boundary nodes to any coarse-mesh vertex within a distance threshold. Upsampling graph $\mathcal{G}^{\text{up},i} = (V^{i} \cup V^{i+1}, E^{i+1,i})$ reverses edge directions. See Figure \ref{fig: Message Passing Mechanism}.
The general update Rules for node and edge embeddings are as follows: For graph $\mathcal{G}$ with edges $(v_i, v_j)$ where $v_i \in V^k, v_j \in V^l$:
$$
e_{i,j} = f^{k,l}_{e}(e_{i,j}, v_i, v_j, \theta_{e}), \quad  v_j = f^{k,l}_{v}\left(v_j, \sum_{e_{i,j}\in E^{k,l}}e_{i,j}, \theta_{v}\right).
$$
Update Rules are applied sequentially to $\mathcal{G}^{k}$ (intra-mesh, $k=l$), $\mathcal{G}^{\text{down},k}$ (fine-to-coarse, $k<l$), and $\mathcal{G}^{\text{up},k}$ (coarse-to-fine, $k>l$). Here $f_{e}, f_{v}$ are residual MLPs \cite{he2016deep} with GraphNorm \cite{cai2021graphnorm}; parameters $\theta_e, \theta_v$ encode PDE constraints for robustness. This multiscale mechanism extends interaction ranges, allowing each node to reach the entire mesh through hierarchical edges and enhanced boundary broadcasting. Figure \ref{fig:mp_block_arch} is the architecture of a message passing block.
\subsection{Mesh Spectral Layer}
A critical limitation persists: purely local message passing struggles to propagate information between distant nodes, especially in dense meshes where signals attenuate over multiple iterations. This is particularly problematic for our free-boundary problem, where pinned-point positions exert global influence on the mapping. Experimentally, we observed that multiscale message passing alone risks neural collapse, producing near-identity outputs.
\begin{figure}
    \centering
    \includegraphics[width=\linewidth]{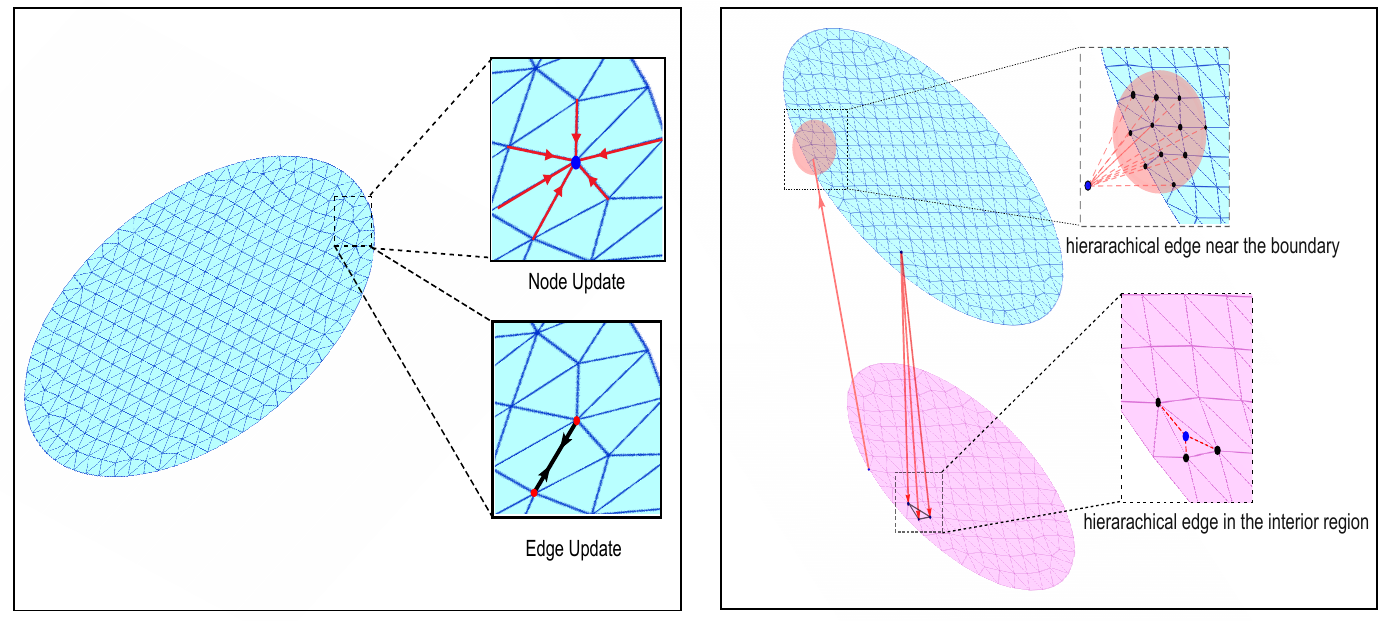}
    \caption{  Illustration of the embedding update rule and the hierarchical edge construction.}
    \label{fig: Message Passing Mechanism}
\end{figure}
\begin{figure}
\subfloat[ Architecture of Message Passing Block.]   
  {
      \centering
		\includegraphics[width=0.47\textwidth]{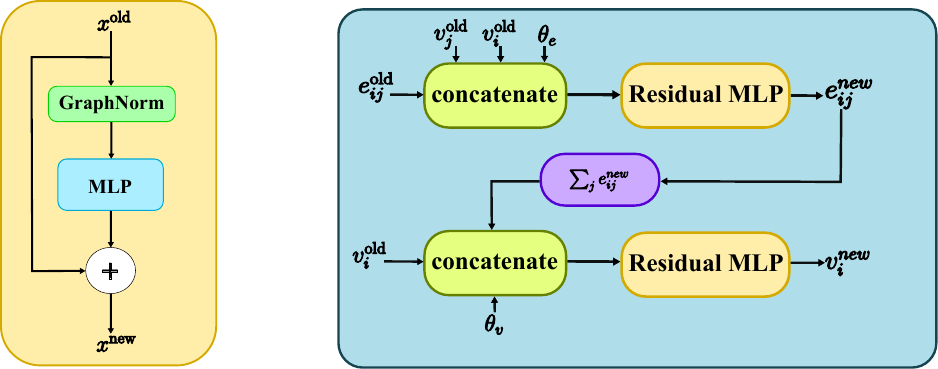}
        \label{fig:mp_block_arch}
  }
  \hfill
  \subfloat[Architecture of Mesh Spectral Layer.]
  {
      \centering
		\includegraphics[width=0.47\textwidth]{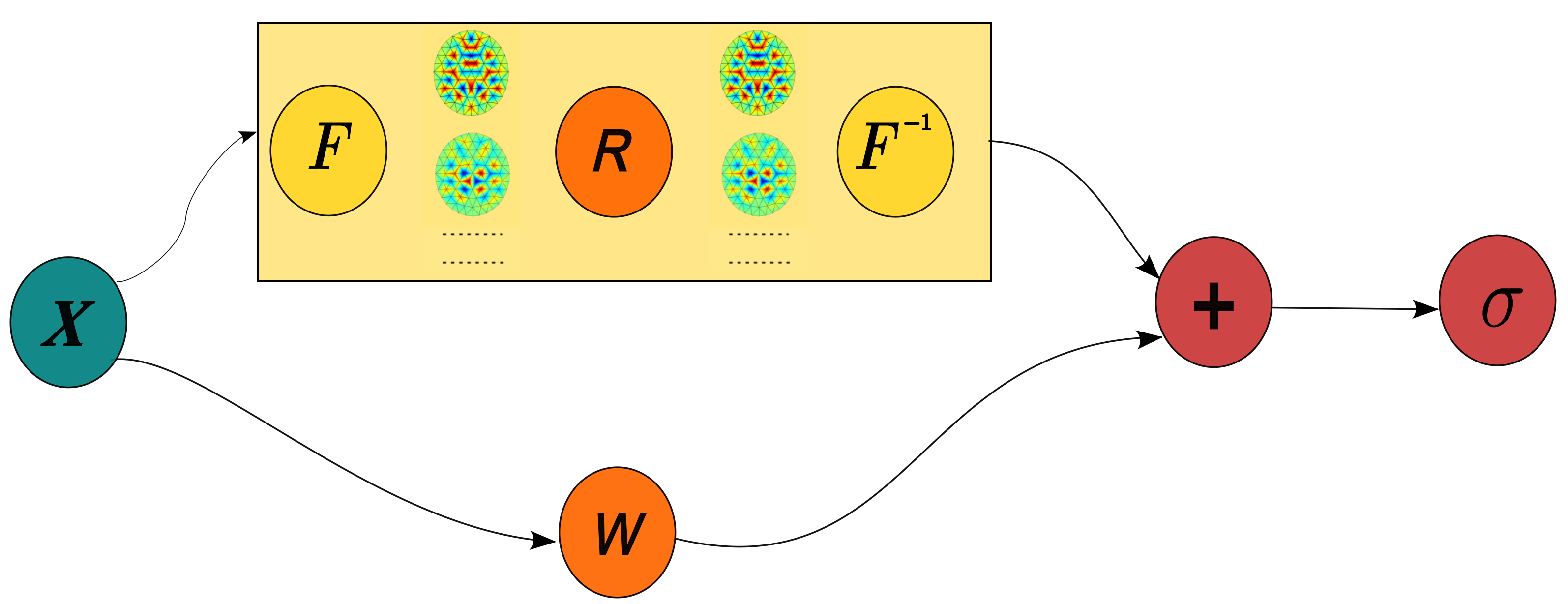}
        \label{fig: MeshSpectralLayer}
  }
  \caption{ Architecture of key modules in the Spectral Beltrami Network.}   
\end{figure}

Inspired by Fourier Neural Operator (FNO)\cite{li2020fourier}, we proposed the Mesh Spectral Layer (MSL) to capture global dependencies via the mesh Laplacian's eigenbasis. For a triangular mesh $(V,E,F)$ with Laplacian $\Delta = \mathbf{M}^{-1}\mathbf{L}$, where
    $$
\mathbf{L}_{ij} = 
\begin{cases} 
\frac{1}{2}(\cot \alpha_{ij} + \cot \beta_{ij}) & \text{if } j \sim i \\ 
-\sum_{j' \in \mathcal{N}(i)} w_{ij'} & \text{if } j = i \\ 
0 & \text{otherwise}
\end{cases},
\quad \mathbf{M}_{ik} = \begin{cases} 
\frac{1}{3}\sum_{T_j\in\mathcal{N}(i)}\text{area}(T_j) & \text{if } i=k \\  
0 & \text{otherwise}
\end{cases}
,
$$
 where $\alpha_{ij}$ and $\beta_{ij}$ are the angles opposite to the edge $[i,j]$ and $\mathcal{N}(i)$ is the list of adjacent vertices and faces of the vertex $i$. Since $\Delta$ is symmetric positive semi-definite and encodes mesh geometry, we project latent features $X\in \mathbb{R}^{|V|\times d}$ onto its $k$ normalized eigenvectors $F \in \mathbb{R}^{k \times |V|}$ associated to the $k$ smallest eigenvalues, performing spectral-domain mixing:
$$
X \leftarrow \sigma(XW + F^T(R \cdot (FX))), \quad (R \cdot (FX))_{jl} = \sum_{n=1}^{d} R_{jln}(FX)_{jn}
$$
with trainable $W \in \mathbb{R}^{d \times \tilde{d}}$ and $R \in \mathbb{R}^{k \times \tilde{d} \times d}$. This offers efficient global communication with far fewer parameters than attention mechanisms \cite{vaswani2017attention}. Figure \ref{fig: MeshSpectralLayer} shows the architecture of this module.

\subsection{Model architecture and training}
A key principle in combining global operators with local message passing is that each node should carry structural information beyond individual messages. Figure \ref{fig: SBN} shows the complete SBN architecture integrating multiscale message passing blocks and mesh spectral layers.
\begin{figure}[!htbp]
    \centering
    \includegraphics[width=\textwidth]{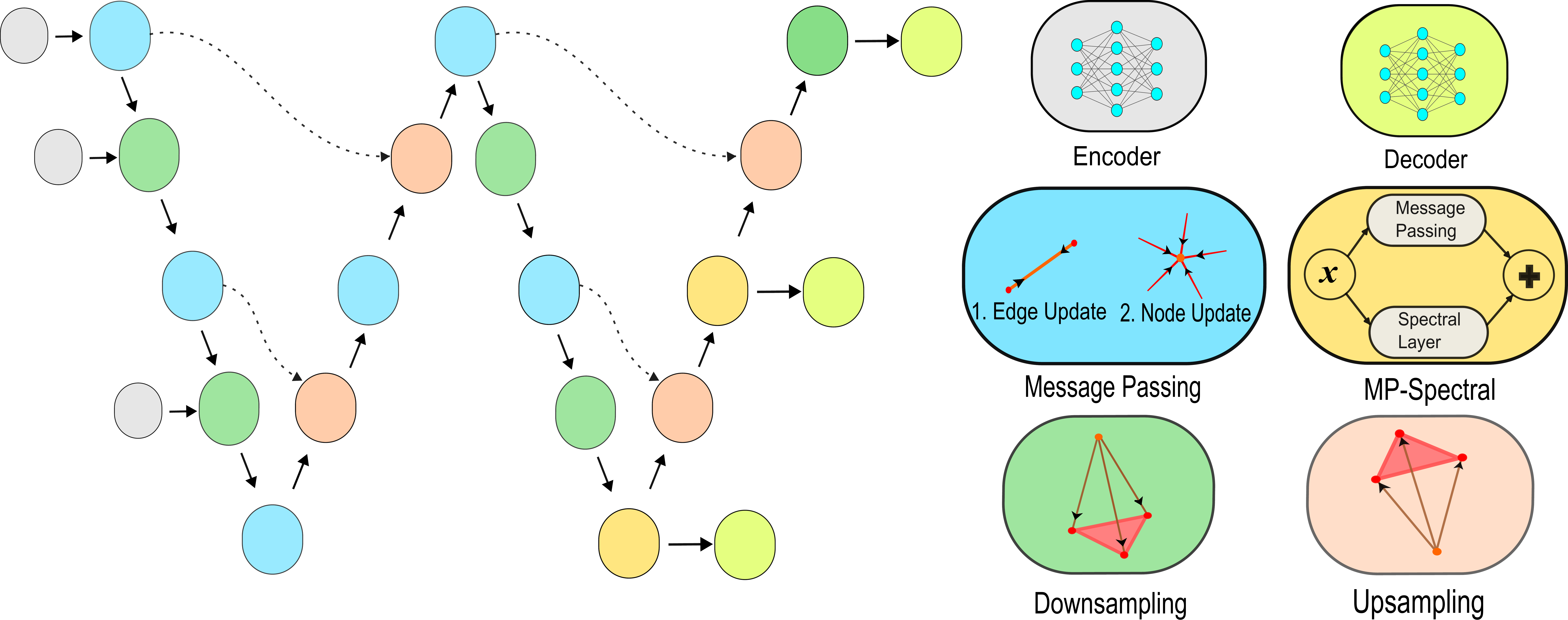}
    \caption{ Architecture of Spectral Beltrami Network.}
    \label{fig: SBN}
\end{figure}

SBN is trained by supervising the predicted mapping $\hat{u}=\mathcal{F}_{\theta}(\mu_{V^{1}},p_1,p_2)$ against ground truth $u =\mathcal{F}_{\theta}(\mu_{F^1},p_1,p_2,p_1,p_2)$ where $\mu_{F^1}=\{\mu_T|\ \mu_T=\frac{1}{3}\sum_{v\in T}\mu_v,T \in F^1\}$, via four loss components:
\begin{enumerate}
    \item \textbf{Boundary-weighted node loss} emphasizes boundary accuracy where free-boundary errors concentrate:
   $$\mathcal{L}_{1}(V^1) = \frac{1}{|V^{1}|}\sum_{v\in V^{1}}e^{\|u(v)\|_{2}} \|u(v)-\hat{u}(v)\|^{2}_{2}.$$
   \item \textbf{Relative edge deformation loss} prevents mesh folding by penalizing inconsistent edge deformations:
   $$\mathcal{L}_2(E^1)=\frac{1}{|E^{1}|}\sum_{[i,j]\in E^{1}} \frac{\|(\hat{u}(v_i)-\hat{u}(v_j))-(u(v_i)-u(v_j)) \|_2}{\|u(v_i)-u(v_j) \|_{2}}.$$
   \item \textbf{Boundary $\ell^1$ loss} enhances boundary fitting:
   $$\mathcal{L}_3(\partial V^1)=\frac{1}{|\partial V^1|}\sum_{v\in\partial V^{1}}\| u(v)-\hat{u}(v)\|_{1}.$$
   \item \textbf{Deep supervision} extends losses to coarser meshes $V^2, V^3$ (with interpolated ground-truth) to enhance gradient flow and representation learning:
   $$\mathcal{L}_1^{ds}(V^1,V^2,V^3) = \mathcal{L}_1(V^1) + \lambda_2(t) \mathcal{L}_1(V^2) + \lambda_3(t) \mathcal{L}_1(V^3),$$
   where $\lambda_j(t) = \lambda \cdot \frac{t}{N} + \lambda_j \cdot (1-\frac{t}{N})$ gradually decays coarse-mesh influence ($\lambda_2=\lambda_3=0.2$, $\lambda=0.1$).
\end{enumerate}

In conclusion, the total training loss is a weighted sum of these loss terms. Once the training of SBN was finished, we obtained a neural network $\mathcal{F}_{\theta^*}$ and during the optimization, its parameters are frozen.

\section{SBN-Opt: A novel optimization framework of the free-boundary  diffeomorphism problem}
\label{sec: neural lsqc}

A well trained SBN $\mathcal{F}_{\theta^*}$ serves as a differentiable solver for optimization of free-boundary diffeomorphism. We optimize input BC $\{\mu_v\}$, pinned points $\{p_1, p_2\}$, and post-composition similarity parameters $(\phi, s, r)$ to minimize task-specific energies. To ensure valid inputs ($\|\mu_v\|, \|p_i\| < 1$), we apply an activation function:
$$
\mathcal{T}(x,T) = \left( \frac{e^{\frac{|x|}{T}} - e^{-\frac{|x|}{T}}}{e^{\frac{|x|}{T}} + e^{-\frac{|x|}{T}}} \right)e^{i\arg(x)},
$$
setting $\mu_v = \mathcal{T}(\tilde{\mu}_v,T_{\text{BC}})$ and $p_i = \mathcal{T}(\tilde{p}_i, T_{\text{pin}})$ where $(\tilde{\mu}_v, T_{\text{BC}}, \tilde{p}_i, T_{\text{pin}})$ are unconstrained optimization variables. In this way, we introduce the SBN guided optimization framework SBN-Opt to solve the problem \ref{eq: formulation of optimization problem}
\begin{equation}
    \label{neural framework}
\min_{(\tilde{\mu}_v,T_{\text{BC}},\tilde{p}_i, T_{\text{pin}},\phi,s,r)} E_1\Big(g\big(\mathcal{F}_{\theta^*}(\mu_v,p_1,p_2),\phi,s,r\big)\Big) + E_2(\mu), \quad g(x,\phi,s,r) = se^{i\phi}x+r.
\end{equation}

A significant advantage of the SBN-Opt framework is that the pinned points $P = \{p_1, p_2\}$ are treated as learnable parameters rather than unchanged constraints. While the optimization of these points occurs in a continuous coordinate space, the Spectral Beltrami Network requires input to coincide with the discrete vertices of the mesh. During optimization, updated coordinates $\tilde{p}$ are unlikely to align exactly with existing mesh nodes. To resolve this discrepancy while maintaining a differentiable pipeline, we employ a straight-through estimation strategy. In each forward pass, we first identify the mesh vertex $v_{near} \in V$ closest to the current continuous estimate $\tilde{p}$. The input to the network, $p_{in}$, is then constructed using a stop-gradient operator as follows:$$p_{in} = \text{stop\_gradient}(v_{near} - \tilde{p}) + \tilde{p}$$ This formulation ensures that 1. the valid vertex $v_{near}$ is passed as the input to the SBN; 2. during the backward pass, by the stop-gradient operator, the gradient of the loss function $\mathcal{L}$ with respect to the input flows directly to the continuous parameters, i.e. $\frac{\partial \mathcal{L}}{\partial \tilde{p}} = \frac{\partial \mathcal{L}}{\partial p_{in}} $.

Direct optimization on Beltrami coefficients enables explicit distortion control via $E_2(\mu)$. For instance, angle distortion is penalized by $E_{\text{BC}} = \frac{1}{|V^1|}\sum_{v\in V^1}|\mu_v|^2$, while smoothness (assuming piecewise-linear $\mu$) is enforced via $E_{\text{smooth}} = \frac{1}{|F^1|}\sum_{T\in F^1}|\nabla \mu(T)|^2$. Although SBN outputs only on $V^1$, any point $v \in \mathbb{D}$ maps via barycentric interpolation. By resolution-independence (Proposition \ref{prop: resolution independence}), this remains faithful to the continuous solution. For each vertex in a high-resolution mesh $v_p \in V^{\text{orig}}$, we identify the triangle $I(p)=[v_{i}, v_{j}, v_{k}] \in F^1$ that minimizes $|\lambda_i| + |\lambda_j| + |\lambda_k|$ among barycentric representations $v_p = \lambda_i v_i + \lambda_j v_j + \lambda_k v_k$ (with $\lambda_i + \lambda_j + \lambda_k = 1$) in all triangles, and this identifies the containing triangle where all coordinates are non-negative. We precompute a sparse interpolation matrix $R\in\mathbb{R}^{|V^{\text{orig}}|\times|V^{1}|}$ where
$$R_{p,q}= \begin{cases} \lambda_{q}(v_p) & \text{if } v_q \text{ is a vertex of the identified triangle},\\ 0 & \text{otherwise}, \end{cases}$$
enabling per-iteration interpolation $f(V^{\text{orig}}) = R \cdot f(V^1)$ and the computation cost and memory is negligible relative to the forward pass of SBN.

Next, we take two image processing tasks in Section \ref{sec: experiments} as examples to illustrate how to apply the framework \ref{neural framework}. In the \textbf{Equiareal mapping problem}, given a triangular mesh $\mathcal{T}=(V,F)$ of a surface $\mathcal{M}$ and its conformal parameterization $\phi:\mathcal{M}\to\mathbb{D}$ as well as population $\{p(T):T\in F\}$, we seek a homeomorphism $\psi:\mathbb{D}\to\mathbb{C}$ such that the mapping $f=\psi \circ \phi$ preserves the area of $\mathcal{T}$, or to say, makes
density $\rho(f(T))=\frac{\text{Area}(T)}{\text{Area}(f(T))}$ uniform across faces. 
If without setting up a sea surrounding the disk\cite{choi2018density}, the region might arbitrary expand. However, setting up a sea might restrict the movement of the true boundary and make the area change near the true boundary damped. Instead, if needed, we can easily discourage arbitrary expansion of the region by designing a barrier function on the scaling parameter $s$ in the Framework \ref{neural framework}. Then the task loss objective in the formulation \ref{eq: formulation of optimization problem} for this task is
$$
E_1 = \sum_{T\in F} \bigl(\rho(T) - \bar\rho(f)\bigr)^2 + \lambda_{\text{reg}}\,R_\text{barrier}(s), \quad R_\text{barrier}(s)=\max \{0, s-\omega\},
$$
where $\bar\rho(f) = \frac{\sum_{T}\text{Area}(T)}{\sum_{T}\text{Area}(f(T))}$ is the target density. 

In the work \cite{qiu2020inconsistent}, Qiu et al. developed an efficient splitting method based on Alternating Direction Method with Multipliers(ADMM) to apply the numerical LSQC algorithm in the \textbf{inconsistent surface registration problem}. However, the algorithm relies on prescribed landmark point-to-point correspondence. One of advantages in our framework is that the pinned points are also optimization parameters therefore it can be applied to more general inconsistent surface registration problems. In practice, exact point to point correspondence is usually impractical and unreasonable. For one thing, manually labeling point to point correspondence is time-consuming. For another, measurement noise, meshing artifacts and smoothing alter vertex positions by millimeters thus treating those vertices as hard truths introduces error. Therefore, it is more common to relax the correspondence to the relation between point sets and point sets. Given surfaces $S_1, S_2$ with subsets $\{S_{i,j}\}$ and intensity functions $I_i:S_i\rightarrow \mathbb{R}$, we seek a diffeomorphism $f:S_1\rightarrow\mathbb{R}^3$ and optimal regions $\Omega_i^* \subset S_i$ minimizing:
\begin{enumerate}
    \item Intensity mismatch: $E_{\text{I}}(f,\Omega_1) = \int_{f(\Omega_1)}\|(I_1 \circ f^{-1}-I_2)\|_1dA$
    \item Point-set discrepancy (via chamfer distance): 
 
        \[E_{\text{pc}}(f)=\sum_{j}\Big( \frac{1}{|S_{1,j}|}\sum_{v\in S_{1,j}}\min_{w\in S_{2,j}}\|f(v)-w\|^2+ \frac{1}{|S_{2,j}|}\sum_{w\in S_{2,j}}\min_{v\in S_{1,j}}\|w-f(v)\|^2 \Big).\]

\end{enumerate}
After conformally parameterizing $S_i$ to $\mathbb{D}$ by $\phi_i$, the problem reduces to finding the optimal $g=\phi_2\circ f \circ \phi_1^{-1}:\mathbb{D}\rightarrow \mathbb{R}^2$ under the maximality assumption $g\circ \phi_1(\Omega_1)=\phi_2(\Omega_2)=f(S_1)\cap S_2$. We minimize $E_1 = \lambda_{\text{I}}\,E_{\text{I}}+\lambda_{\text{pc}}\,E_{\text{pc}}$ while $E_2(\mu)$ provides distortion regularization.
For the general optimization problem, the framework SBN-Opt is described in Algorithm \ref{algo: neural lsqc}.
\begin{algorithm}[htb]
    \caption{ SBN-Opt}
    \label{algo: neural lsqc}
    \begin{algorithmic}[1]
        \Require
            Data: A triangular mesh $\mathcal{T}=(V,F)$ of the unit disk $\mathbb{D}$. Parameters to be optimized:
            \begin{enumerate}
                \item Optimization parameters $\eta$ include $\tilde{\mu}_{v}$, $T_{\text{BC}}$, $\tilde{p}_{i}(i=1,2)$, $T_{\text{pin}}$, log of scaling  $\tilde{s}$, rotation angle $\phi$ and translation vector $r$.
                \item Loss weights $\lambda_j$, energy functionals $E_j$ and optimizer $\Psi$.
            \end{enumerate} 
        \Ensure
            Deformed mesh $f(\mathcal{T})=(f(V),F)$
        \State 
            Freeze the parameters of SBN $\mathcal{F}_{\theta^*}$.
        \State
            Compute and save the interpolation matrix $R\in \mathbb{R}^{|V|\times|V_1|}$ 
        \State 
            $continue$ = $True$;
        \While{ $continue$ }
            \State
            Prepare input $\mu = \mathcal{T}(\tilde{\mu},T_{\text{BC}}), p_i = \mathcal{T}(\tilde{p}_i,T_{\text{pin}})$ for the model $\mathcal{F}_{\theta^*}$.
            \State
            Obtain the mappings $f(V^1)=g\Big(\mathcal{F}_{\theta^*}\big(\mu_{v},p_1,p_2\big),\phi, e^{\tilde{s}},r\Big)\in \mathbb{C}^{|V^1|\times 1}$ and the deformed mesh $f(V)=R\,f(V^{1})$.
            \State Compute the weighted sum $E$ of losses $E_1\big(f(V)\big)$ and $E_2\big(\mu \big)$ with weights $\lambda_1,\lambda_2$.
            \If{$E$ and $E_j$ meet stop criterion}  
                \State
                    $continue$ = $False$; 
            \Else
                \State 
                Compute $\frac{\partial E}{\partial \eta}$ for all parameters $\eta$ and then update all $\eta$ with the optimizer $\Psi$.
            \EndIf
        \EndWhile
    \end{algorithmic}
\end{algorithm}

\begin{remark}
    In practice, a disk conformal parameterization is not mandatory. To avoid unnecessary conformal distortion caused by the fixed boundary shape, sometimes we use the method in \cite{levy2002least} to compute the free-boundary conformal parameterization of a surface. Again, due to Proposition \ref{prop: resolution independence} and Proposition \ref{prop: independent of similarity transformation}, we can still compute the deformation by interpolation. The idea is that, let $\phi:\mathcal{M}\rightarrow \mathbb{R}^2$ be a conformal parameterization of the surface $\mathcal{M}$, $r_0=\max\limits_{v\in \mathcal{M}}\|\phi(v)\|_2$ and $R$ is the sparse interpolation matrix of $\phi(\mathcal{M})$ with respect to the mesh $r_0\cdot \mathbb{D}=(r_0\cdot V^1,F^1)$, if $f:V^1\rightarrow \mathbb{R}^2$ is a deformation predicted by SBN, then deformation of $\psi(S)$ is $r_0\cdot Rf(V^1)$.
\end{remark}
\section{Implementation}
\label{sec: implementation}
We implemented all experiments, including the training of SBN and downstream applications, using the PyTorch framework  \cite{paszke2019pytorch}. The SBN model was trained in a supervised manner, with numerical solutions from the Least-squares Quasiconformal Energy (LSQC) solver serving as ground-truth mappings. The training dataset comprised 960 images from the ILSVRC2012 dataset \cite{russakovsky2015imagenet}, which provided realistic BCs with both low- and high-frequency components. To generate BCs, we randomly selected two images per training iteration, applied data augmentation techniques (random flipping, noise addition, and blurring), and normalized them to form the real and imaginary parts of the BCs. Two pinned points were randomly chosen from the vertices of the finest mesh for each training sample.

SBN employs a hierarchical architecture. It integrates two core components: multiscale message-passing mechanism and mesh spectral layer, as detailed in Section \ref{sec: Spectral Beltrami Network}. The numbers of message passing numbers per scale depends on the mesh resolution, which are 3,3,5 from high to low, and the number of MSL per scale is 2.  The training process ran for 250 epochs using the AdamW optimizer  \cite{loshchilov2017decoupled} with parameters  $\beta = (0.9, 0.999)$, a weight decay of $1 \times 10^{-2}$, an inital learning rate $3 \times 10^{-4}$. We used three triangular meshes of varying resolutions, generated with the MATLAB DistMesh function \cite{persson2004simple}, with initial edge lengths of 0.015, 0.3, and 0.5. The numbers of vertices and faces in these three meshes with resolution from high to low are (16114, 31803), (4027,7846) and (1452,2774), respectively. 
The loss function for training SBN combined three terms $\mathcal{L}_i^{ds} (i = 1,2,3)$ with weights all 1. Although we did not perform an exhaustive grid search for optimal hyperparameters, this configuration yielded strong performance, as demonstrated in Section \ref{sec: experiments}. All experiments were conducted on an NVIDIA A40 GPU with 44GB memory, ensuring efficient computation. In addition, the numerical algorithms were implemented using MATLAB on an 8-core Intel i7 machine.

\section{Experimental results}
\label{sec: experiments}
Numerical experiments were conducted to evaluate the efficacy and potential of SBN. Given that SBN’s performance depends on its ability to accurately predict mappings for various BCs $\mu$,  we first compare its predicted outputs with numerical solutions. This is followed by several applications in equiareal parameterization and inconsistent surface registration, encompassing both realistic and synthetic scenarios.
\subsection{Performance of Spectral Beltrami Network}
\label{subsec:self ablation}
SBN, trained over 250 epochs, achieves robust simulation of the Beltrami equation. Under typical conditions (BC with norms mostly below 0.4), it achieves the average nodewise $\ell^2$ error smaller than 0.03, relative to numerical LSQC solutions, demonstrating strong generalizability on unseen data (Figure \ref{fig:smaller BC performance}). The model maintains robustness in challenging scenarios involving larger BC norms (Figure \ref{fig:large_mu}) or significant domain distortions (Figure \ref{fig:large_distortion}), effectively handling complex or large deformations.
\begin{figure}
    \centering
    \includegraphics[width=\linewidth]{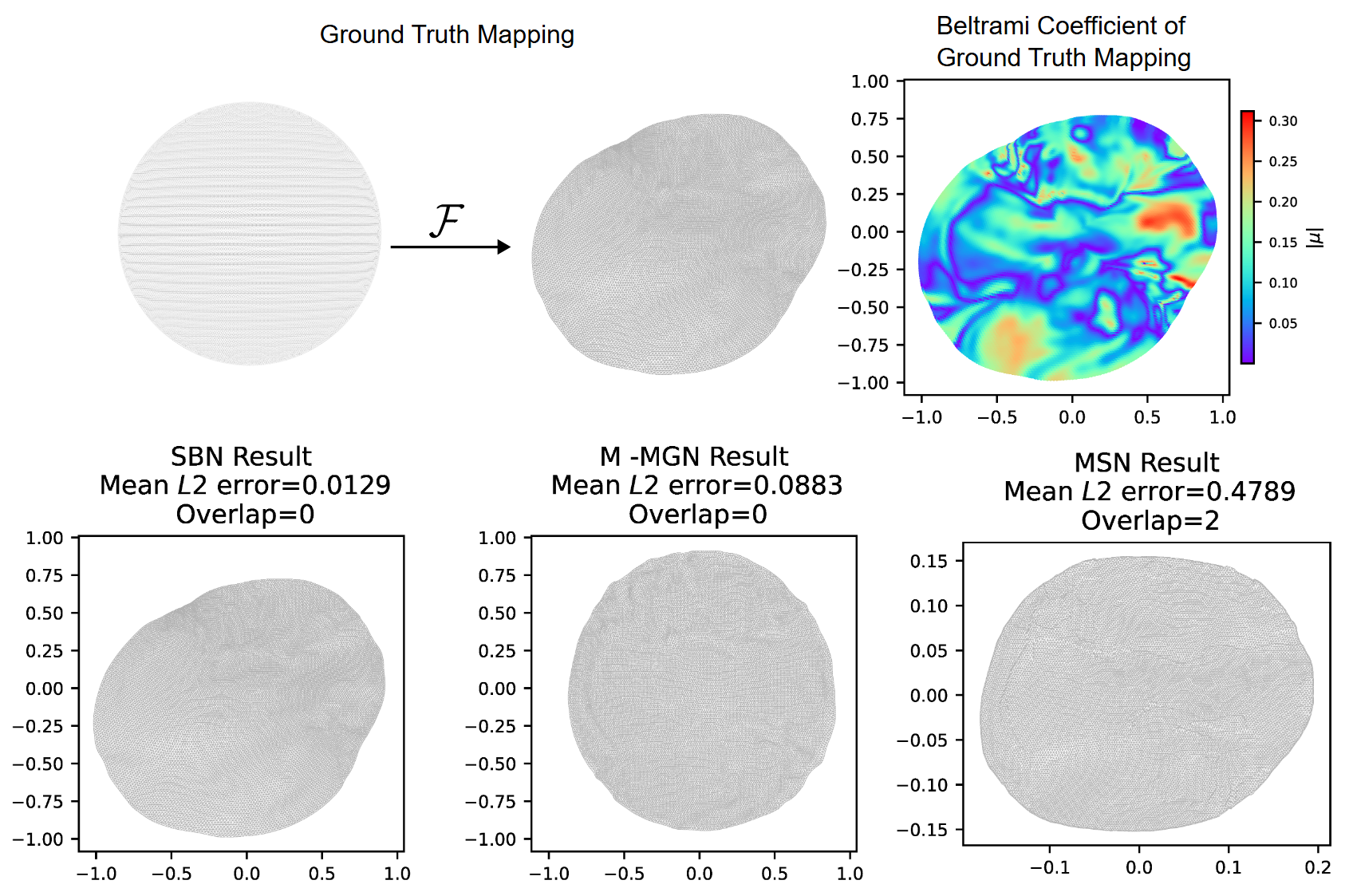}
    \caption{ Comparison of SBN, M-MGN, and MSN for simulating LSQC in a typical case where the maximum Beltrami coefficient norm is less than 0.4. Top row: ground truth mapping (left) and the magnitude of the BC of the ground truth mapping (right). Bottom row: results of SBN, M-MGN, and MSN, respectively. For each model, we report the average nodewise $\ell^2$ error with respect to the numerical LSQC solution and the number of flipped triangles (faces with negative Jacobian) above the corresponding subfigure.  Same format applies to Figure \ref{fig:large_mu} and Figure \ref{fig:large_distortion}.} \label{fig:smaller BC performance} 
\end{figure}
\begin{figure}
    \centering
    \includegraphics[width=\linewidth]{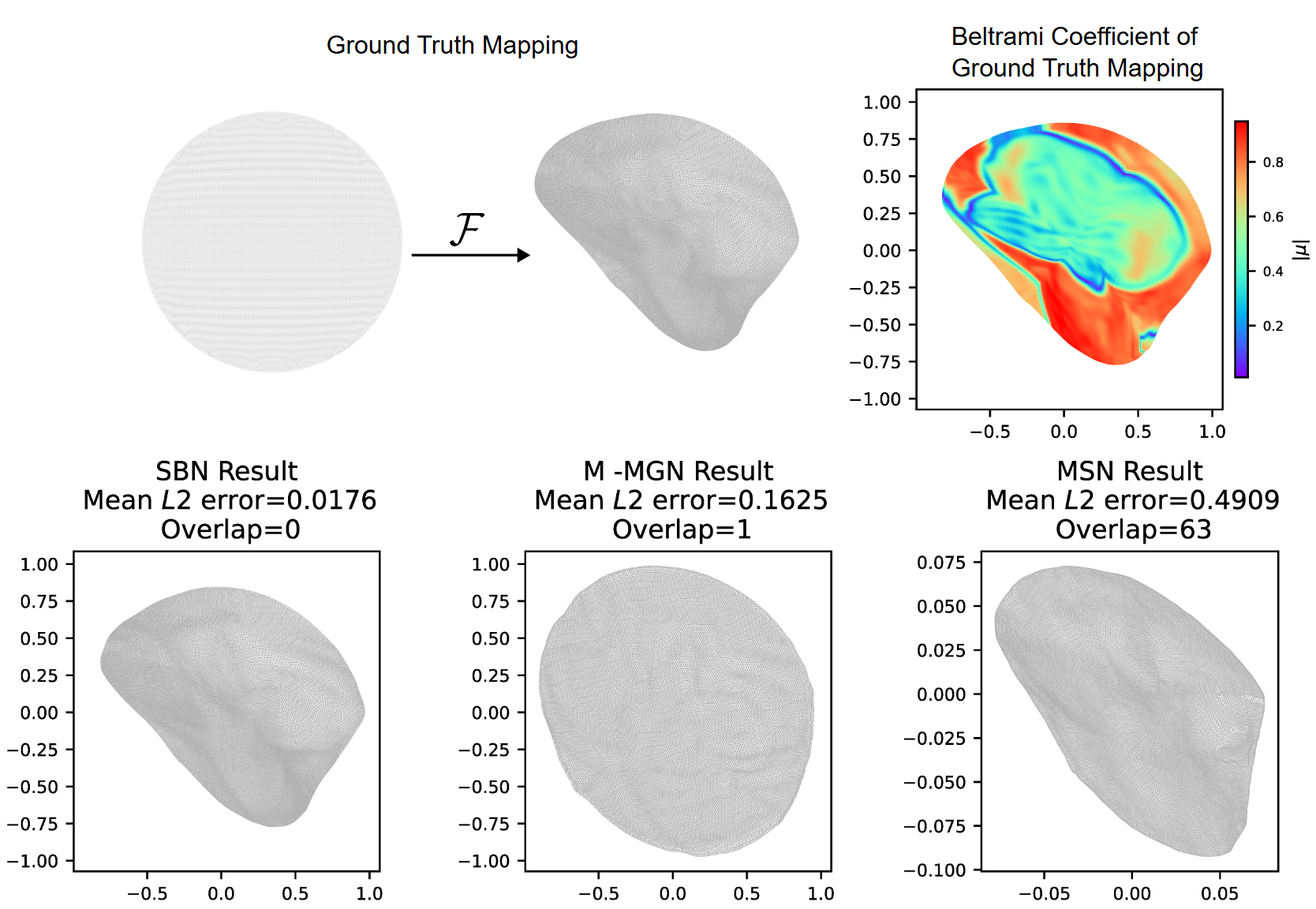}
    \caption{ Comparison of three models (SBN, M-MGN and MSN) for simulating LSQC when the prescribed Beltrami coefficients have a larger range.}
    \label{fig:large_mu}
\end{figure}

\begin{figure}
    \centering
    \includegraphics[width=\linewidth]{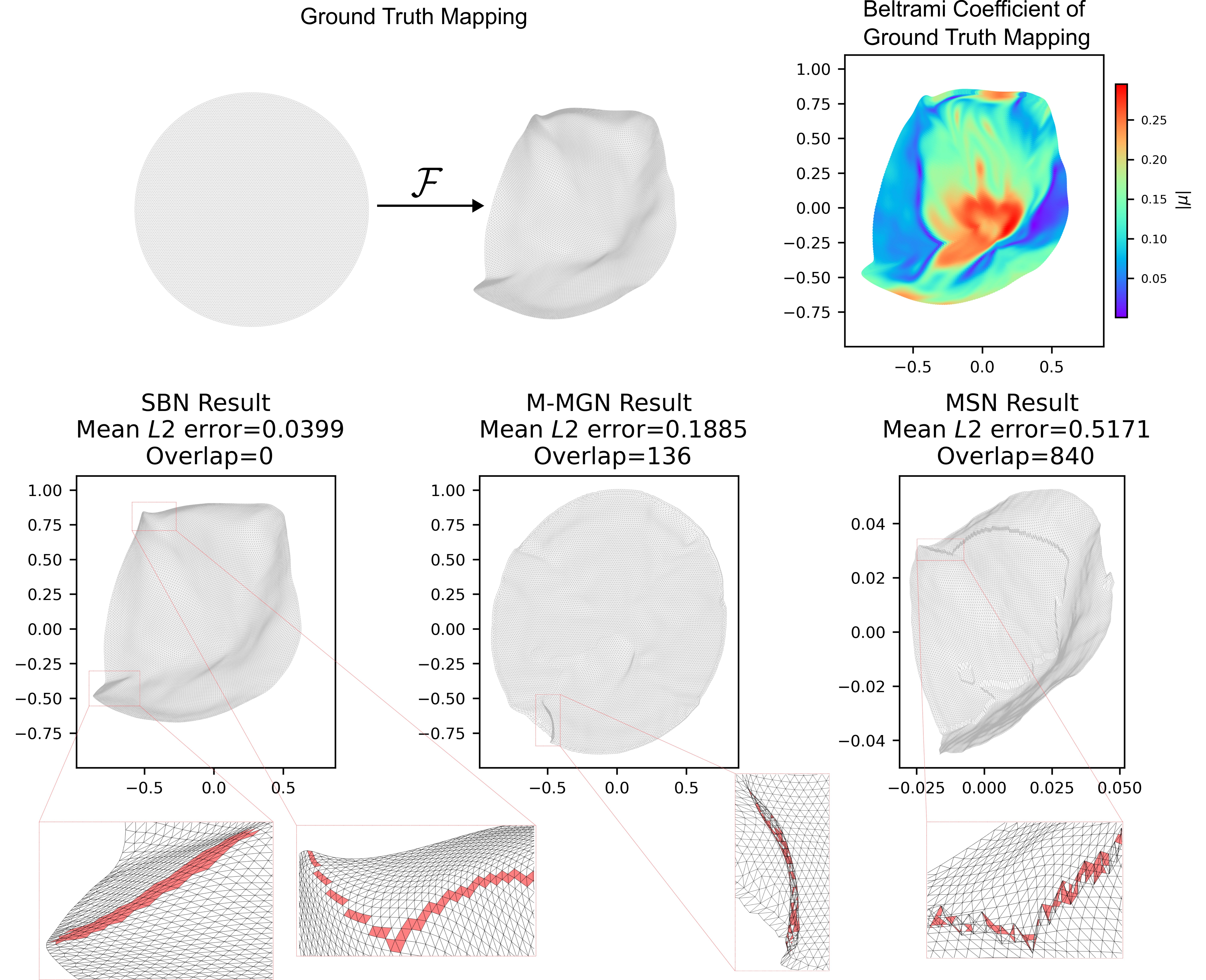}
    \caption{ Comparison of SBN, M-MGN, and MSN for simulating LSQC in a case where the desired mapping exhibits a large distortion. In addition, four red rectangular frames are drawn to highlight regions of interest: the red frames in the M-MGN and MSN results enclose areas containing flipped triangles, visualized as light red triangles in the zoomed-in views; the corresponding red frames in the SBN result mark the same regions on our mapping, where the triangles remain non-flipped and well-shaped.}
    \label{fig:large_distortion}
\end{figure}
A self-ablation study was conducted to investigate individual contributions of the multiscale message-passing mechanism and the mesh spectral layer (MSL) within our architecture.
Removing MSL yields Multiscale MeshGraphNet (M-MGN)  \cite{fortunato2022multiscale}; using only MSL gives Mesh Spectral Net (MSN). Across all three test cases (Figure \ref{fig:smaller BC performance}-Figure \ref{fig:large_distortion}), SBN significantly outperforms both ablations, confirming the synergy between multiscale message passing and spectral layers.

To further evaluate SBN, we compared it with M-MGN, MSN, and the numerical LSQC algorithm, running tests across 10,000 sets of BCs and pinned points. 
Two regimes were studied: (1) maximum BC $\ell^2$ norm less than 0.4, and (2) maximum BC $\ell^2$ norm between 0.4 and 1. We measured: non-injective triangles (faces with negative Jacobian), nodewise $\ell^2$ error, boundary $\ell^2$ error, and BC $\ell^2$ error (vs. target face-averaged BCs). Table \ref{tab:metrics_case1} and Table \ref{tab:metrics_case2} show that SBN achieves the lowest approximation errors among 3 models in both regimes, compared with numerical results.
\begin{table}[h]
\centering
\begin{subtable}[t]{0.48\textwidth}
\caption{ Case 1 :maximum BC norm is smaller than 0.4} \label{tab:metrics_case1}
\resizebox{\textwidth}{!}{%
\begin{tabular}{|l|c|c|c|} \hline \textbf{Method} & Nodewise $\ell^2$ Error & Boundary $\ell^2$ Error & BC $\ell^2$ Error \\ \hline
 SBN & \textbf{0.02239} & \textbf{0.03352} & \textbf{0.02306} \\ \hline
 M-MGN  & 0.09481 & 0.15194 & 0.07031 \\ \hline 
 MSN & 0.49009 & 0.73313 & 0.09514 \\ \hline \end{tabular}
}
\end{subtable}%
\hfill
\begin{subtable}[t]{0.48\textwidth}
\caption{ Case 2: max BC norm is between 0.4 and 1} \label{tab:metrics_case2}
\resizebox{\textwidth}{!}{%
\begin{tabular}{|l|c|c|c|} \hline \textbf{Method} & Nodewise $\ell^2$ Error & Boundary $\ell^2$ Error & BC $\ell^2$ Error \\ \hline
 SBN & \textbf{0.04671} & \textbf{0.07217} & \textbf{0.07679} \\ \hline 
 M-MGN & 0.23172 & 0.37515 & 0.19477 \\ \hline 
 MSN & 0.47382 & 0.70073 & 0.18091 \\ \hline \end{tabular}
}
\end{subtable}
\caption{ Performance of SBN, M-MGN and MSN in two cases, both averaged over 10,000 tests. Lower values indicate better performance.}
\end{table}
\subsection{Equiareal parameterization}
\label{subsec: DEQ}

We demonstrated SBN-Opt's superiority over numerical methods DEM\cite{choi2018density} and DEQ \cite{lyu2024bijective} in both area preserving and conformal distortion control. Problem formulation details are in Section \ref{sec: neural lsqc}. Our assessment was performed in both simply and multiply connected scenarios, systematically examining three aspects: the distribution of BCs, face density uniformity, and angle distortion of 2D parameterization result compared with 3D surface. The angle distortion of a triangle $T_i$ in the 2D parameterization is defined as $A_i=\frac{1}{3}\sum_{j=1}^{3}|\theta_{ij}-\tilde{\theta}_{ij}|$ where $\theta_{ij},\tilde{\theta}_{ij}$ are $j$-th angle of $T_i$ in the parameterization and its correspondence $U_i$ in the 3D surface, respectively.
\subsubsection{Simply Connected Cases} 
We first considered two simply connected examples: a synthetic “peak” model and a lion head surface. In both experiments, we compared our SBN-Opt framework with the DEM method \cite{choi2018density}.

For the peak model, we set the weights of the task loss $E_1$, the angle distortion regularizer $E_{\text{BC}}$, and the smoothness regularizer $E_{\text{smooth}}$ to 1, 5e-2 and 1e-3, respectively. The resulting parameterizations are shown in Figure \ref{fig:peak_mapping_result} with each face colored according to the density error $\frac{|\text{area}(T)-\text{area}(f(T))|}{|\text{area}(f(T))|}$. Indicated by Figure \ref{fig:peak_result_bc_angle_density_comparison}, our SBN-Opt produced an equiareal parameterization whose face-area distribution is highly concentrated around the target density, with the variance of the face density reduced from 0.7382 to 0.0052. At the same time, SBN-Opt achieved lower conformal distortion: the mean BC norm decreased from 0.2455 to 0.2418, and the mean angle distortion decreased from $29.79^\circ$ to $28.57^\circ$. Importantly, our mapping remained a homeomorphism without any overlaps while DEM's result contained 4 flipped triangles. The histograms in Figure \ref{fig:peak_result_bc_angle_density_comparison} illustrate that, compared with DEM, SBN-Opt notably suppressed extreme outliers in the density distribution while improving angular fidelity.

\begin{figure}
    \centering
    \includegraphics[width=\linewidth]{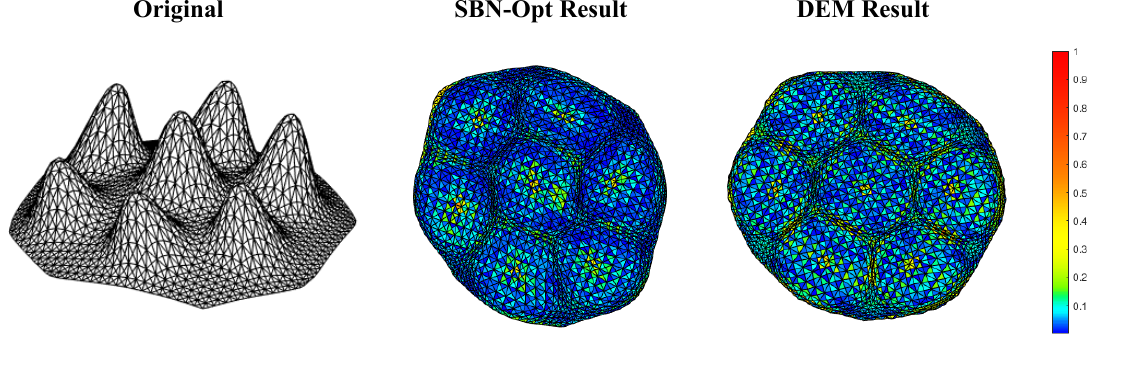}
    \caption{Equiareal parameterization of the "peak" model. From left to right: The original 3D mesh, the parameterization result from our SBN-Opt, and the result from the DEM algorithm. The 2D parameterization results are colored associated with the relative error $\frac{|\text{area}(T)-\text{area}(f(T))|}{|\text{area}(f(T))|}$, and the overall color of SBN-Opt's result is deeper than that of DEM's result, which means SBN-Opt's result is more area-preserving.}
    \label{fig:peak_mapping_result}
\end{figure}
\begin{figure}
    \centering
    \includegraphics[width=\linewidth]{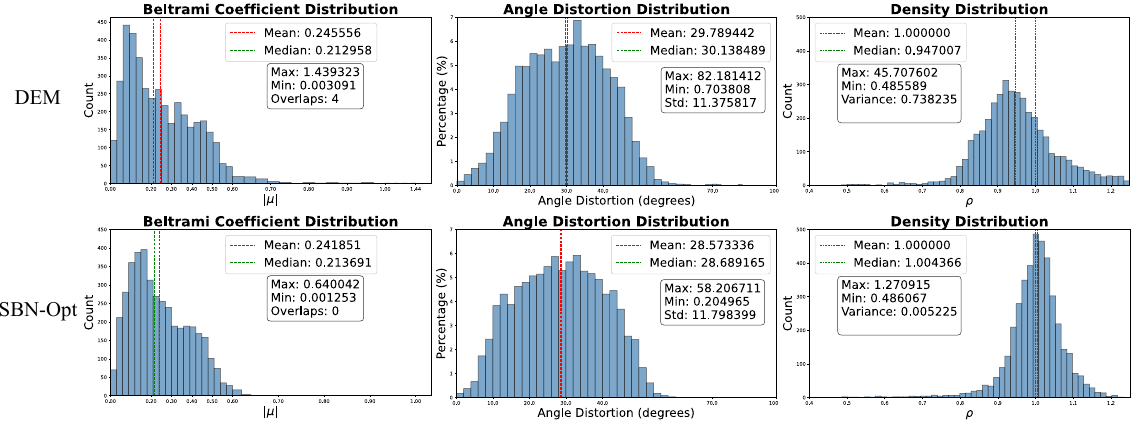}
    \caption{ 
    Statistical comparison of parameterization results for the peak model. Top row: results for DEM; bottom row: results for SBN-Opt.
    From left to right: (a) Histogram of BC magnitudes; (b) Histogram of per-face angle distortion (in degrees); (c) Histogram of the face density distribution. Note that the density histogram of DEM's result is visibly truncated to the value range of SBN-Opt's result so that the two distributions can be compared on the same scale. Same format applies to Figure \ref{fig:lion_result_bc_angle_density_comparison}, Figure \ref{fig:1-hole_BC_angle_density_comparsion} and Figure \ref{fig:2-hole_BC_angle_density_comparsion}. }
    \label{fig:peak_result_bc_angle_density_comparison}
\end{figure}

In the second example, the weights of task loss $E_1$, angle distortion loss $E_{\text{BC}}$ and smoothness loss $E_{\text{smooth}}$ were 1, 1e-2 and 1e-2, respectively. 
Figure \ref{fig:lion_mapping_result} shows the parameterization results, and Figure \ref{fig:lion_result_bc_angle_density_comparison} reports the corresponding statistics. Compared with DEM, SBN-Opt substantially reduced conformal distortion: the average BC norm was reduced by 26.0\% (from 0.4599 to 0.3402), and the mean angle distortion decreased by 10.5\% (from $23.24^\circ$ to $20.81^\circ$). The area preserving performance was also greatly improved, with the variance of the face density reduced to 0.0074, whereas there were faces with significantly distorted densities in the DEM's result. Again, no overlaps were observed in our result. Overall, on simply connected surfaces, SBN-Opt consistently produced maps that are both closer to the target density and more conformal than DEM.
\begin{figure}
    \centering
    \includegraphics[width=\linewidth]{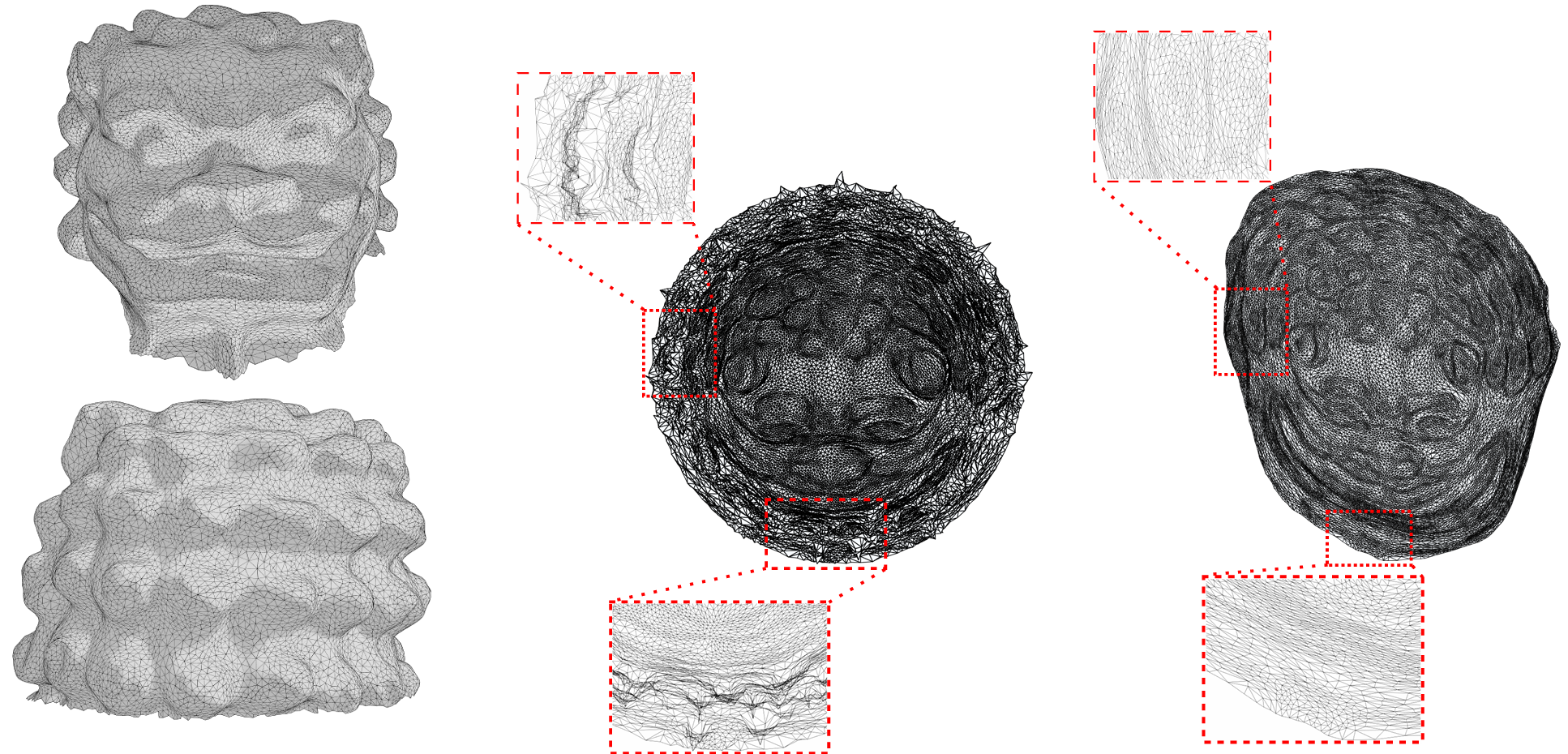}
    \caption{ Left: original lion head mesh shown from front (top) and back (bottom).
Middle: DEM's density-equalizing parameterization. Two red dashed rectangles mark regions are zoomed in where the remeshed triangles are severely distorted and contain local fold-overs (flipping triangles).
Right: SBN-Opt's equiareal parameterization. The red dashed rectangles indicate the regions corresponding to those highlighted in the DEM result; the zoom-in views show that the triangles in these areas are non-flipping and well shaped.}
    \label{fig:lion_mapping_result}
\end{figure}
\begin{figure}
    \centering
    \includegraphics[width=\linewidth]{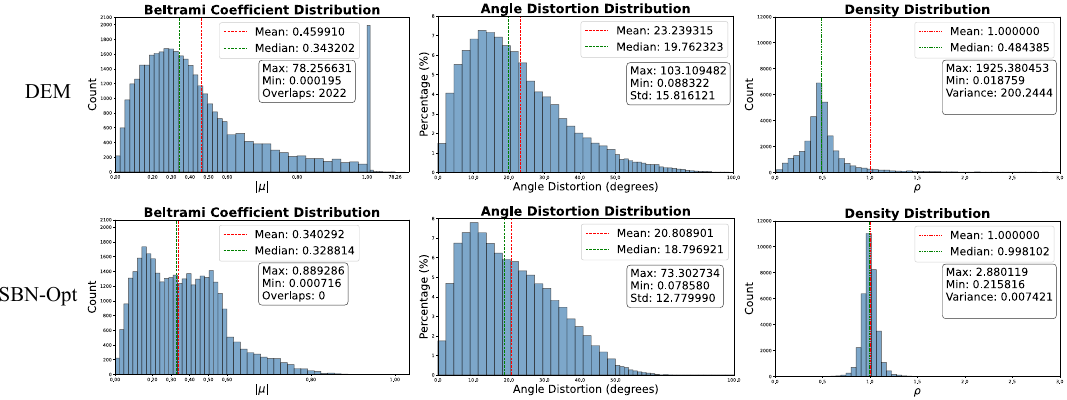}
    \caption{ Statistical comparison of parameterization results for the lion head model.}
    \label{fig:lion_result_bc_angle_density_comparison}
\end{figure}
\begin{figure}
    \centering
    \includegraphics[width=\linewidth]{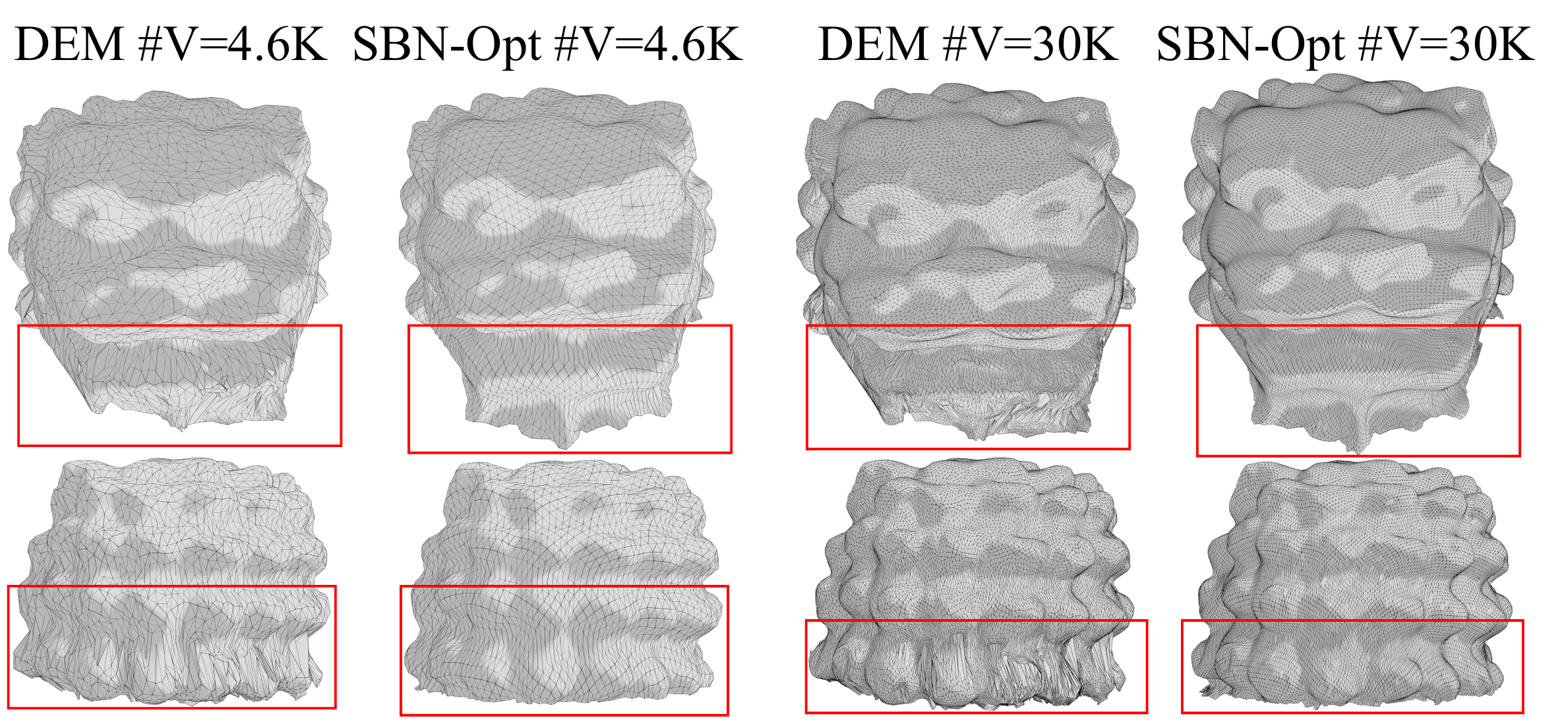}
    \caption{ Remeshing comparison for the lion head surface at two resolutions using DEM and SBN-Opt. From left to right: remeshed result by DEM with 4.6K vertices, remeshed result by SBN-Opt with 4.6K vertices, remeshed result by DEM with 30K vertices, and remeshed result by SBN-Opt with 30K vertices.  For each method and resolution, the top row shows a front view of the remeshed surface and the bottom row shows a back view. Red rectangular boxes mark regions of interest near the lower part of the mesh, which implies that SBN-Opt produces more regular and uniformly sized elements in the highlighted regions than DEM.}
    \label{fig:lion_remeshed_mesh_result_comparison}
\end{figure}
\subsubsection{Multiply Connected Cases}
We next tested SBN-Opt on multiply connected surfaces and compared it with the DEQ method\cite{lyu2024bijective}. We considered two examples: a human face with one hole and a human face with two holes.

For the human face with one hole, we set the weights of $E_1$, $E_{\text{BC}}$ and  $E_{\text{smooth}}$ to 1, 1e-2 and 1e-2, respectively. The parameterization obtained by SBN-Opt and DEQ are shown in Figure \ref{fig:1-hole_BC_angle_density_comparsion}, with quantitative statistics in Figure \ref{fig:1-hole_BC_angle_density_comparsion}. Our method achieved a noticeable improvement in conformal quality: the average BC norm was reduced by 15.8\% (from 0.1416 to 0.1192), and the mean angle distortion decreased by 17.0\% (from $8.208^\circ$ to $6.810^\circ$). At the same time, area preserving performance was significantly enhanced: the variance of the face density dropped from 7.996e-2 for DEQ to 9.47e-4 for SBN-Opt.

In the second example, the weights of task loss $E_1$, $E_{\text{BC}}$ and $E_{\text{smooth}}$ were 1, 1.5e-1 and 0, respectively. The visual comparison is given in Figure \ref{fig:2-hole_mapping_result} and the statistical comparison between our result and DEQ's is shown in Figure \ref{fig:2-hole_BC_angle_density_comparsion}. Again, SBN-Opt yielded a parameterization with substantially reduced conformal distortion: the average Beltrami norm decreased by 15.5\% (from 0.1305 to 0.1095), and the mean angle distortion decreased by 13.5\% (from $7.244^\circ$ to $6.270^\circ$). In terms of area distribution, our method produced a density that is much closer to the target, with its variance reduced from 31.46 to 1.517e-2.

Across both multiply connected examples, SBN-Opt consistently delivered parameterizations that better balance area preserving and conformal distortion than DEQ. By directly optimizing over Beltrami coefficients, our framework avoided the severe density outliers and excessive angular distortion that appear in the numerical baselines.

\begin{figure}
\subfloat[ Equiareal parameterization for the human face with one hole.]   
  {
      \centering
		\includegraphics[width=0.45\textwidth]{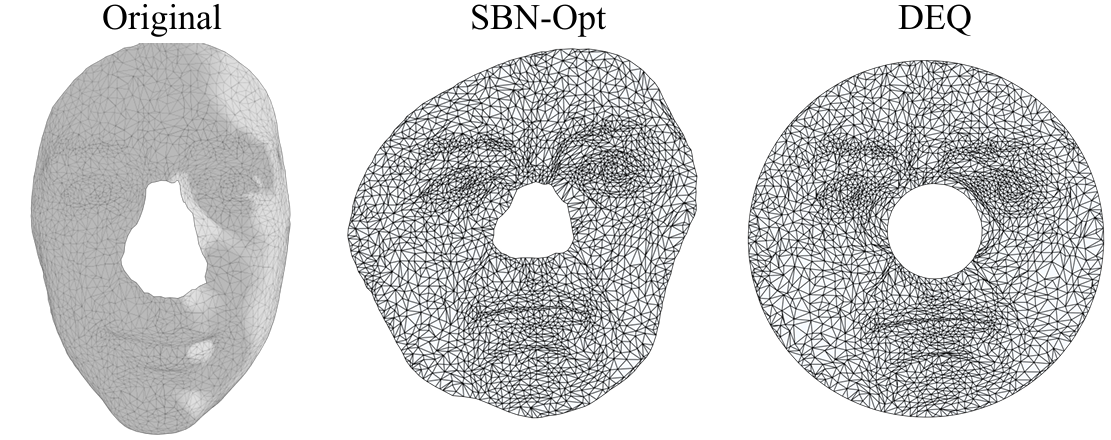}
    \label{fig:1-hole_mapping_result}
  }
  \quad
  \subfloat[ Equiareal parameterization for the human face with two holes.]
  {
      \centering
		\includegraphics[width=0.45\textwidth]{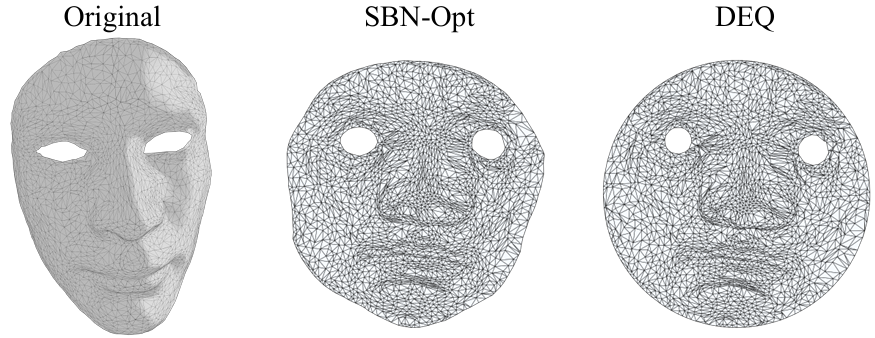}
    \label{fig:2-hole_mapping_result}
  }
  \caption{ (Left) The original 3D mesh; (Middle) The 2D parameterization result from our SBN-Opt; (Right) The remeshed 3D surface generated from the parameterization derived by SBN-Opt.}   
\end{figure}
\begin{figure}
    \centering
    \includegraphics[width=\linewidth]{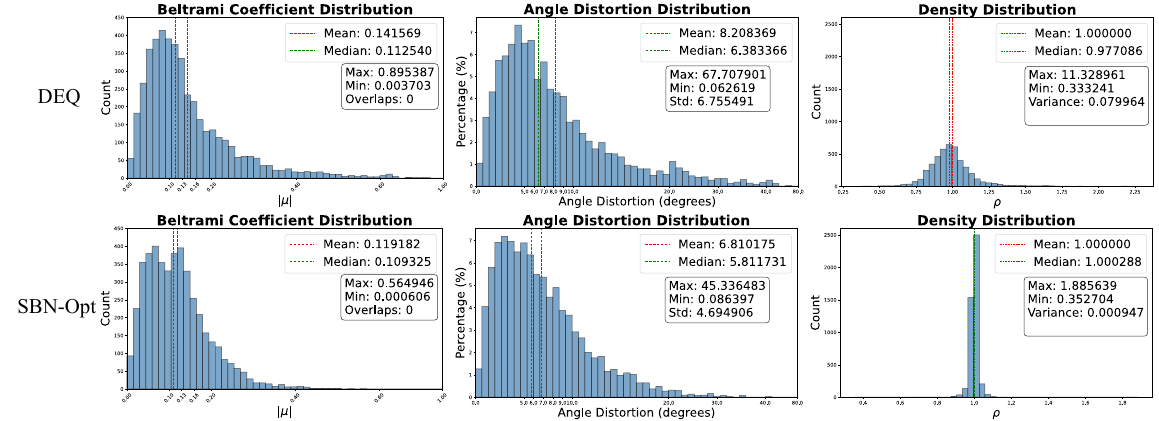}
    \caption{ Statistical comparison of parameterization results for the human face with one hole.}
    \label{fig:1-hole_BC_angle_density_comparsion}
\end{figure}
\begin{figure}
    \centering
    \includegraphics[width=\linewidth]{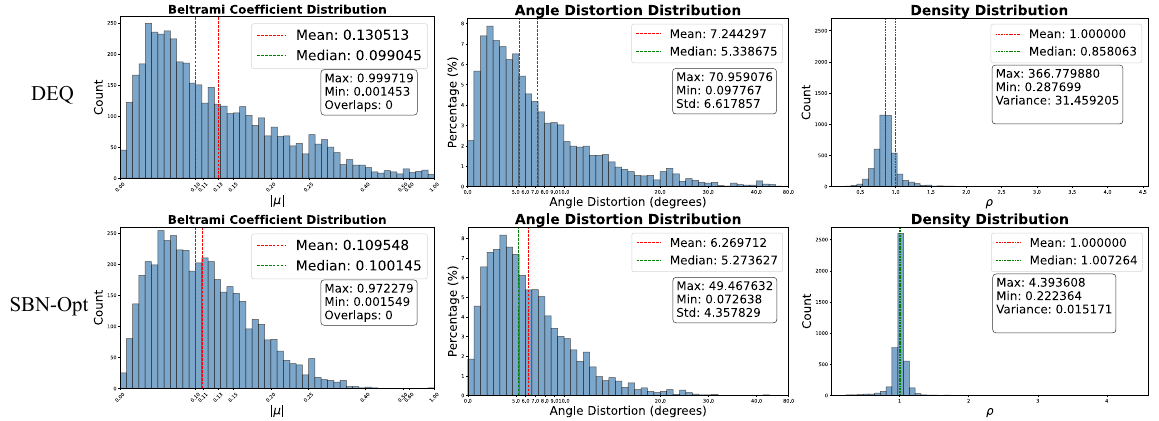}
    \caption{ Statistical comparison of parameterization results for the human face with two holes.}
    \label{fig:2-hole_BC_angle_density_comparsion}
\end{figure}
\begin{figure}
\subfloat[ Remeshed Result Comparison for the human face with one hole.]   
  {
      \centering
		\includegraphics[width=0.46\textwidth]{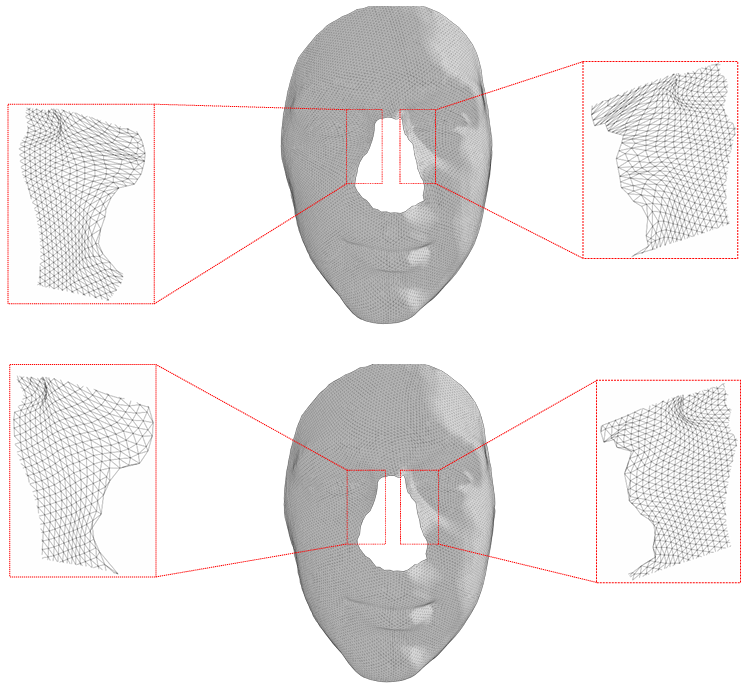}
    \label{fig:1-hole_remeshed_result_comparison}
  }
  \quad
  \subfloat[Remeshed Result Comparison for the human face with two holes.]
  {
      \centering
		\includegraphics[width=0.46\textwidth]{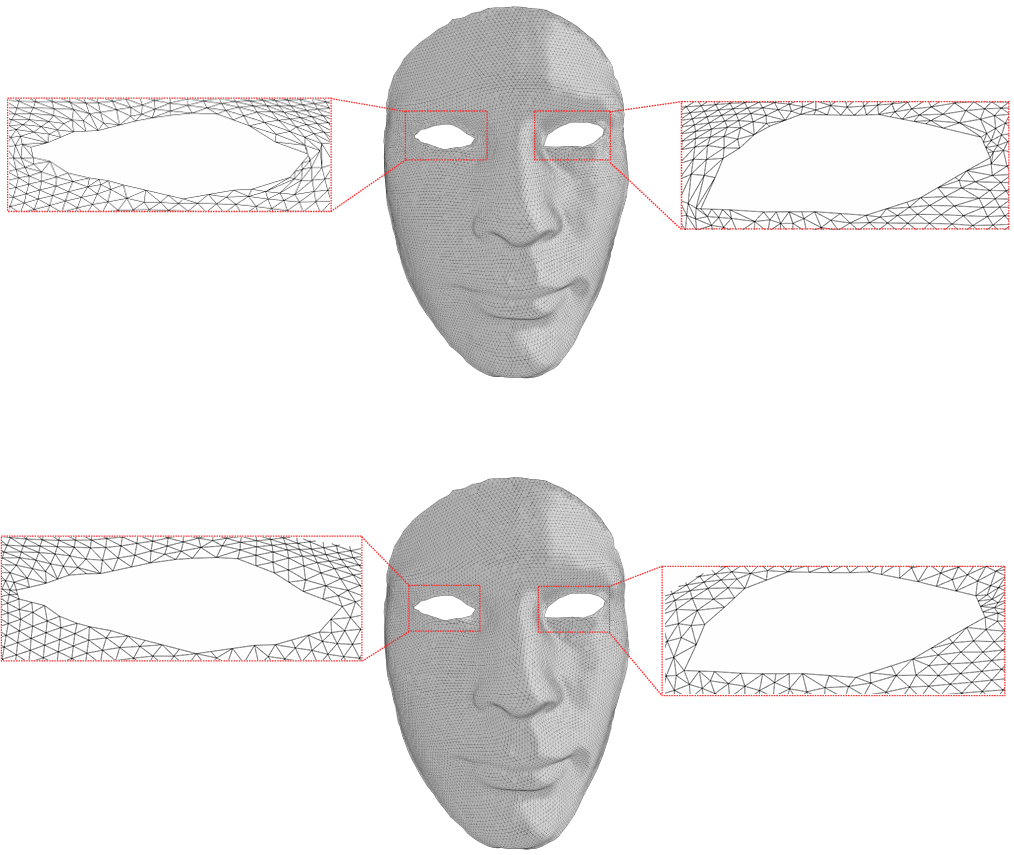}
    \label{fig:2-hole_remeshed_result_comparison}
  }
  \caption{ Remeshing comparison for human face surfaces with one (left) and two holes (right), respectively. For each example, the top row shows the remeshed result obtained from the baseline method (DEQ), and the bottom row shows the remeshed result obtained from the SBN-Opt–based parameterization. Around each face, red dashed rectangles indicate zoom-in windows placed along the eye or nose boundaries; the inset views display the local triangle patterns in these regions.
The side-by-side zoomed comparisons reveal that the SBN-Opt based remeshing yields more regular, well-shaped, and uniformly sized elements.}   
\end{figure}
\subsubsection{Application to Remeshing}
A high-quality equiareal parameterization is not only an end in itself, but also a good tool for downstream geometry processing. In this subsection, we demonstrate how the equiareal parameterization produced by SBN-Opt can be used to generate high-quality remeshed 3D surfaces, and how the improved parameterization quality directly translates into better remeshing results.

Let $S$ be a surface, $f:S\rightarrow \mathbb{C}$ be the equiareal map, and considering a set of uniformly distributed points $\mathcal{P}$ on $f(S)$, we can triangulate them and obtain a new triangulation $\mathcal{T}$. Then, using the inverse mapping $f^{-1}$, we can interpolate $\mathcal{P}$ onto $S$ and obtain a remeshed triangulation  $(f^{-1}(\mathcal{P}),\mathcal{T})$ of the surface $S$.To assess how the quality of the parameterization impacts the remeshing, we applied this procedure to three surfaces in the last experiments and obtain: lion head with 4.6K vertices (see Figure \ref{fig:lion_remeshed_mesh_result_comparison} left), lion head with 30K vertices (see Figure \ref{fig:lion_remeshed_mesh_result_comparison} right), human face with one hole (see Figure \ref{fig:1-hole_remeshed_result_comparison}) and human face with two holes (see Figure \ref{fig:2-hole_remeshed_result_comparison}). For each surface, we constructed remeshed results from the parameterizations produced by SBN-Opt and by the baseline DEM/DEQ methods. In Figure \ref{fig:lion_remeshed_mesh_result_comparison}, Figure \ref{fig:2-hole_remeshed_result_comparison} and Figure \ref{fig:2-hole_remeshed_result_comparison}, we highlight several zoomed-in regions with red triangular frames. Visually, the remeshed surfaces derived from our SBN-Opt parameterizations exhibited more regular triangle shapes and more uniform sampling density, while the DEM/DEQ-based remeshed surfaces contained noticeably skewed and stretched triangles in regions where the underlying parameterizations suffered from larger distortion.

To quantify these observations, in addition to the angle distortion between 2D uniform meshes and 3D remeshed surfaces, we evaluated two more complementary metrics on the remeshed 3D surfaces:
\begin{enumerate}
    \item \textbf{Size variation}. On the remeshed 3D surface \(\widetilde{\mathcal{M}}\), we computed the standard deviation of the triangle areas: \[ \delta_{\text{size}} = \text{std}\bigl(\text{Area}(T)\bigr), \] where \(\text{Area}(T)\) is the area of each face \(T\) in \(\widetilde{\mathcal{M}}\). A smaller \(\delta_{\text{size}}\) reflects a more uniform distribution of triangle sizes, which is consistent with a better equiareal parameterization.
    \item \textbf{Shape variation} For each triangle \(T_i\) in \(\widetilde{\mathcal{M}}\), let \(e_{ij}\) be the length of its three edges. We measured the deviation of the normalized edge lengths from the equilateral configuration: \[ R_i = \sum_{j=1}^3 \left|\frac{e_{ij}}{\sum_{j=1}^3 e_{ij}} - \frac{1}{3}\right|. \] The shape variation is then defined as \[ \delta_{\text{shape}} = \text{mean}(R_i). \] A smaller \(\delta_{\text{shape}}\) value indicates better control of angular distortion in the remeshing.
\end{enumerate}
For each of the four remeshing examples, we computed these three metrics for the remeshed surfaces obtained from SBN-Opt and from DEM/DEQ, and summarized the results in the Table \ref{tab:remesh_metric_comparison}. 
\begin{table}[t]
\centering 
\caption{ The performance of remeshed Results by SBN-Opt and DEM. We recorded the mean angle distortion, size variation and shape variation of each result.} \label{tab:remesh_metric_comparison}
\resizebox{\textwidth}{!}{
\begin{tabular}{|l|c|c|c|c|} \hline Subject & Method & Mean angle distortion & Size variation & Shape variation \\ \hline
\multirow{2}{*}{Lion Head(4.6K)}& DEM & 21.621582 & 0.2355 & 0.6022 \\ \cline{2-5} 
& SBN-Opt & \textbf{21.35} & \textbf{0.1975} & \textbf{0.1182} \\ \cline{1-5}
\multirow{2}{*}{Lion Head(30K)} & DEM & 23.132975 & 0.2219 & 0.1960 \\ \cline{2-5}  
& SBN-Opt & \textbf{22.30} & \textbf{0.2023} & \textbf{0.01620} \\ \cline{1-5}
\multirow{2}{*}{Human face 1 hole(10K)} & DEQ & 7.586 & 0.07717 & 2.2819e-5 \\ \cline{2-5}  
& SBN-Opt & \textbf{7.139} & \textbf{0.07566} & \textbf{1.398e-5} \\ \cline{1-5}
\multirow{2}{*}{Human face 2 holes(10K)} & DEQ & 6.924 & 0.07162 &  2.5748e-5 \\ \cline{2-5}  
& SBN-Opt & \textbf{6.755} & \textbf{0.07124} & \textbf{2.000e-5} \\  
\hline
\end{tabular}
}
\end{table}

Across all cases, the SBN-Opt–based remeshes consistently exhibited lower average angle distortion, smaller area variation, and reduced shape variation, compared with the DEM/DEQ-based remeshes. These quantitative improvements aligned with the visual comparisons in Figure \ref{fig:lion_remeshed_mesh_result_comparison}, Figure \ref{fig:1-hole_remeshed_result_comparison} and Figure \ref{fig:2-hole_remeshed_result_comparison}: triangles generated from our parameterizations were more regular and more uniformly sized, particularly in regions of high curvature or around holes where DEM/DEQ tends to produced elongated or crowded elements.

In summary, these experiments confirmed that a parameterization with lower angular distortion and better area preserving is crucial for high-quality remeshing. By explicitly optimizing over Beltrami coefficients and directly controlling local distortion, SBN-Opt not only outperforms numerical methods at the parameterization level, but also yields superior remeshed 3D surfaces in downstream applications.
\subsection{Inconsistent surface registration}
\label{subsec:ISR}
In this section, we tested our SBN-Opt on the problem of inconsistent surface registration. In the application of inconsistent surface registration, our framework demonstrates consistently superior performance when compared to the landmark-dependent LSQC method proposed by Qiu et al.\cite{qiu2020inconsistent}. Across three distinct registration tasks, our method achieves mappings with lower angular distortion and comparative or slightly better feature alignment. For details of the problem formulation, see Section \ref{sec: neural lsqc}. 

The first example is the registration of tooth surfaces. In Figure \ref{fig:MH_AMNH_3 mesh_highlighted}, the first two columns show the conformal parameterizations of the moving surface and the target static tooth surface, respectively. There were 4 regions selected on each surface for alignment and are highlighted in Figure \ref{fig:MH_AMNH_3 mesh_highlighted}.  In this example, the weights of intensity matching loss $E_{\text{I}}$, landmark matching loss $E_{\text{pc}}$, angle distortion loss $E_{\text{BC}}$ and smoothness loss $E_{\text{smooth}}$ were 1, 1e-1, 5e-1 and 1e-3, respectively. We compared our result with that of the method in Qiu\cite{qiu2020inconsistent} which is denoted by LSQC in the following. And to perform Qiu's method, we selected four landmarks, one in each selected region on both surfaces.
As shown in Figure \ref{fig:MH_AMNH_bc_angle_intensity_comparison}, 
When registering two tooth surfaces using four selected feature regions, our method reduced the angular distortion, measured by the average $\ell^2$ norm of BCs, by 12.2\% (from 0.1135 to 0.09966). Additionally, the angle distortion histogram for SBN-Opt was more concentrated around lower values and exhibits a lighter tail than that of LSQC, with mean angle distortion reduced by 10.7\% ($6.943^\circ$ to $6.202^\circ$). More significantly, the registration accuracy was substantially improved, with the average L1 intensity mismatch error on the overlapping region dropping by 16.9\% (from 0.0794 to 0.0660). The registration result in the 2D domain is shown in the third column of Figure \ref{fig:MH_AMNH_3 mesh_highlighted}. The green mesh is the deformed mesh from the source mesh under the bijective deformation map with the blue highlighted regions overlapping orange highlighted regions in the target mesh. The intersection region of the two meshes is the region of correspondence amongst the two tooth surfaces, see Figure \ref{fig:MH_AMNH_overlap_region_in_two_mesh}.
\begin{figure}
    \centering
    \includegraphics[width=\linewidth]{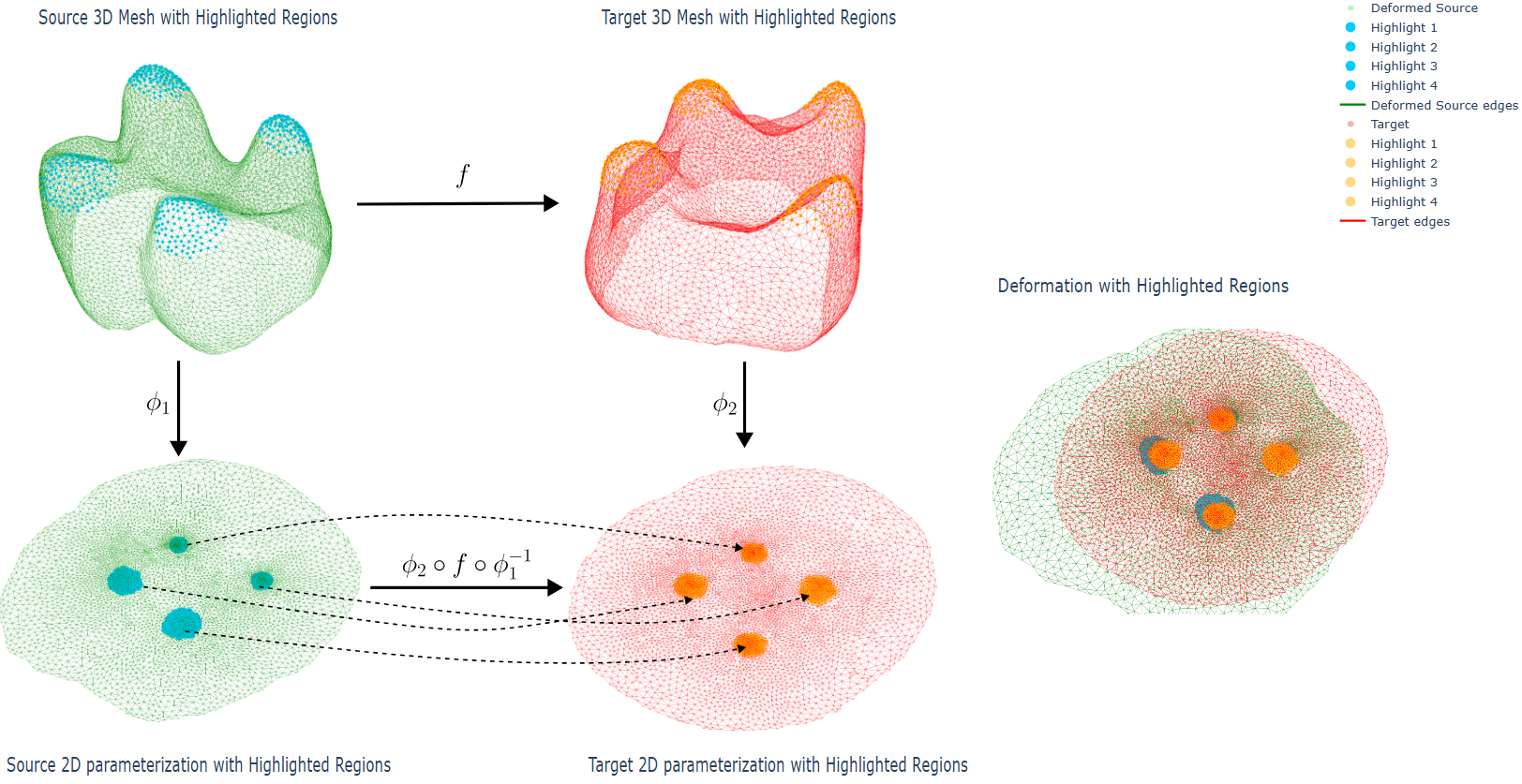}
    \caption{ The first two columns show the conformal parametrizations of the moving and static tooth surfaces, respectively. The region correspondences in the 2D parameter domains are also displayed. The registration result in the 2D domain is shown in the rightmost column.}
    \label{fig:MH_AMNH_3 mesh_highlighted}
\end{figure}
\begin{figure}
    \centering
    \includegraphics[width=\linewidth]{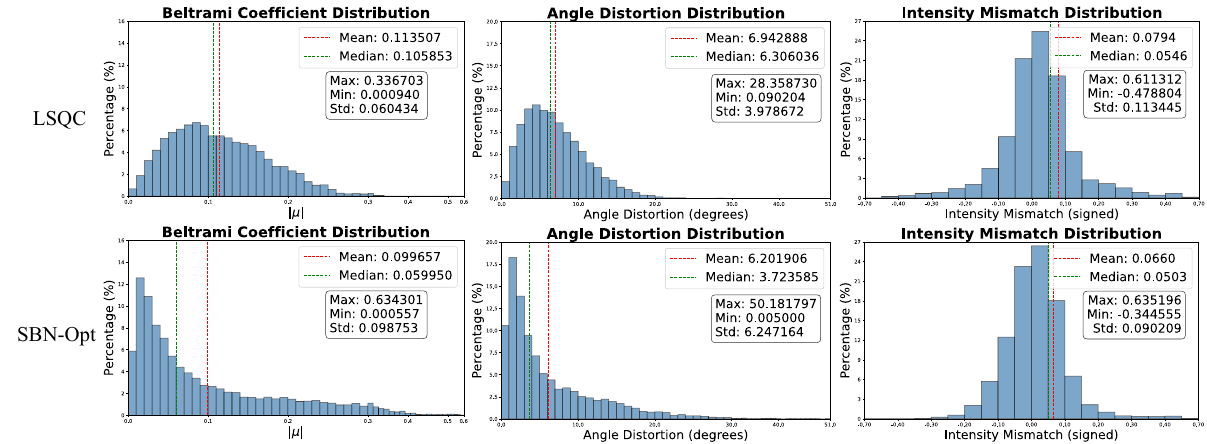}
    \caption{ Statistical comparison of registration results for the  tooth surfaces (Figure \ref{fig:MH_AMNH_3 mesh_highlighted}, Figure \ref{fig:MH_AMNH_overlap_region_in_two_mesh}). This figure format is used for all subsequent comparisons (Figure  \ref{fig:part2_part1_mapping_bc_angle_intensity_comparison} and Figure \ref{fig:source_025_target_120_bc_angle_intensity_comparison}). Top row: results for LSQC; bottom row: results for SBN-Opt.  From left to right: (a) Histogram of BC magnitudes; (b) Histogram of per-face angle distortion (in degrees); (c) Histogram of the intensity mismatch error distribution. }
    \label{fig:MH_AMNH_bc_angle_intensity_comparison}
\end{figure}
\begin{figure}
    \centering
    \includegraphics[width=\linewidth]{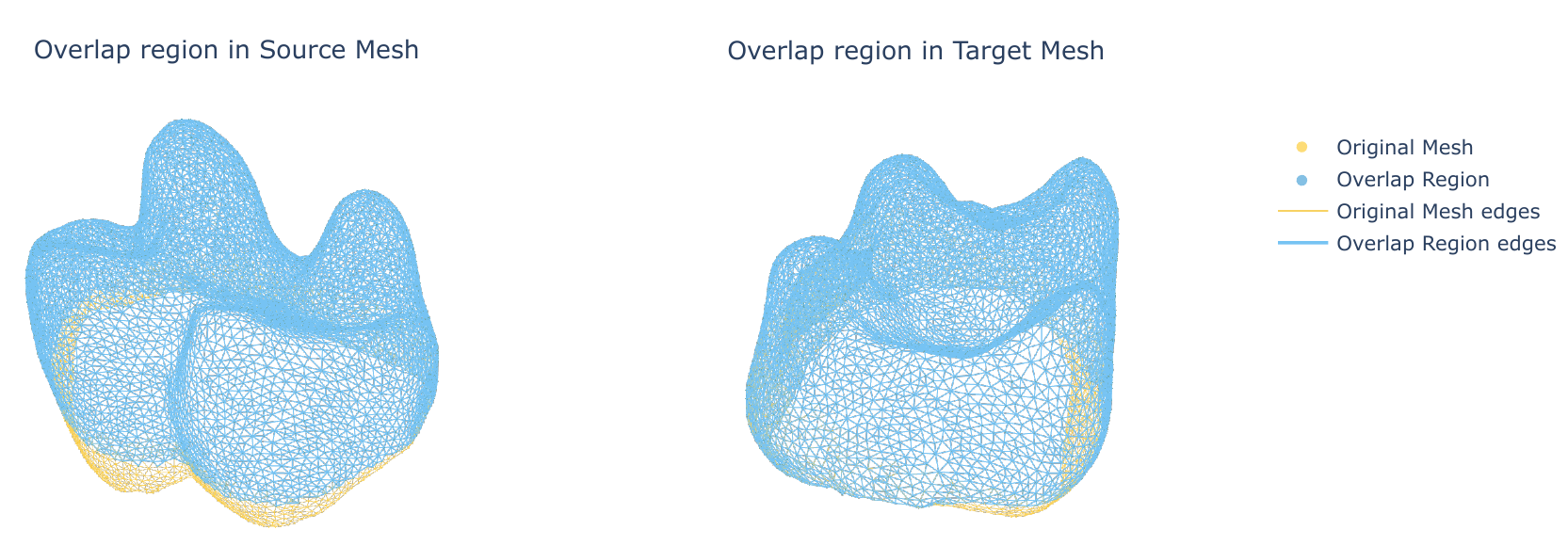}
    \caption{ Visualization of the computed correspondence for the tooth face registration. The identified region of correspondence is highlighted in blue on both the source surface (left) and the target surface (right). The complete surfaces are shown in yellow. This figure illustrates the extensive overlap found by the registration algorithm.}
    \label{fig:MH_AMNH_overlap_region_in_two_mesh}
\end{figure}

The second example is the registration of two partial human faces, which are obtained from the FaMoS Dataset\cite{TMPEH:CVPR:2023}.  Each subject was cropped so that the nose and mouth regions remain while the eyes and surrounding regions were partially removed. In Figure \ref{fig:part1_part2_3 mesh_highlighted}, the first two columns show the conformal parameterizations of the moving and target partial faces, respectively. There were 3 regions (bridge of nose, nose and mouth) selected on each surface for alignment and are highlighted in Figure \ref{fig:part1_part2_3 mesh_highlighted}. And to perform Qiu's method, we selected seven landmarks, two in bridge of nose, three in nose and two in mouth, respectively. In this example, the weights of intensity matching loss $E_{\text{I}}$, landmark matching loss $E_{\text{pc}}$, angle distortion loss $E_{\text{BC}}$ and smoothness loss $E_{\text{smooth}}$ were 1, 5e-1, 1e-2 and 1e-3, respectively.  We compared our result with that of LSQC. As shown in Figure \ref{fig:part2_part1_mapping_bc_angle_intensity_comparison}, our approach demonstrates its most significant advantage in reducing distortion. The average BC norm was reduced by an impressive 23.7\% (from 0.1439 to 0.1098). Consistent with this, the mean angle distortion was also decreased by 23.5\% (from $9.123^\circ$ to $6.983^\circ$)  The average intensity mismatch error was also reduced by 26.0\% (from 0.0327 to 0.0242). Qualitatively, our method produced a more reasonable correspondence; the resulting overlap region in the source mesh correctly avoided the eye region, an anatomical feature not present in the target partial face. In contrast, the LSQC method produced an overlap that incorrectly extended to the eye region, see Figure \ref{fig:part1_part2_overlap_region_in_two_mesh}. The registration result in the 2D domain is shown in the rightmost of Figure \ref{fig:part1_part2_3 mesh_highlighted}. The green mesh is the deformed mesh from the source mesh under the bijective deformation map with the blue highlighted regions overlapping orange highlighted regions in the target mesh. 
\begin{figure}
    \centering
    \includegraphics[width=\linewidth]{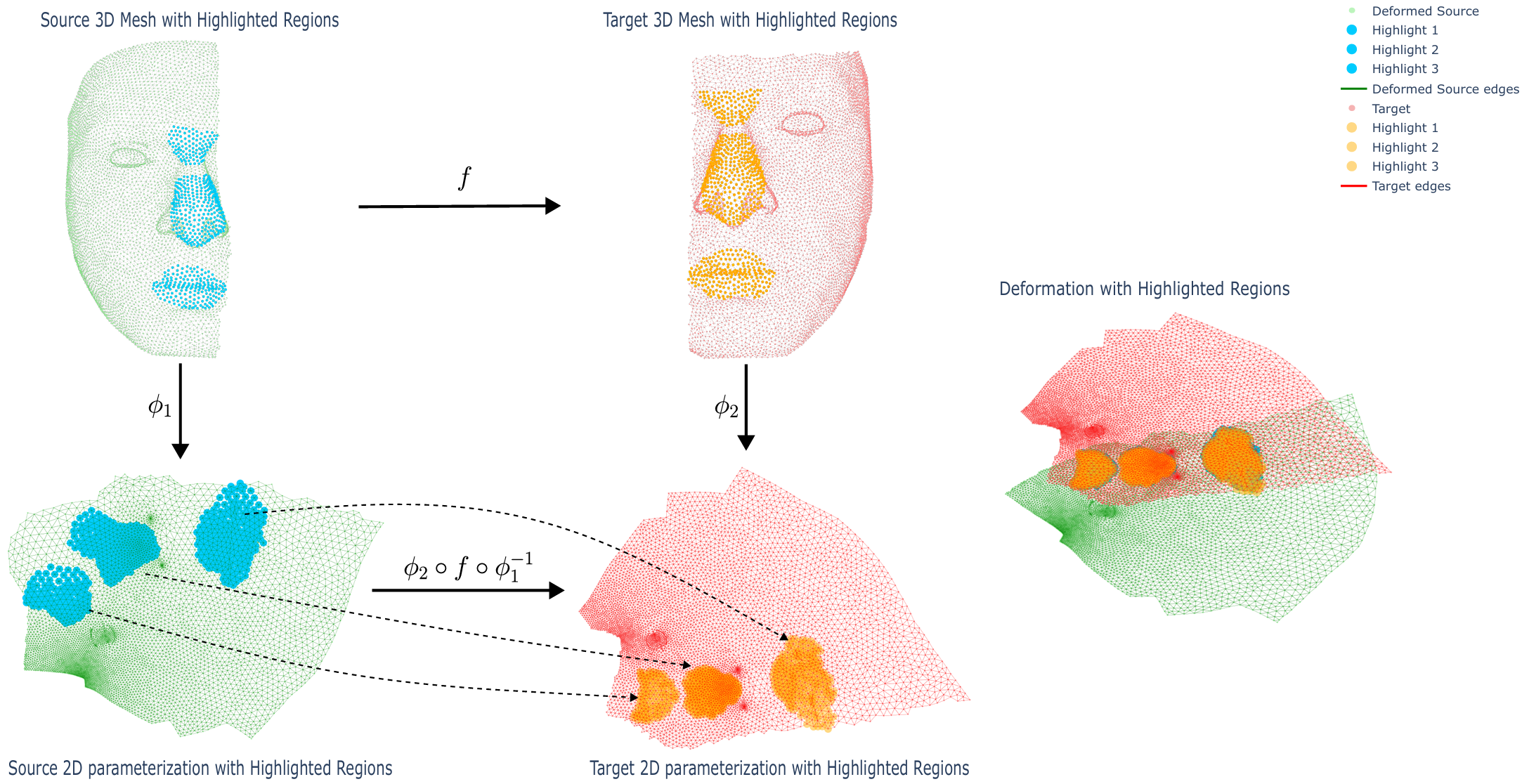}
    \caption{ The first two columns show the conformal parametrizations of the moving and static partial human faces, respectively. The region correspondences in the 2D parameter domains are also displayed. The registration result in the 2D domain is shown in the rightmost column.}
    \label{fig:part1_part2_3 mesh_highlighted}
\end{figure}
\begin{figure}
    \centering
    \includegraphics[width=\linewidth]{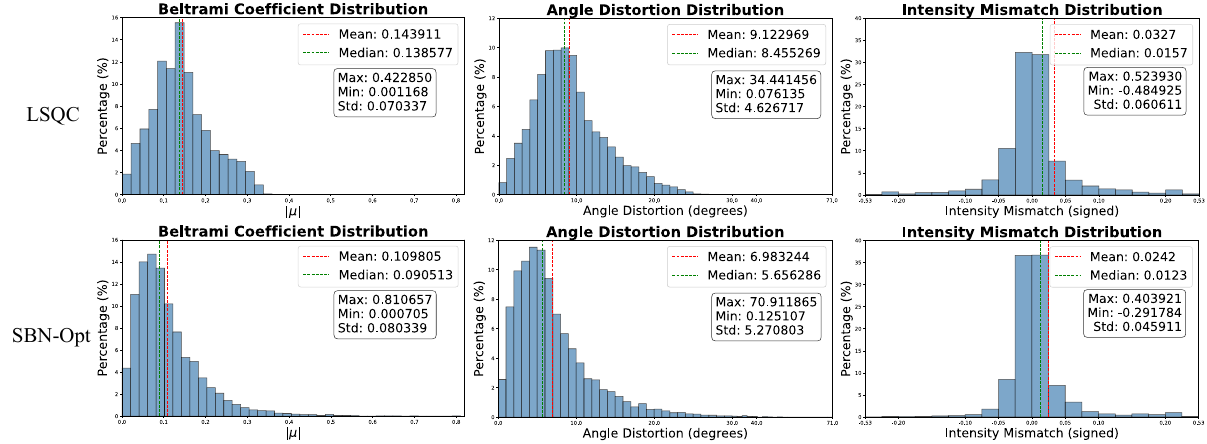}
    \caption{ Quantitative comparison for the partial human face registration shown in Figure \ref{fig:part1_part2_3 mesh_highlighted} The meaning of each histogram corresponds to that in Figure \ref{fig:MH_AMNH_bc_angle_intensity_comparison}. }
    \label{fig:part2_part1_mapping_bc_angle_intensity_comparison}
\end{figure}
\begin{figure}
    \centering
    \includegraphics[width=\linewidth]{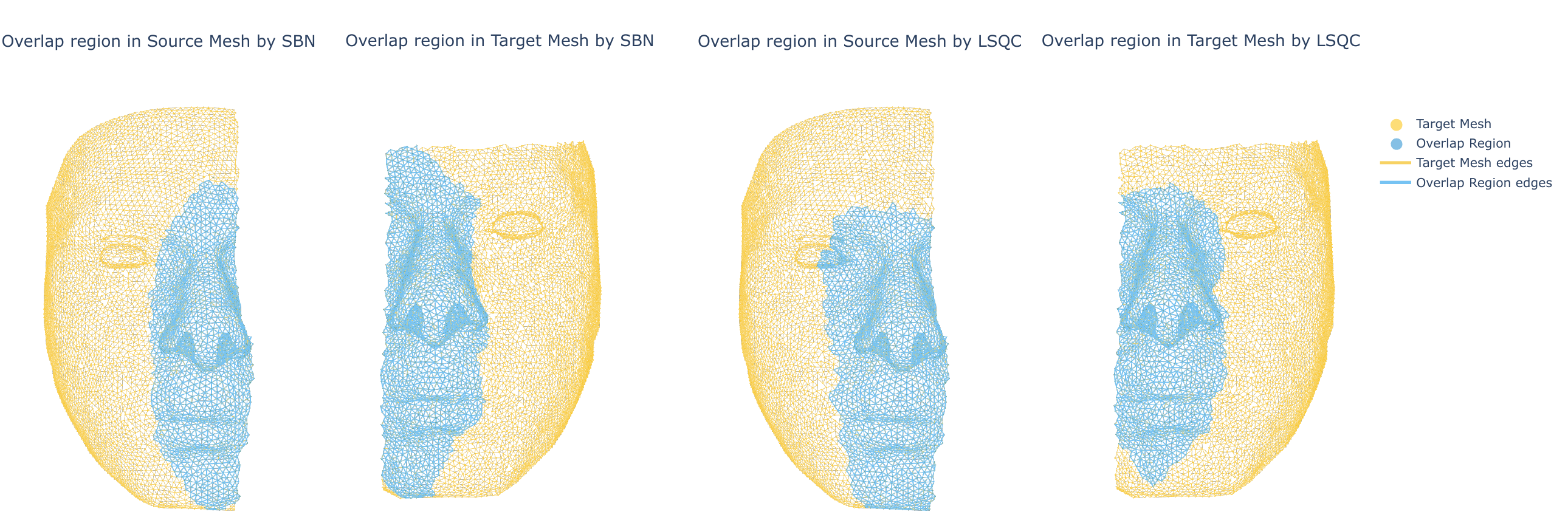}
    \caption{Overlap region comparison for partial human face registration with mouths closed. From left to right: overlap region visualized on the source mesh by SBN-Opt, on the target mesh by SBN-Opt, on the source mesh by LSQC, and on the target mesh by LSQC. Same Format is used in Figure \ref{fig:source_025_target_120_overlap_region_in_two_mesh}. In all subfigures, the full facial surface is shown in yellow, while the overlap region identified by the registration is rendered in blue. Compared with the LSQC's result,  the overlap computed by SBN-Opt is more anatomically plausible, which correctly excludes the eye region on the source mesh and yields a more localized correspondence around the nose and mouth on the target mesh.}
    \label{fig:part1_part2_overlap_region_in_two_mesh}
\end{figure}

The third example further tested our method on partial human faces whose mouths are open, so that both the source and target surfaces are multiply connected. In this setting, the parameter domains contained interior boundary components corresponding to the mouth openings. Figure \ref{fig:source_025_target_120_3 mesh_highlighted} shows the conformal parameterizations of the moving and static open-mouth faces in the first two columns, with highlighted feature regions used to guide registration, and the resulting 2D registration in the rightmost column. A key advantage of SBN-Opt is that it can operate directly on these multiply connected parameter domains without any topological modification. By contrast, the LSQC method of Qiu \cite{qiu2020inconsistent} assumes simply connected domains; to apply it here, one must first manually fill the mouth openings on both surfaces, convert them into simply connected surfaces, and then register the filled meshes. We used the same loss components $E_{\text{I}}$, $E_{\text{pc}}$, $E_{\text{BC}}$ and $E_{\text{smooth}}$ as in the previous examples, with weights 1,5e-1,3e-1 and 0, respectively. Figure \ref{fig:source_025_target_120_bc_angle_intensity_comparison} reports the quantitative comparison between SBN-Opt and LSQC. SBN-Opt again yielded a more concentrated BC histogram with the mean value dropped by 37.1\% (from 0.1078 to 0.0678), and the angle distortion was more concentrated around lower values with the mean value reduced by 30.8\% (from $7.465^\circ$ to $5.169^\circ$). Meanwhile, the intensity matching result is comparable compared with that of LSQC. The corresponding 3D overlap regions are visualized in Figure \ref{fig:source_025_target_120_overlap_region_in_two_mesh}. On both the source and target meshes, the overlap computed by SBN-Opt adhered closely to the facial features and correctly tracked the mouth opening boundaries. In contrast, the LSQC-based registration, which is working on filled, simply connected surrogates of the surfaces, exhibits less faithful alignment around the bridge of nose region and the overlap cannot correctly avoid the eye region in the target mesh. This example highlights that SBN-Opt naturally extends to multiply connected inconsistent surface registration without requiring domain filling or topological surgery, while simultaneously achieving better distortion control, competitive or superior matching accuracy and more reasonable registration result.

\begin{figure}
    \centering
    \includegraphics[width=\linewidth]{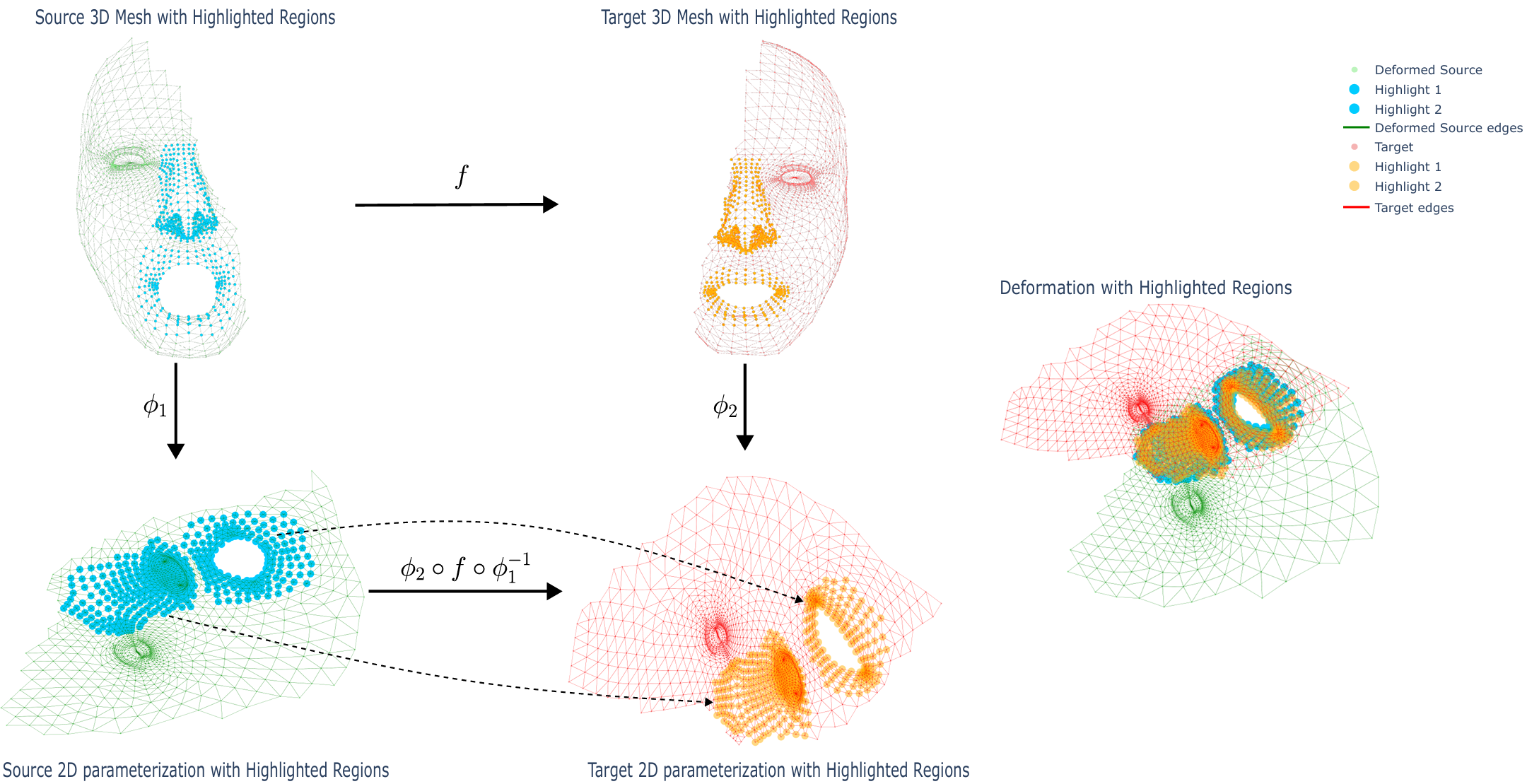}
    \caption{ The first two columns show the conformal parametrizations of the moving and static partial human faces, respectively. The region correspondences in the 2D parameter domains are also displayed. The registration result in the 2D domain is shown in the rightmost column.}
    \label{fig:source_025_target_120_3 mesh_highlighted}
\end{figure}
\begin{figure}
    \centering
    \includegraphics[width=\linewidth]{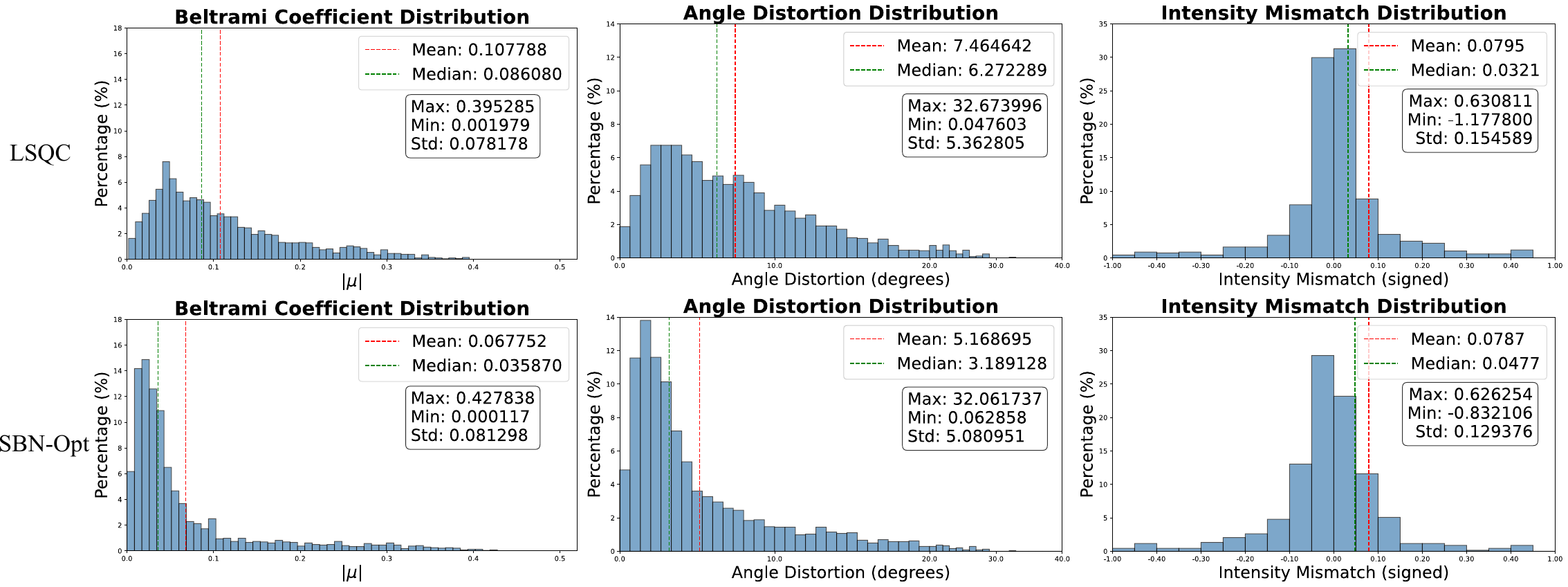}
    \caption{ Quantitative comparison for the partial human face registration shown in Figure \ref{fig:part1_part2_3 mesh_highlighted}. The meaning of each histogram corresponds to that in Figure \ref{fig:MH_AMNH_bc_angle_intensity_comparison}. }
    \label{fig:source_025_target_120_bc_angle_intensity_comparison}
\end{figure}
\begin{figure}
    \centering
    \includegraphics[width=\linewidth]{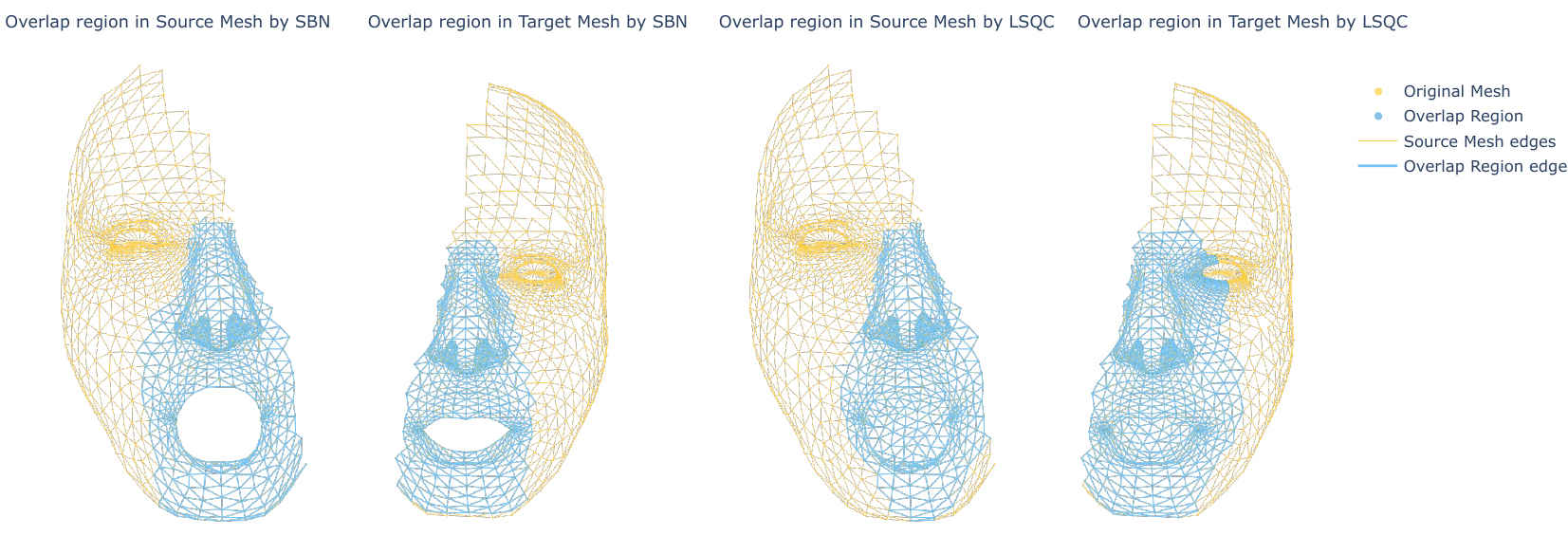}
    \caption{ Visualization of the computed correspondence for the partial human face registration with mouths open. The identified region of correspondence by our method is highlighted in blue on both the source surface (left) and the target surface (center). The complete surfaces are shown in yellow. Compared with the overlap identified by LSQC on the target mesh (right), computed overlap by our method correctly avoids mapping the eye region from the source, which is not present in the target.}
    \label{fig:source_025_target_120_overlap_region_in_two_mesh}
\end{figure}
\section{Conclusions}
\label{sec: conclusions}
In this work, we introduced SBN-Opt, a neural framework for free-boundary diffeomorphism optimization built upon the Spectral Beltrami Network (SBN). By rigorously establishing the mathematical properties of the Least-squares Quasiconformal (LSQC) energy including existence and uniqueness of minimizers, similarity-invariance, and resolution independence, we laid a solid theoretical foundation for using LSQC in free-boundary settings. Additionally, we analyzed its stability and differentiability, which enables gradient-based optimization via LSQC. Due to the heavy computation cost, we propose the Spectral Beltrami Network. SBN is a neural surrogate that, for the first time, faithfully approximates the free-boundary LSQC solver. Its unique architecture, which synergistically combines multiscale message-passing with mesh spectral layers, overcomes the limitations of prior methods by effectively capturing both local and global geometric dependencies.

A distinctive advantage of our approach is that the two‐pinned‐point condition is treated as part of the input to SBN itself; consequently, both the Beltrami coefficients and the pinned points are optimization variables and the corresponding constraints remain differentiable. This contrasts with many existing quasiconformal geometry based and other diffeomorphic registration methods, which require externally prescribed landmark constraints in order to obtain nontrivial solutions, and thus cannot directly handle general free-boundary problems in a gradient-based pipeline.

Leveraging this design, SBN-Opt reframes the free-boundary diffeomorphism problem as an optimization over the space of admissible Beltrami coefficients and pinned-point conditions, enabling explicit control of local geometric distortion and boundary shapes without artificial scaffolds or fixed landmarks.

Extensive experiments on equiareal parameterization and inconsistent surface registration demonstrate the clear superiority of our approach. SBN-Opt consistently produces mappings with significantly lower distortion and higher accuracy compared to established numerical algorithms. By bridging the gap between the mathematical rigor of quasiconformal geometry and the computational flexibility of deep learning, SBN-Opt provides a powerful and theoretically-grounded tool. This work not only addresses long-standing challenges in surface mapping but also paves the way for future advances in geometry processing, medical imaging, and scientific visualization.

\bibliographystyle{spmpsci}      
\bibliography{references}

\end{document}